\documentclass{article}

\PassOptionsToPackage{square,numbers,sort&compress}{natbib}
\usepackage[main,nonatbib,final]{neurips_2026}

\usepackage[utf8]{inputenc} % allow utf-8 input
\usepackage[T1]{fontenc}    % use 8-bit T1 fonts
\usepackage{hyperref}       % hyperlinks
\usepackage{url}            % simple URL typesetting
\usepackage{booktabs}       % professional-quality tables
\usepackage{amsfonts}       % blackboard math symbols
\usepackage{nicefrac}       % compact symbols for 1/2, etc.
\usepackage{microtype}      % microtypography
\usepackage{xcolor}         % colors

\usepackage{graphicx}
\usepackage{amssymb}
\usepackage{float}
\usepackage{amsmath}
\usepackage{multirow}
\usepackage{placeins}
\hypersetup{
    colorlinks,
    linkcolor={blue!64!black},
    citecolor={blue!64!black},
    urlcolor={blue!64!black}
}
\usepackage{natbib}
\usepackage{amsthm}
\newtheorem{proposition}{Proposition}
\usepackage{booktabs}
\usepackage{marvosym}

\title{A latent control model for realistic rodent motion}

\author{%
  Aidan Sirbu$^{1,2,3,\text{\Letter}}$ \quad
  Charles Y. Zhang$^{4}$ \quad
  Yuanjia Yang$^{6,7}$ \\
  \textbf{Bence P. \"Olveczky$^{5}$ \quad
  Talmo D. Pereira$^{6}$ \quad
  Blake A. Richards$^{1,2,3,8,9,10}$} \\[6pt]
  \small
  \begin{tabular}{c}
    $^{1}$Google, Paradigms of Intelligence Team \enspace
    $^{2}$Mila -- Quebec AI Institute \\
    $^{3}$School of Computer Science, McGill University \\
    $^{4}$Harvard University \enspace
    $^{5}$Department of Organismal and Evolutionary Biology, Harvard University \\
    $^{6}$Salk Institute for Biological Studies \enspace
    $^{7}$University of California San Diego \\
    $^{8}$Department of Neurology and Neurosurgery, McGill University \\
    $^{9}$Montreal Neurological Institute, McGill University \\
    $^{10}$Learning in Machines and Brains Program, CIFAR \\[4pt]
    $^{\text{\Letter}}$\href{mailto:aidan.sirbu@mail.mcgill.ca}{\texttt{aidan.sirbu@mail.mcgill.ca}} \\
    \href{https://aidansirbu.com/SCAMPER-site/}{\texttt{https://aidansirbu.com/SCAMPER-site/}}
  \end{tabular}
}

\begin{document}

\maketitle

\begin{abstract}
The development of embodied models of animal behavior is a major goal of the neuro-AI community. Embodied models will enhance our ability to make direct comparisons between biological and artificial agents. To make such comparisons effective, though, we need embodied models that actually move like animals. Previous work has developed virtual animal bodies and techniques for training realistic motion through imitation. However, to date, there is no system for embodied animal modeling that produces natural movements both during trained goal-directed behavior, and during exploration driven by random generation. Here, we introduce \textbf{S}imulated \textbf{C}ontrol via \textbf{A}ugmented \textbf{M}otor \textbf{P}riors for \textbf{E}mbodied \textbf{R}einforcement learning (SCAMPER), a system for training an embodied virtual rat model, enabling natural movements during both goal-directed behavior and random exploration. SCAMPER uses a variational architecture with two distinct priors, a motion prior for control and a behavior prior for exploration. We show that combining both of these priors allows SCAMPER to generate realistic behavior during exploration and exploit the task structure effectively. SCAMPER can also learn from sparse rewards as a result of the diverse movements induced by the behavior prior, allowing the system to model realistic sparse-reward tasks from experimental neuroscience. Altogether, SCAMPER provides the neuro-AI community with an effective new system for training embodied models of rodents on realistic tasks.
\end{abstract}

%%%%%%%%%%%%%%%%%%%%%%%%%%%%%%%%%%%%%%%%%%%%%%%%%%%%%%%%%%%%%%%%%%%%%%%%%%%%%%%%%%%%%%%
%%%%%%%%%%%%%%%%%%%%%%%%%%%%%%%%%%%%%%%%%%%%%%%%%%%%%%%%%%%%%%%%%%%%%%%%%%%%%%%%%%%%%%%
\section{Introduction}

One of the ultimate goals of neuro-AI research is to be able to fully capture the richness of animal learning and behavior with embodied computational models \cite{segado_grounding_2025, zador_catalyzing_2023}. Previous work has developed realistic virtual bodies for a variety of model organisms commonly used in neuroscience experiments \cite{wang-chen_embodied_2026}. However, controlling a realistic virtual body to consistently produce natural motion is non-trivial. For example, when an artificial neural network was trained to control a virtual rat on a variety of tasks using deep reinforcement learning (RL), the model could solve the tasks, but often did so with unnatural movements that did not resemble the motion of real rats \cite{merel_deep_2019}. While researchers have since used imitation learning to improve the realism of virtual animal motion, these models are either limited to pure imitation \cite{aldarondo_virtual_2024}, or struggle to transfer to downstream RL tasks \cite{zhang_mimic-mjx_2025}.

Within robotics, physics-based controllers trained with RL to produce realistic humanoid motion can generalize to new tasks \cite{peng_deepmimic_2018, merel_neural_2019, peng_amp_2021, peng_ase_2022,hasenclever_comic_2020, tessler_calm_2023}. However, most approaches have no mechanism for random generation of realistic motion. One exception is the recent Physics-based Universal motion Latent SpacE (PULSE) framework that uses residual control of a variational motion prior \cite{luo_universal_2024}. PULSE can sample from its motion prior to produce random, yet realistic movements. However, this sampling exhibits limited diversity: depending on initialization, the system typically settles into a single behavioral mode and stays there. This is especially problematic when PULSE is used as a low-level controller for sparse-reward tasks, which demand substantial structured exploration \cite{pathak_curiosity-driven_2017, ecoffet_first_2021}. Because most experimental tasks in neuroscience offer only sparse extrinsic reward \cite{carandini_probing_2013, the_international_brain_laboratory_standardized_2021, musall_single-trial_2019}, training embodied animal models to solve them while maintaining naturalistic motion remains an open problem.

Here, we present \textbf{S}imulated \textbf{C}ontrol via \textbf{A}ugmented \textbf{M}otor \textbf{P}riors for \textbf{E}mbodied \textbf{R}einforcement learning (SCAMPER), a system for training an embodied virtual rat model, enabling natural movements during both goal-directed behavior and random exploration. SCAMPER builds on the PULSE framework \cite{luo_universal_2024}, but adds an additional behavior prior that greatly increases the diversity of randomly generated movements while preserving the realism of the motion. We provide a formal analysis demonstrating why the behavior prior increases the diversity of generated movements. We then show empirically that SCAMPER produces a diverse array of realistic behavior during random rollouts, covering the behavioral repertoire present in the training data and beyond. We also demonstrate that SCAMPER transfers effectively to a range of downstream tasks, from dense-reward locomotion and rearing to sparse-reward foraging tasks, all while maintaining naturalistic motion throughout training. Altogether, SCAMPER provides the neuro-AI community with a powerful new tool for embodied modeling of animal behavior.

%%%%%%%%%%%%%%%%%%%%%%%%%%%%%%%%%%%%%%%%%%%%
% Figure 1: Overview of SCAMPER 
%%%%%%%%%%%%%%%%%%%%%%%%%%%%%%%%%%%%%%%%%%%%
\begin{figure}[htbp!]
  \centering
  \includegraphics[width=0.9\textwidth]{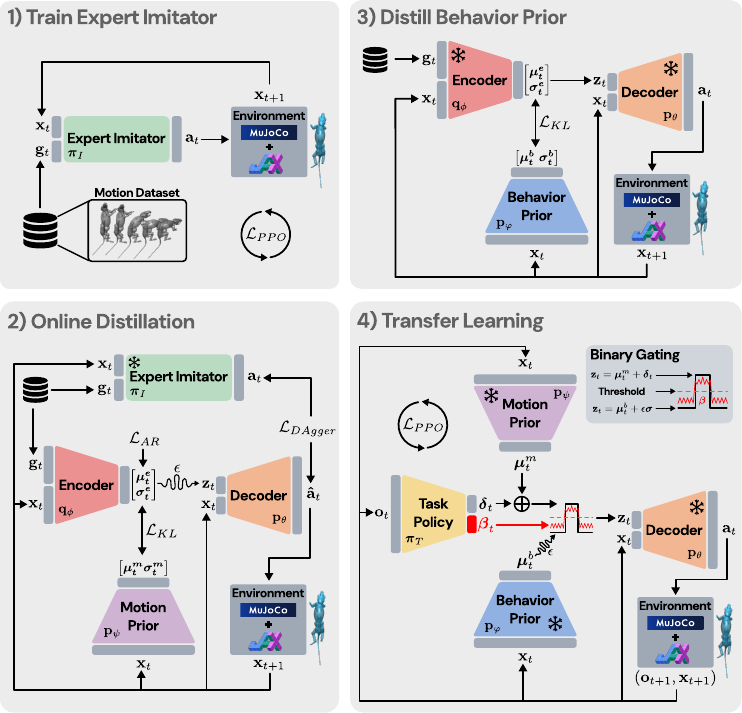}
  \caption{\textbf{Training stages of SCAMPER.} (1) The expert imitator is trained with mocap data. (2) The latent controller is trained via online distillation. (3) The behavior prior is distilled from the frozen encoder. Finally, (4) the model can be trained with RL to perform different tasks.}
  \label{fig:overview}
\end{figure}
%%%%%%%%%%%%%%%%%%%%%%%%%%%%%%%%%%%%%%%%%%%%

%%%%%%%%%%%%%%%%%%%%%%%%%%%%%%%%%%%%%%%%%%%%%%%%%%%%%%%%%%%%%%%%%%%%%%%%%%%%%%%%%%%%%%%
%%%%%%%%%%%%%%%%%%%%%%%%%%%%%%%%%%%%%%%%%%%%%%%%%%%%%%%%%%%%%%%%%%%%%%%%%%%%%%%%%%%%%%%
\section{Related work}

\textbf{Motor priors and latent skill spaces for physics-based control.} Training physics-based controllers from scratch for different tasks often yields unnatural motion \cite{heess_emergence_2017}, motivating imitation-based controllers that track reference motion data \cite{peng_deepmimic_2018}. \citet{merel_neural_2019} extended this approach by distilling a tracking expert into a latent skill space through an information bottleneck, and subsequent work explored adversarial alternatives \cite{peng_amp_2021, peng_ase_2022, tessler_calm_2023} and variational formulations \citep{yao_controlvae_2022, won_physics-based_2022, yao_moconvq_2023}. PULSE \cite{luo_universal_2024} distills a universal motion imitator \cite{luo_perpetual_2023} into a variational model regularized with a learnable motion prior, achieving broad coverage of human motor skills. These frameworks target humanoid characters trained on human motion capture (mocap) data; their applicability to non-human morphologies driven by species-specific behavioral data has not yet been demonstrated. Moreover, the learned motion prior in PULSE supports local motion continuation, but random rollouts tend to remain within the behavioral mode in which they are initialized, producing locally plausible but behaviorally narrow motion. Kinematic motion models, including autoregressive CVAE-based priors \cite{rempe_humor_2021} and diffusion-based generators \cite{tevet_human_2022}, can produce diverse long-horizon motion but do not yield a physics-based controller. Furthermore, autoregressive policies have been applied to physics-based control \cite{wagener_mocapact_2023,radosavovic_humanoid_2024}, but they provide no mechanism for sampling diverse motion that respects a species-specific behavioral repertoire. Here, we extend the PULSE framework to a biomechanically realistic rodent trained on 3D kinematic recordings of freely behaving rats, and complement the motion prior with a behavior prior that induces realistic transitions across behavioral modes.

\textbf{Physics-based control of virtual animals.} Biomechanically realistic virtual animals have emerged as a tool for neuro-AI. \citet{merel_deep_2019} introduced a virtual rodent trained with RL, but without imitation pretraining the resulting motion was unnatural. \citet{aldarondo_virtual_2024} extended this work by training a controller for the virtual rat to imitate freely moving rats and found that its internal representations matched those in the motor cortex and dorsolateral striatum of the real rats; the controller was, however, purely imitative and could not generate diverse behavior. MIMIC-MJX \cite{zhang_mimic-mjx_2025} generalizes this tracking-imitation approach across species via an open-source JAX/MJX pipeline, but supports only limited transfer learning. Parallel efforts in \textit{Drosophila} \cite{vaxenburg_whole-body_2025, wang-chen_neuromechfly_2024} and \textit{C. elegans} \cite{zhao_integrative_2024} further extend this paradigm beyond rodents. In zebrafish, \citet{keller_autonomous_2025} drive naturalistic behavioral transitions via model-based intrinsic motivation against an ethological prior. Across this line of work, no existing model provides the ability to both mimic real motion and support sampling appropriate for exploration. Our work fills this gap, yielding a controller that can be both reused across tasks and sampled for diverse, naturalistic behavior.

%%%%%%%%%%%%%%%%%%%%%%%%%%%%%%%%%%%%%%%%%%%%%%%%%%%%%%%%%%%%%%%%%%%%%%%%%%%%%%%%%%%%%%%
%%%%%%%%%%%%%%%%%%%%%%%%%%%%%%%%%%%%%%%%%%%%%%%%%%%%%%%%%%%%%%%%%%%%%%%%%%%%%%%%%%%%%%%
\section{Overview of SCAMPER}

\subsection{Preliminaries}
\label{sec:preliminaries}
We define the proprioception of the agent at a given timestep as 
\begin{equation}
    \mathbf{x}_t = (\boldsymbol{\theta}_t, \boldsymbol{\dot{\theta}}_t, \boldsymbol{\tau}_t, h_t, \boldsymbol{\phi}_t, \mathbf{p}^{ee}_t, \mathbf{s}_t^{kin}, \mathbf{c}_t, \mathbf{a}_{t-1}),
\end{equation}
containing the joint angles ($\boldsymbol{\theta}_t$), angular velocities ($\boldsymbol{\dot{\theta}}_t$), joint torques ($\boldsymbol{\tau}_t$), body height ($h_t$), orientation relative to the world z-axis ($\boldsymbol{\phi}_t$, akin to a vestibular signal), egocentric end-effector positions ($\mathbf{p}^{ee}_t$), kinematic sensors (accelerometer, velocimeter, gyro; $\mathbf{s}_t^{kin}$), end-effector touch sensors ($\mathbf{c}_t$), and the previous action ($\mathbf{a}_{t-1}$). During imitation tasks, the goal of the agent at a given timestep is defined as
\begin{equation}
    \mathbf{g}_t = (\mathbf{r}_{t+1}-\mathbf{r}_t, \mathbf{q}_{t+1}\ominus\mathbf{q}_t, \boldsymbol{\theta}_{t+1}-\boldsymbol{\theta}_t, \mathbf{p}^{seg}_{t+1}-\mathbf{p}^{seg}_{t}),
\end{equation}
and specifies the displacement the agent must achieve, computed as differences between its current state and the target state at the next timestep in the root position ($\mathbf{r}_t$), root orientation ($\mathbf{q}_t$; where $\ominus$ is egocentric quaternion difference), joint angles ($\boldsymbol{\theta}_t$), and egocentric Cartesian body segment positions ($\mathbf{p}^{seg}_t$). Target states are provided by a mocap dataset registered to the biomechanical rat model via inverse kinematics using STAC-MJX \cite{zhang_mimic-mjx_2025}. During imitation the policy network is provided the next five goal states ($g_{t:t+4}$). During transfer learning, the agent no longer receives a goal state, $\mathbf{g}_t$, and instead observes task-specific environmental observations $\mathbf{o}_t$ (e.g., scent gradient). In this work we use the terms `rat', `rodent', and `agent' interchangeably.

We frame each task (aside from the distillation objectives) as goal-conditioned RL, formulated as a Markov Decision Process $\mathcal{M}=\left< \mathcal{S}, \mathcal{A}, \mathcal{T}, \mathcal{R}, \gamma \right>$. The states $s_t\in\mathcal{S}$ contain $\mathbf{x}_t$ and either $\mathbf{g}_t$ or $\mathbf{o}_t$; all values are normalized by their running mean and variance. Actions $\mathbf{a}_t\in\mathcal{A}$ are torques applied to the rodent's actuated joints. $\mathcal{T}$ and $\mathcal{R}$ are the transition dynamics (determined by the physics environment) and the reward function, respectively. We train policies with proximal policy optimization (PPO) \cite{schulman_proximal_2017} to maximize the discounted return $\mathbb{E}\left[\sum_{t=1}^T\gamma^{t-1}r_t\right]$.

%%%%%%%%%%%%%%%%%%%%%%%%%%%%%%%%%%%%%%%%%%%%%%%%%%%%%%%%%%%%%%%%%%%%%%%%%%%%%%%%%%%%%%%
\subsection{Training pipeline}
SCAMPER uses a four-stage training pipeline (Fig.~\ref{fig:overview}). All networks are multi-layer perceptrons (MLPs), but in principle other architectures could be used.

\textbf{(1) Expert imitator.} We train an expert imitator $\pi_I(\mathbf{a}_t|\mathbf{x}_t,\mathbf{g}_t)$ with PPO to reproduce the mocap trajectories (SuppVid1; App.~\ref{app:supvids}).

\textbf{(2) Online distillation.} Following the PULSE framework \cite{luo_universal_2024}, we distill the expert imitator into a latent skill controller. Similar to PULSE, we find that training the latent skill controller using supervised distillation of a pre-trained expert leads to better training stability than training the latent skill controller from scratch using RL, hence we adopt the same approach. The controller consists of three jointly trained networks: an encoder $q_\phi(\mathbf{z}_t|\mathbf{x}_t,\mathbf{g}_t)$, a decoder $p_\theta(\mathbf{a}_t|\mathbf{x}_t,\mathbf{z}_t)$, and a proprioception-conditioned motion prior $p_\psi(\mathbf{z}_t|\mathbf{x}_t)$ over latent codes. Training proceeds by rolling out the student controller on the imitation task in simulation, while querying the frozen expert, $\pi_I$, for target actions on visited states, following DAgger \cite{ross_reduction_2011}. The networks are jointly trained by minimizing:
\begin{equation}
\label{eq:objective}
    \mathcal{L}_{\mathrm{distill}} = \mathcal{L}_{\mathrm{DAgger}} + \lambda\mathcal{L}_{\mathrm{KL}} + \alpha\mathcal{L}_{\mathrm{AR}},
\end{equation}
where $\mathcal{L}_{\mathrm{DAgger}}=||\mathbf{a}_t-\mathbf{\hat{a}}_t||_2^2$ matches the expert's target action $\mathbf{a}_t$ to the decoder's prediction $\mathbf{\hat{a}}_t$, $\mathcal{L}_{\mathrm{KL}}=D_{\mathrm{KL}}(q_\phi||p_\psi)$ regularizes the encoder toward the motion prior, and $\mathcal{L}_{\mathrm{AR}}=||\boldsymbol{\mu}_t^e-\boldsymbol{\mu}_{t-1}^e||_2^2$ encourages temporal smoothness in the latent space. The decoder and motion prior are retained for downstream use; the encoder is kept only for the next training stage. $\lambda$ and $\alpha$ are hyperparameters (Table~\ref{tab:distill_hparams}).

\textbf{(3) Behavior prior distillation.} After online distillation, we train a second proprioception-conditioned prior $p_\varphi(\mathbf{z}_t|\mathbf{x}_t)$ with all other networks frozen. Rolling out the latent skill controller on the imitation task in simulation, we optimize $p_\varphi$ by minimizing
\begin{equation}
    \mathcal{L}_{\mathrm{behavior}} = D_{\mathrm{KL}}(q_\phi(\mathbf{z}_t | \mathbf{x}_t, \mathbf{g}_t) \,\|\, p_\varphi(\mathbf{z}_t | \mathbf{x}_t)).
\end{equation}
By Proposition~\ref{prop:behavior_prior}, this ensures $p_\varphi$ captures the full goal-conditional variation in the encoder's outputs.

\textbf{(4) Transfer learning.} For downstream tasks, a task policy $\pi_T(\boldsymbol{\delta}_t,\beta_t|\mathbf{x}_t,\mathbf{o}_t)$ is trained with PPO to produce a residual correction $\boldsymbol{\delta}_t$ to the motion prior's mean $\boldsymbol{\mu}_t^m$. Importantly, we adjudicate between the two priors (motion and behavior) using a continuous gating variable $\beta_t\in[-1,1]$ that is binarized to $b_t\in\{0,1\}$ via hysteresis thresholding. Thus, at each timestep, $b_t$ selects between two latent codes:
\begin{equation}
    \mathbf{z}_t = \begin{cases}
        \boldsymbol{\mu}_t^m + \boldsymbol{\delta}_t & \text{if } b_t = 1, \\
        \boldsymbol{\mu}_t^b + \boldsymbol{\epsilon}\boldsymbol{\sigma}, \quad \boldsymbol{\epsilon} \sim \boldsymbol{\mathcal{N}(0, 1)} & \text{if } b_t = 0,
    \end{cases}
\end{equation}
where $\boldsymbol{\mu}_t^m$ and $\boldsymbol{\mu}_t^b$ are the means of $p_\psi(\mathbf{z}_t|\mathbf{x}_t)$ and $p_\varphi(\mathbf{z}_t|\mathbf{x}_t)$, respectively, and $\boldsymbol{\sigma}$ is a fixed noise scale. The resulting latent code is passed through the frozen decoder $p_\theta$ to produce the action. Intuitively, $b_t=1$ engages goal-directed residual control of the motion prior (exploitation), while $b_t=0$ samples naturalistic motion from the behavior prior (exploration). Because $\pi_T$ outputs $\beta_t$, the system learns when to explore and when to exploit.

%%%%%%%%%%%%%%%%%%%%%%%%%%%%%%%%%%%%%%%%%%%%%%%%%%%%%%%%%%%%%%%%%%%%%%%%%%%%%%%%%%%%%%%
\subsection{Formal analysis of the priors}
\label{sec:priors_analysis}
Why does SCAMPER use both a motion prior, $p_\psi$, for goal-directed motion and a behavior prior, $p_\varphi$, for exploration? Although both $p_\psi$ and $p_\varphi$ are proprioception-conditioned distributions over the same latent space, their training objectives differ. We formalize this distinction in two propositions.
\begin{proposition}[Information bottleneck under joint training]\label{prop:motion_prior}
When the encoder $q_\phi(\mathbf{z}|\mathbf{g},\mathbf{x})$ and the state-dependent prior $p_\psi(\mathbf{z}|\mathbf{x})$ are trained jointly, minimizing their expected KL divergence is equivalent to the following optimization:
\begin{equation}
    \min_{\phi, \psi} \mathbb{E}_{p(\mathbf{x},\mathbf{g})}\left[D_{\mathrm{KL}}(q_\phi \parallel p_\psi)\right] = \min_{\phi, \psi} \left[ I_\phi(Z; G \mid X) + \mathbb{E}_{p(\mathbf{x})} \left[ D_{\mathrm{KL}}(m_\phi \parallel p_\psi) \right] \right],
\end{equation}
where $m_\phi(\mathbf{z}|\mathbf{x})=\mathbb{E}_{p(\mathbf{g}|\mathbf{x})}[q_\phi(\mathbf{z}|\mathbf{g},\mathbf{x})]$ is the marginal posterior induced by the encoder, which evolved during joint training.
\end{proposition}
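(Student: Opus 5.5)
The plan is to prove a pointwise identity: for every fixed pair $(\phi,\psi)$, the expected KL divergence equals the bracketed expression on the right-hand side. Equality of the two minimizations then follows immediately, because the objectives coincide as functions of $(\phi,\psi)$. The tool is the standard "add and subtract the marginal" decomposition, the same one used to split the ELBO's rate term into mutual information plus a marginal KL.

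Fix $\mathbf{x}$, and write $q_\phi=q_\phi(\mathbf{z}|\mathbf{g},\mathbf{x})$ and $p_\psi=p_\psi(\mathbf{z}|\mathbf{x})$. The first step is to multiply and divide inside the log by $m_\phi(\mathbf{z}|\mathbf{x})$:
\begin{equation*}
D_{\mathrm{KL}}(q_\phi\parallel p_\psi)=\mathbb{E}_{q_\phi}\!\left[\log\frac{q_\phi(\mathbf{z}|\mathbf{g},\mathbf{x})}{m_\phi(\mathbf{z}|\mathbf{x})}\right]+\mathbb{E}_{q_\phi}\!\left[\log\frac{m_\phi(\mathbf{z}|\mathbf{x})}{p_\psi(\mathbf{z}|\mathbf{x})}\right].
\end{equation*}
The second step is to average both terms over $p(\mathbf{g}|\mathbf{x})$.

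The first term then becomes $\mathbb{E}_{p(\mathbf{g}|\mathbf{x})}D_{\mathrm{KL}}(q_\phi(\cdot|\mathbf{g},\mathbf{x})\parallel m_\phi(\cdot|\mathbf{x}))$. Under the joint distribution $p(\mathbf{x},\mathbf{g})q_\phi(\mathbf{z}|\mathbf{g},\mathbf{x})$, the conditional of $Z$ given $X=\mathbf{x}$ is exactly $m_\phi$. So this quantity is the conditional mutual information $I_\phi(Z;G\mid X=\mathbf{x})$, and averaging over $p(\mathbf{x})$ gives $I_\phi(Z;G\mid X)$.

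In the second term the integrand does not depend on $\mathbf{g}$. By Fubini, $\mathbb{E}_{p(\mathbf{g}|\mathbf{x})}\mathbb{E}_{q_\phi}[f(\mathbf{z},\mathbf{x})]=\mathbb{E}_{m_\phi}[f(\mathbf{z},\mathbf{x})]$, and this evaluates to $D_{\mathrm{KL}}(m_\phi\parallel p_\psi)$. Averaging over $p(\mathbf{x})$ then yields the stated identity, and taking $\min_{\phi,\psi}$ of both sides finishes the argument.

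The main obstacle is rigor rather than algebra, and it has two parts.

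First, the decomposition requires all terms to be finite and the supports to be compatible. We need $q_\phi\ll m_\phi$, which holds automatically because $m_\phi$ is a mixture of the $q_\phi$'s. We also need $m_\phi\ll p_\psi$ wherever the left side is finite. For the Gaussian encoders and priors used here these conditions hold. I would state the mild assumption that $\mathbb{E}[D_{\mathrm{KL}}(q_\phi\parallel p_\psi)]<\infty$, which justifies splitting the expectation of a sum into a sum of expectations, since both pieces are nonnegative.

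Second, one has to interpret $p(\mathbf{x},\mathbf{g})$. In the DAgger rollouts the state distribution depends on the current policy, so I would treat it as the fixed data distribution under which the expectation is taken. The identity is pointwise in $(\phi,\psi)$ for any such distribution, and the phrase ``evolved during joint training'' just reflects that $m_\phi$ moves with $\phi$.

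Finally, I would remark on the interpretation. Since $\psi$ appears only in the second term, the optimal prior is $p_\psi=m_\phi$, which drives that term to zero. The joint objective therefore also pushes $\phi$ to reduce $I_\phi(Z;G\mid X)$, which is the bottleneck.
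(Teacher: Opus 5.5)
Your proposal is correct and follows the paper's proof step for step. Like the paper, you multiply and divide by $m_\phi$ inside the logarithm, identify the first term as $I_\phi(Z;G\mid X)$, and use Fubini to collapse the second term to $\mathbb{E}_{p(\mathbf{x})}[D_{\mathrm{KL}}(m_\phi\parallel p_\psi)]$, since $\log(m_\phi/p_\psi)$ does not depend on $\mathbf{g}$, all under the same finite-expected-KL integrability assumption as the paper's footnote; your framing of the result as an identity holding pointwise in $(\phi,\psi)$, so that equality of the minima is immediate, is a slightly cleaner presentation than carrying the $\min$ through every line.
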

Joint training of the encoder and the motion prior thus not only forces the motion prior to approximate the marginal posterior, but also minimizes the encoder's conditional mutual information between the latent states and the goals. However, from equation (\ref{eq:objective}) both of these terms which the KL loss decomposes into are weighted by the same hyperparameter ($\lambda$). As a result, in practice, it is difficult to obtain a favorable balance between regularization via the bottleneck and the distortion of movement imitation. We find that setting $\lambda$ in order to obtain the right balance limits how closely the motion prior can approximate the marginal posterior, yielding a motion prior concentrated on local motion continuation. We also find that stochastic sampling of the motion prior with high noise can occasionally push the system to switch behavioral modes, but as we show below (Fig.~\ref{fig:behavior-diversity}), the motion prior alone does not reproduce the rich pattern of behavioral transitions observed in the training data. Despite all this, a motion prior focused on local motion continuation yields a network favorable for residual control. The marginal posterior, by contrast, contains the more fine-grained structure of the behavioral transitions. Therefore, in order to have the practical ability to generate diverse behavior during exploration we need another prior that \textit{only approximates the marginal posterior}, allowing us to decouple the training dynamics from minimizing the conditional mutual information. We can achieve this by training a new prior, the behavior prior, with the frozen encoder.
\begin{proposition}[Marginal posterior approximation under frozen encoder]\label{prop:behavior_prior}
When the encoder $q_\phi(\mathbf{z}|\mathbf{g},\mathbf{x})$ is frozen, minimizing the expected KL divergence between the encoder and a state-dependent prior $p_\varphi(\mathbf{z}|\mathbf{x})$ is equivalent to
\begin{equation}
    \min_\varphi \mathbb{E}_{p(\mathbf{x},\mathbf{g})}\left[D_{\mathrm{KL}}(q_\phi \parallel p_\varphi)\right] = I_\phi(Z; G \mid X)+ \min_\varphi \mathbb{E}_{p(\mathbf{x})}\left[D_{\mathrm{KL}}(m_\phi \parallel p_\varphi)\right],
\end{equation}
where $m_\phi(\mathbf{z}|\mathbf{x}) = \mathbb{E}_{p(\mathbf{g}|\mathbf{x})}[q_\phi(\mathbf{z}|\mathbf{g},\mathbf{x})]$ is the marginal posterior induced by the frozen encoder, held fixed during optimization. The conditional mutual information, $I_\phi(Z; G \mid X)$, does not depend on the optimization variable $\varphi$; it is therefore an additive constant that sits outside the minimization. Hence, the optimal behavior prior is the closest possible state-conditional approximation to the marginal posterior $m_\phi$.
\end{proposition}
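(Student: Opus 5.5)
The plan is to prove a pointwise identity first and then optimize. Fix the frozen encoder parameters $\phi$, which also fixes $m_\phi(\mathbf{z}|\mathbf{x})=\mathbb{E}_{p(\mathbf{g}|\mathbf{x})}[q_\phi(\mathbf{z}|\mathbf{g},\mathbf{x})]$. The key step is the standard ``add and subtract the marginal'' decomposition. Inside the expected KL, multiply and divide by $m_\phi$:
\begin{equation*}
\log\frac{q_\phi(\mathbf{z}|\mathbf{g},\mathbf{x})}{p_\varphi(\mathbf{z}|\mathbf{x})}=\log\frac{q_\phi(\mathbf{z}|\mathbf{g},\mathbf{x})}{m_\phi(\mathbf{z}|\mathbf{x})}+\log\frac{m_\phi(\mathbf{z}|\mathbf{x})}{p_\varphi(\mathbf{z}|\mathbf{x})}.
\end{equation*}
I then take the expectation over $p(\mathbf{x})\,p(\mathbf{g}|\mathbf{x})\,q_\phi(\mathbf{z}|\mathbf{g},\mathbf{x})$.

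The first term is, by definition, the conditional mutual information of the joint distribution $p(\mathbf{x},\mathbf{g})q_\phi(\mathbf{z}|\mathbf{g},\mathbf{x})$:
\begin{equation*}
I_\phi(Z;G\mid X)=\mathbb{E}_{p(\mathbf{x},\mathbf{g})}\left[D_{\mathrm{KL}}(q_\phi(\cdot|\mathbf{g},\mathbf{x})\parallel m_\phi(\cdot|\mathbf{x}))\right].
\end{equation*}
This uses the fact that $m_\phi(\cdot|\mathbf{x})$ is exactly the conditional law of $Z$ given $X=\mathbf{x}$ under that joint.

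The second term has an integrand that does not depend on $\mathbf{g}$. The inner expectation over $\mathbf{g}$ therefore just marginalizes $q_\phi$ to $m_\phi$, which is the tower property (Fubini). This turns the term into $\mathbb{E}_{p(\mathbf{x})}[D_{\mathrm{KL}}(m_\phi\parallel p_\varphi)]$. Summing the two terms gives the identity
\begin{equation*}
\mathbb{E}_{p(\mathbf{x},\mathbf{g})}[D_{\mathrm{KL}}(q_\phi\parallel p_\varphi)]=I_\phi(Z;G\mid X)+\mathbb{E}_{p(\mathbf{x})}[D_{\mathrm{KL}}(m_\phi\parallel p_\varphi)]
\end{equation*}
for every $\varphi$. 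This identity is also the same computation underlying Proposition~\ref{prop:motion_prior}, specialized to fixed $\phi$.

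Next I minimize over $\varphi$. Neither $I_\phi$ nor $m_\phi$ depends on $\varphi$, since both are functionals of the frozen encoder and the data distribution only. The mutual-information term is therefore a constant, and it can be pulled outside the $\min_\varphi$. This yields the displayed equality, and it shows that the minimizers of the two objectives coincide. The final sentence of the statement then follows. The optimal $p_\varphi$ minimizes the expected KL to $m_\phi$ within the chosen family, and KL is nonnegative with equality iff $p_\varphi=m_\phi$ a.e. So if the family is expressive enough, the optimum is $m_\phi$ itself. I will note that for a Gaussian family, $m_\phi$ being a mixture means ``closest'' is meant in this KL sense.

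The main obstacle is rigor rather than algebra. Splitting the log is only legitimate when the terms are finite, and exchanging integrals requires that. I would handle this by assuming $q_\phi(\cdot|\mathbf{g},\mathbf{x})\ll p_\varphi(\cdot|\mathbf{x})$, which holds for Gaussian encoders and priors with full support, and that $I_\phi<\infty$. Under those assumptions Tonelli/Fubini applies to the nonnegative KL integrands. I also need $q_\phi\ll m_\phi$, which is automatic from the definition of $m_\phi$ as a mixture of the $q_\phi$.

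A second subtlety is that in practice $p(\mathbf{x},\mathbf{g})$ is the on-policy state distribution induced by rollouts. Because the encoder, decoder and motion prior are all frozen during behavior-prior distillation, this distribution does not depend on $\varphi$, so the argument goes through unchanged. I would state this explicitly.
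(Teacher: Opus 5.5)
Your proof is correct, but for this proposition it uses a different decomposition from the paper's. You split the log-ratio, $\log(q_\phi/p_\varphi)=\log(q_\phi/m_\phi)+\log(m_\phi/p_\varphi)$. This proves the identity $\mathbb{E}_{p(\mathbf{x},\mathbf{g})}[D_{\mathrm{KL}}(q_\phi\parallel p_\varphi)]=I_\phi(Z;G\mid X)+\mathbb{E}_{p(\mathbf{x})}[D_{\mathrm{KL}}(m_\phi\parallel p_\varphi)]$ for every $\varphi$, and only then do you minimize. That is exactly the computation the paper uses for Proposition~1.

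For this proposition the paper proceeds differently:
\begin{itemize}
\item it writes the KL as cross-entropy minus entropy and sets $\mathbb{E}_{p(\mathbf{x},\mathbf{g})}[H(q_\phi)]$ aside as $\varphi$-independent;
\item it uses Fubini to turn $\mathbb{E}_{p(\mathbf{x},\mathbf{g})}[H(q_\phi,p_\varphi)]$ into $\mathbb{E}_{p(\mathbf{x})}[H(m_\phi,p_\varphi)]$;
\item it adds and subtracts $\mathbb{E}_{p(\mathbf{x})}[H(m_\phi)]$ and identifies $\mathbb{E}[H(m_\phi)]-\mathbb{E}[H(q_\phi)]$ with $I_\phi(Z;G\mid X)$.
\end{itemize}

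The paper's route makes explicit that the only $\varphi$-dependent part of the loss is $-\mathbb{E}_{p(\mathbf{x})}\mathbb{E}_{m_\phi}[\log p_\varphi]$. In other words, distillation is maximum-likelihood fitting of $p_\varphi$ to latents drawn from the marginal posterior. However, subtracting entropies tacitly requires $\mathbb{E}[H(q_\phi)]$ and $\mathbb{E}[H(m_\phi)]$ to be finite. That holds for the paper's clamped-variance Gaussian encoder but is not automatic in general.

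Your route works only with relative quantities (KL and mutual information), so it needs no finiteness of differential entropies, and it covers both propositions with a single identity. You also observe that the Stage-3 rollout distribution does not depend on $\varphi$, because the frozen encoder mean drives the frozen decoder. The paper leaves this implicit, yet it is needed for $I_\phi$ to be a genuine constant.

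One correction to your rigor paragraph: Tonelli does not apply directly, because neither $q_\phi\log(q_\phi/m_\phi)$ nor $q_\phi\log(m_\phi/p_\varphi)$ is pointwise nonnegative. What Fubini needs is absolute integrability of $q_\phi\log(m_\phi/p_\varphi)$. This follows from $|\log(m_\phi/p_\varphi)|\le|\log(q_\phi/p_\varphi)|+|\log(q_\phi/m_\phi)|$, together with finiteness of the expected KL and of $I_\phi$. The inequality $q\log(q/r)\ge q-r$ bounds the integral of the negative part of $q\log(q/r)$ by $1$, so a finite KL implies absolute integrability.
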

Therefore, by directly distilling the behavior prior from the frozen pre-trained encoder, we are able to lower the KL divergence between it and the goal-rich encoder to a greater extent than we could for the motion prior which was limited by the rate-distortion tradeoff. Specifically, we achieve a KL divergence of 5.88 nats between the behavior prior and the encoder, vs 14.35 nats for the motion prior and the encoder. This allows the behavior prior to better capture the behavioral transitions in the training data. However, since the behavior prior is more inclined to sample from transitions in behavior in its control signal, it is less stable when residually controlled than the motion prior (Fig.~\ref{fig:supp-gait}, Table~\ref{tab:gait-full}). This motivates the use of the dual priors for a sparse-reward exploration task in Section~\ref{sec:forage}. We provide proofs for these two propositions in App.~\ref{app:proofs} and empirical justification in App.~\ref{app:emp-just}. By training the behavior prior in this way, we achieve a good approximation for the marginal posterior, $m_\phi(\mathbf{z}|\mathbf{x})$, which can then be used for effective exploration with natural movements, as we show in the subsequent experiments.

%%%%%%%%%%%%%%%%%%%%%%%%%%%%%%%%%%%%%%%%%%%%%%%%%%%%%%%%%%%%%%%%%%%%%%%%%%%%%%%%%%%%%%%
%%%%%%%%%%%%%%%%%%%%%%%%%%%%%%%%%%%%%%%%%%%%%%%%%%%%%%%%%%%%%%%%%%%%%%%%%%%%%%%%%%%%%%%
\section{Experiments}

\FloatBarrier
\subsection{The behavior prior expands behavioral diversity beyond the motion prior}
\label{sec:behavior-diversity}

%%%%%%%%%%%%%%%%%%%%%%%%%%%%%%%%%%%%%%%%%%%%
% Figure 2: Random generation diversity
% A: Renderings of rollouts
% B: Vendi score plot
% C: Violin plots of net displacement
% D: Trajectory heatmaps
%%%%%%%%%%%%%%%%%%%%%%%%%%%%%%%%%%%%%%%%%%%%
\begin{figure}[!htbp]
  \centering
  \includegraphics[width=0.9\textwidth]{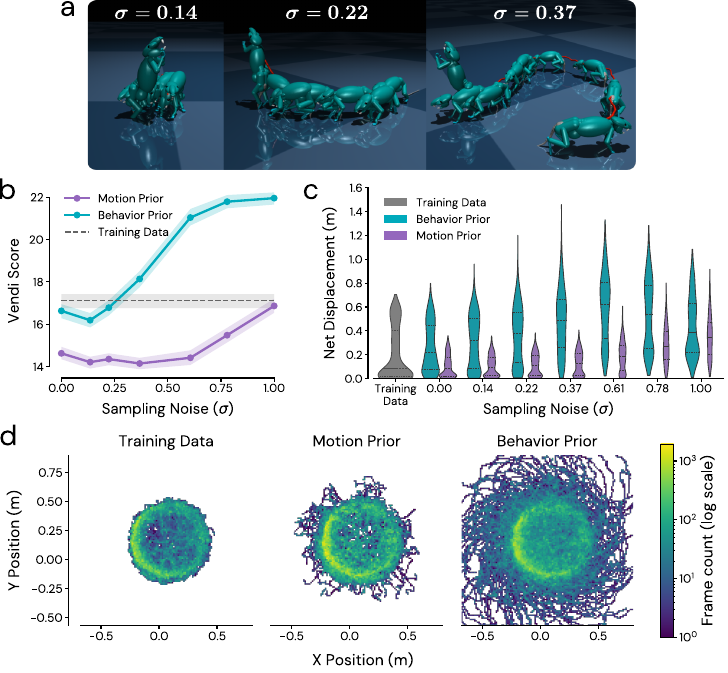}
  \caption{\textbf{SCAMPER is behaviorally diverse.} \textbf{(a)} Renderings of behavior-prior rollouts at increasing sampling noise. \textbf{(b)} Vendi score vs.\ latent sampling noise $\sigma=\exp(\text{logvar}/2)$, for the behavior prior (\textit{blue}), motion prior (\textit{purple}), and the training data (\textit{dashed gray}); data shown is bootstrapped means with 95\% CIs. \textbf{(c)} Per-rollout net displacement across $\sigma$; violins with interquartile lines, gray violin marks training data. \textbf{(d)} Spatial occupancy at $\sigma=0.22$ shown as 2D histograms of body location (log-scaled frame counts) for the training data, motion prior, and behavior prior (see Fig.~\ref{fig:supp-heatmaps} for all noise levels).}
  \label{fig:behavior-diversity}
\end{figure}

We first verify Propositions \ref{prop:motion_prior} and \ref{prop:behavior_prior}: that sampling the behavior prior produces greater behavioral diversity than sampling the motion prior (SuppVid[2-4]; App.~\ref{app:supvids}). We initialized the rodent in a realistic pose, then iteratively sampled a latent from one prior, decoded it to an action, and stepped the simulation for 5\,s of simulated time per rollout (Fig.~\ref{fig:behavior-diversity}a). We collected 842 rollouts per noise level, comparing pure behavior-prior sampling ($b_t=0$) to pure motion-prior sampling ($b_t=1$); see App.~\ref{app:rollout-collection}.

We measured the Vendi score \cite{friedman_vendi_2023}, which captures the effective number of distinct behavioral modes. The Vendi score has been used to quantify trajectory and skill diversity in RL \cite{stanic_beyond_2026, lintunen_vendirl_2025}. Across all noise levels, the Vendi score was significantly higher for the behavior prior than the motion prior (Fig.~\ref{fig:behavior-diversity}b). Motion-prior sampling (similar to PULSE) produced lower Vendi scores than the training data for all $\sigma<1$. At very high noise, motion-prior diversity matched the training data but at the cost of motion realism (Section~\ref{sec:behavior-real}). In contrast, the behavior prior matched the training data diversity at low noise and exceeded it at high noise. This provides a tunable knob for controlling the behavioral diversity of the agent.

Next, to assess whether this diversity aids exploration, we measured the net distance traveled across sampling noise levels. Mirroring the Vendi score results, motion-prior sampling produced shorter trajectories than the training data at almost all noise levels (Fig.~\ref{fig:behavior-diversity}c). Behavior-prior sampling roughly matched the training data distance at low noise and exceeded it at high noise. We also examined the spatial occupancy in the virtual arena. In the training data, the rat was in a circular arena, and thus, the occupancy was restricted to a circular area (Fig.~\ref{fig:behavior-diversity}d, \textit{left}). Motion-prior rollouts at low noise remained largely within this same area despite no physical barrier (Fig.~\ref{fig:behavior-diversity}d, \textit{middle}), demonstrating that the motion prior samples from a restricted distribution of movements seen in the training data. The behavior prior, in contrast, explored beyond this region (Fig.~\ref{fig:behavior-diversity}d, \textit{right}).

%%%%%%%%%%%%%%%%%%%%%%%%%%%%%%%%%%%%%%%%%%%%%%%%%%%%%%%%%%%%%%%%%%%%%%%%%%%%%%%%%%%%%%%
\FloatBarrier
\subsection{The behavior prior preserves motion realism under stochastic sampling}
\label{sec:behavior-real}

%%%%%%%%%%%%%%%%%%%%%%%%%%%%%%%%%%%%%%%%%%%%
% Figure 3: Random generation naturalism 
% A: KID for behavior prior vs motion prior
% B: Root height ECDF
%%%%%%%%%%%%%%%%%%%%%%%%%%%%%%%%%%%%%%%%%%%%
\begin{figure}[!htbp]
  \centering
  \includegraphics[width=\textwidth]{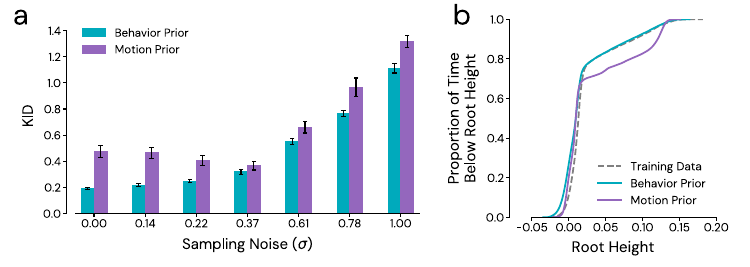}
  \caption{\textbf{SCAMPER preserves motion realism during random behavior.} \textbf{(a)} Kernel Inception Distance (KID) between random rollouts and the training data behavior distributions across sampling noise $\sigma=\exp(\text{logvar}/2)$, for the behavior prior (\textit{blue}) and motion prior (\textit{purple}). Data shown is means $\pm$1 s.d. \textbf{(b)} Empirical cumulative distribution function (ECDF) of root height at $\sigma=0.22$, compared against the training data (\textit{dashed gray}).}
  \label{fig:behavior-realism}
\end{figure}

To verify that the movements induced by the behavior prior during random exploration were natural we used the Kernel Inception Distance (KID), which measures the dissimilarity between real and generated probability distributions \cite{binkowski_demystifying_2021} and has been used to measure the realism of generative human motion \cite{raab_modi_2022}. We trained a variational autoencoder (VAE) on the joint angle trajectories from the training data as the inception model, then computed the KID between the VAE features of the training and generated trajectories. Across all noise levels, the KID was lower for the behavior prior than the motion prior (Fig.~\ref{fig:behavior-realism}a). This confirms that the behavior prior produces more natural movements during random generation than the motion prior.

For a more concrete measure of motion realism, we examined the rearing behavior of the virtual rat by measuring the proportion of time spent at various body heights. The behavior prior more closely matched the training data's rearing distribution, while the motion prior diverged from it (Fig.~\ref{fig:behavior-realism}b). This held across different levels of noise (Fig.~\ref{fig:supp-rearing}), and was quantitatively confirmed using the Wasserstein distance of the rearing curves between the generated data and the training data (Table~\ref{tab:wasserstein-rear}). Together with Section~\ref{sec:behavior-diversity}, these results show that the behavior prior can generate diverse and realistic rat motion during random sampling.

%%%%%%%%%%%%%%%%%%%%%%%%%%%%%%%%%%%%%%%%%%%%%%%%%%%%%%%%%%%%%%%%%%%%%%%%%%%%%%%%%%%%%%%
\FloatBarrier
\subsection{The motion prior helps produce realistic gaits at different speeds during transfer}

%%%%%%%%%%%%%%%%%%%%%%%%%%%%%%%%%%%%%%%%%%%%
% Figure 4: Maintain velocity task
% A: Comparison of actual speed vs target speed
% B: Illustration of the task
% C: Swing stance plots
%%%%%%%%%%%%%%%%%%%%%%%%%%%%%%%%%%%%%%%%%%%%
\begin{figure}[!htbp]
  \centering
  \includegraphics[width=\textwidth]{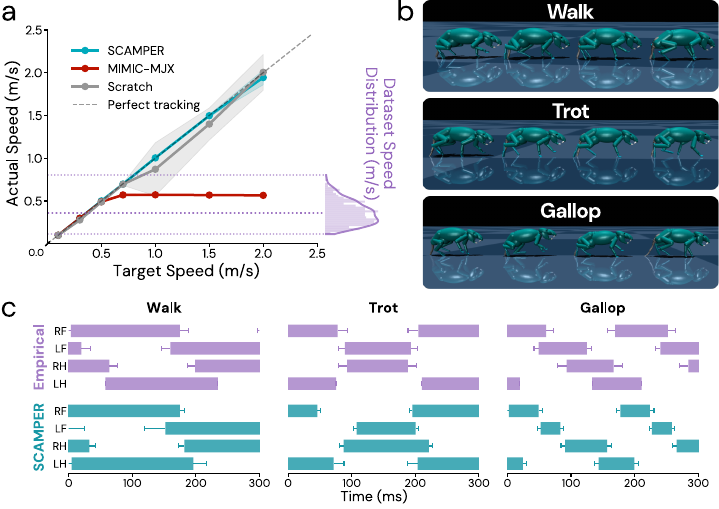}
  \caption{\textbf{SCAMPER produces realistic gaits at varying speeds.} \textbf{(a)} Actual vs.\ target speed for SCAMPER (\textit{blue}), MIMIC-MJX (\textit{red}), and the from-scratch model (\textit{gray}); lines show means, shaded bands show $\pm1$\,s.d. across seeds, dashed line indicates perfect tracking. The training-data speed distribution is shown in purple on the right. \textbf{(b)} Renderings of SCAMPER walking, trotting, and galloping at increasing target speeds. \textbf{(c)} Stance-phase plots for each limb (RF, LF, RH, LH) across walk, trot, and gallop gaits, comparing empirical rat data (\textit{purple}; data from \cite{danner_spinal_2023}) to SCAMPER (\textit{blue}); bars show stance duration, error bars show variability across strides.}
  \label{fig:maintain-vel}
\end{figure}
%%%%%%%%%%%%%%%%%%%%%%%%%%%%%%%%%%%%%%%%%%%%

Having confirmed that the behavior prior enables diverse and realistic random generation, we next tested whether residual control of the motion prior enables effective transfer to RL tasks. We began with a dense-reward locomotion task: at every timestep, the agent received a reward that decreased linearly with the difference between the target and actual speed. We trained separate policy networks for each target speed. SCAMPER closely matched the target speed across a range of realistic rat locomotion speeds (Fig.~\ref{fig:maintain-vel}a, \textit{blue line}) whereas MIMIC-MJX failed to generalize beyond speeds seen in the training data (Fig.~\ref{fig:maintain-vel}a, \textit{red line}). A model trained from scratch (as in \citet{merel_deep_2019}) also matched target speeds but with deeply unnatural motion and higher variance in its performance (Fig.~\ref{fig:maintain-vel}a, \textit{gray line}).

To better understand the resulting movement we examined the gaits at different speeds. Qualitatively, SCAMPER produced distinct gaits resembling walking, trotting, and galloping in real rats (Fig.~\ref{fig:maintain-vel}b). Notably, the training data---collected from freely behaving rats in a circular arena---contained no sustained or high-speed locomotion, and thus no trot or gallop. The emergence of these gaits in SCAMPER suggests it generalizes out-of-distribution while maintaining natural locomotion.

To verify this, we performed a quantitative gait analysis, as is standard in behavioral experiments \cite{hamers_automated_2001, mendes_quantification_2015}, comparing SCAMPER, MIMIC-MJX, and the from-scratch model against real rat data \cite{danner_spinal_2023}. For each gait (walk, trot, gallop), SCAMPER matched real rat profiles both qualitatively (Fig.~\ref{fig:maintain-vel}c) and quantitatively (Table~\ref{tab:gait-summary}). Gait profiles of MIMIC-MJX and the from-scratch model significantly diverged from the real rats (Fig.~\ref{fig:supp-gait}). Notably, MIMIC-MJX could not match target speeds above 0.5 m/s, so the higher-speed gaits (trot, gallop) never emerged. Together, these results show that residual control of the motion prior enables effective task transfer with natural movement and out-of-distribution gait generalization (see SuppVid[5-14] for videos demonstrating these results; App.~\ref{app:supvids}).

%%%%%%%%%%%%%%%%%%%%%%%%%%%%%%%%%%%%%%%%%%%%
% Table 1: Swing-stance metric comparisons for walking
%%%%%%%%%%%%%%%%%%%%%%%%%%%%%%%%%%%%%%%%%%%%
\begin{table}[t]
  \centering
  \caption{Per-metric distance to empirical rat data, computed as the absolute difference in means divided by empirical standard deviation, $|\mu_{emp}-\mu_{virtual}|/\sigma_{\text{emp}}$ ($\downarrow$). Values shown are the average across metrics in each category: Coordination (temporal symmetry, inter-limb phase couplings), Pattern (phase couplings, duty factors), and All (18 gait metrics). (See Table~\ref{tab:gait-full} for all metrics)}
  \label{tab:gait-summary}
  \small
  \setlength{\tabcolsep}{4pt}
  \begin{tabular}{@{} l ccc ccc ccc @{}}
    \toprule
    & \multicolumn{3}{c}{Walk} & \multicolumn{3}{c}{Trot} & \multicolumn{3}{c}{Gallop} \\
    \cmidrule(lr){2-4} \cmidrule(lr){5-7} \cmidrule(lr){8-10}
    Method & Coord. & Pattern & All & Coord. & Pattern & All & Coord. & Pattern & All \\
    \midrule
    SCAMPER   & \textbf{1.49} & \textbf{0.95} & \textbf{0.59} & \textbf{1.27} & \textbf{1.40} & \textbf{1.19} & \textbf{1.35} & \textbf{2.29} & \textbf{1.91} \\
    MIMIC-MJX & 1.83 & 1.68 & 1.28 & 5.15 & 3.62 & 2.72 & 1.96 & 2.51 & 2.74 \\
    Scratch   & 4.83 & 6.55 & 3.44 & 7.84 & 14.69 & 11.01 & 5.93 & 8.28 & 5.91 \\
    \bottomrule
  \end{tabular}
\end{table}

%%%%%%%%%%%%%%%%%%%%%%%%%%%%%%%%%%%%%%%%%%%%%%%%%%%%%%%%%%%%%%%%%%%%%%%%%%%%%%%%%%%%%%%
\FloatBarrier
\subsection{The motion prior expands the stability and precision of rearing behaviors}

%%%%%%%%%%%%%%%%%%%%%%%%%%%%%%%%%%%%%%%%%%%%
% Figure 5: Walk-rear task 
% A: Illustration of the task
% B: Success rate curves of SCAMPER vs MIMIC-MJX vs Scratch throughout training
%%%%%%%%%%%%%%%%%%%%%%%%%%%%%%%%%%%%%%%%%%%%
\begin{figure}[!htbp]
  \centering
  \includegraphics[width=\textwidth]{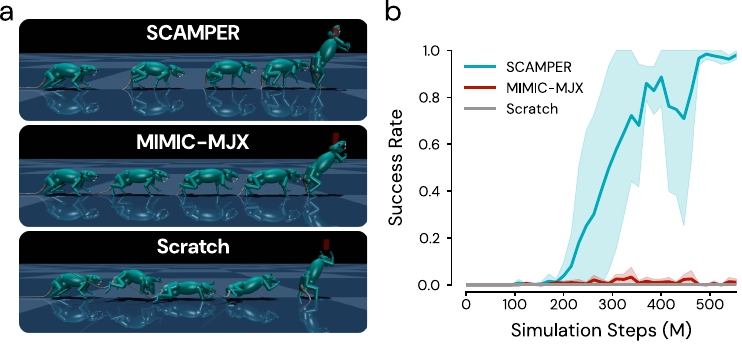}
  \caption{\textbf{SCAMPER expands the precision and stability of learnable rearing behaviors.} \textbf{(a)} Example rollouts of a walk-then-rear policy, with the target rearing height indicated by the red box, for policies trained using SCAMPER (\textit{top}), MIMIC-MJX (\textit{middle}), and trained from scratch (\textit{bottom}). \textbf{(b)} Task success rate over simulation steps for each method; lines show the mean across seeds and bands show $\pm1$\,s.d. Only the SCAMPER policy solves the task consistently within the training budget. }
  \label{fig:walk-rear}
\end{figure}
%%%%%%%%%%%%%%%%%%%%%%%%%%%%%%%%%%%%%%%%%%%%

To verify that these results extend beyond locomotion, we tested a dense-reward rearing task in which the virtual rat must lift its head to a target location (Fig.~\ref{fig:walk-rear}a). The reward decreases linearly with the between the skull and the target, and a one-time bonus reward is obtained for holding at the target for a full second. We set the target at the maximum rearing height observed in the training data to test each model's ability to use the full range of motion of the virtual body.

SCAMPER eventually achieved a high success rate (Fig.~\ref{fig:walk-rear}b, \textit{blue line}). In contrast, MIMIC-MJX rarely held the target height long enough (Fig.~\ref{fig:walk-rear}b, \textit{red line}), and the from-scratch model never reached the target, unable to use the full dynamic range of the rat body in a stable manner (Fig.~\ref{fig:walk-rear}b, \textit{gray line}). Thus, SCAMPER enables effective task transfer with realistic motion for both locomotion and rearing (see SuppVid[15-17] for videos demonstrating these results; App.~\ref{app:supvids}).

%%%%%%%%%%%%%%%%%%%%%%%%%%%%%%%%%%%%%%%%%%%%%%%%%%%%%%%%%%%%%%%%%%%%%%%%%%%%%%%%%%%%%%%
\subsection{Adaptive prior gating balances exploration and exploitation in a sparse-reward foraging task}\label{sec:forage}

Finally, we tested whether SCAMPER's gating mechanism could switch between the behavior and motion priors to balance exploration and exploitation in a sparse-reward task. We chose a foraging task relevant to behavioral neuroscience: the virtual rodent must explore a circular arena and navigate to goal locations to collect treats (Fig.~\ref{fig:arena-forage}a). In each trial, the rodent spawned at the center of the arena. To force exploration, treats spawned only outside a minimum radius. Each treat emitted a Gaussian ``scent'' field, with the rodent receiving the local odor gradient as a state variable. Collecting a treat (head overlapping its location) yielded reward and removed both the treat and its scent.

We compared the full SCAMPER model (Fig.~\ref{fig:arena-forage}b, \textit{blue line}) against a motion-prior-only variant (Fig.~\ref{fig:arena-forage}b, \textit{purple line}), MIMIC-MJX (Fig.~\ref{fig:arena-forage}b, \textit{red line}), and the from-scratch model (Fig.~\ref{fig:arena-forage}b, \textit{gray line}). All models found treats at small spawn radii, where little exploration was required. However, only the full SCAMPER model with the ability to switch to random generation via the behavior prior (Fig.~\ref{fig:supp-foragetraj}) was capable of solving the task when the minimum spawn radius was large and extensive exploration was necessary. This demonstrates that combining the behavior and motion priors enables SCAMPER to train on realistic sparse-reward tasks that require effective exploration (see SuppVid[18-22] for videos demonstrating these results; App.~\ref{app:supvids}).

%%%%%%%%%%%%%%%%%%%%%%%%%%%%%%%%%%%%%%%%%%%%
% Figure 6: Arena forage task
% A: Illustration of the RL task (2D Heatmap)
% B: Mean collected treats vs minimum spawning radius for Gated Priors, Motion Prior, MIMIC-MJX and Scratch
%%%%%%%%%%%%%%%%%%%%%%%%%%%%%%%%%%%%%%%%%%%%
\begin{figure}[!htbp]
  \centering
  \includegraphics[width=0.9\textwidth]{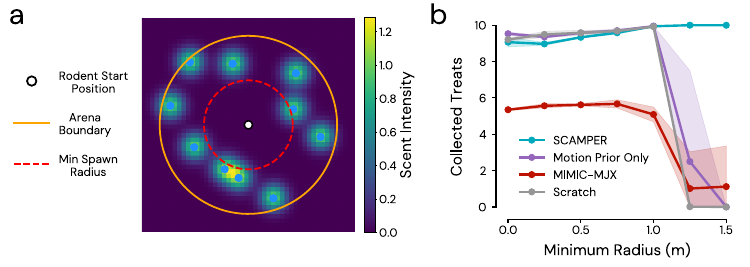}
  \caption{\textbf{SCAMPER enables efficient foraging under increasingly sparse reward.} \textbf{(a)} Arena forage task: treats (\textit{blue circles}) are spawned uniformly within an annulus bounded by the minimum spawn radius (\textit{red dashed}) and the arena boundary (\textit{orange}), with a scent field (colorbar) guiding the rodent from its start position (\textit{white}). \textbf{(b)} Number of treats collected per episode as a function of the minimum spawn radius, for policies trained using SCAMPER (\textit{blue}), a motion-prior-only variant (\textit{purple}), MIMIC-MJX (\textit{red}), and trained from scratch (\textit{gray}). Lines show the mean across seeds and bands show $\pm1$\,s.d.}
  \label{fig:arena-forage}
\end{figure}
%%%%%%%%%%%%%%%%%%%%%%%%%%%%%%%%%%%%%%%%%%%%

\FloatBarrier
%%%%%%%%%%%%%%%%%%%%%%%%%%%%%%%%%%%%%%%%%%%%%%%%%%%%%%%%%%%%%%%%%%%%%%%%%%%%%%%%%%%%%%%
%%%%%%%%%%%%%%%%%%%%%%%%%%%%%%%%%%%%%%%%%%%%%%%%%%%%%%%%%%%%%%%%%%%%%%%%%%%%%%%%%%%%%%%
\section{Discussion}
\textbf{Summary.} In this paper, we introduced SCAMPER, a system for training a realistic rat body on realistic behavioral tasks. SCAMPER enables natural movement generation during both goal-directed behavior and random exploration, thanks to its dual-prior architecture. We demonstrated that the behavior prior expands the diversity of randomly sampled motion, while preserving motion realism. We also showed that through residual control of the motion prior, SCAMPER can produce realistic gaits at different speeds and fully leverage its rearing capabilities. Furthermore, we demonstrated that by combining both priors the virtual rodent can solve more difficult sparse-reward tasks that demand exploration.

\textbf{Significance for neuro-AI.} SCAMPER is designed primarily as a tool for neuro-AI. It builds on established techniques for imitation, latent skill learning, and residual control; its additions, a behavior prior distilled from the frozen encoder and a learned gate between the two priors, extend naturalistic motion from goal-directed control to exploration. Naturalistic exploration matters for neuroscience: most experimental tasks offer only sparse reward, so models must explore to learn them, and unnatural movement would confound comparisons with neural data. \citet{aldarondo_virtual_2024} found that the internal representations of a purely imitative virtual rat matched neural activity in rat motor cortex and dorsolateral striatum. SCAMPER makes it possible to extend such comparisons to goal-directed tasks, with models and animals trained on the same task and compared throughout learning.

%%%%%%%%%%%%%%%%%%%%%%%%%%%%%%%%%%%%%%%%%%%%%%%%%%%%%%%%%%%%%%%%%%%%%%%%%%%%%%%%%%%%%%%
\textbf{Limitations \& next steps.} There are several limitations to this work that should be recognized. First and foremost, the dataset that we used for training the model was very limited. It consisted of only 1 hour of a rodent freely behaving in an open arena. As such, it contained only simple behaviors, namely walking, rearing, sitting still, and grooming. A more extensive dataset with rats engaging in more diverse behaviors, exhibiting longer bouts of consistent locomotion at various speeds, and movement in more diverse landscapes, would be of great benefit. Another key, but missing, component in the system we presented here is the presence of rich, multi-modal sensory inputs. The current version of SCAMPER uses only proprioceptive inputs, and a simple proxy for scent, effectively making the rat blind and deaf. As the field moves towards more sophisticated embodied models and experiments, this limitation will become more pressing to address. Future work can expand SCAMPER with a larger array of realistic sensory inputs. Finally, one key difference between SCAMPER and real motion control in animals is the use of torque actuators rather than muscles. Real animals must learn to control their bodies through activation of muscle fibers, and a good embodied model would likely take this facet of animal behavior into consideration. In principle, SCAMPER is agnostic to the actuator type, so as new virtual bodies get developed, SCAMPER could be applied to muscle-based models.

\newpage

%%%%%%%%%%%%%%%%%%%%%%%%%%%%%%%%%%%%%%%%%%%%%%%%%%%%%%%%%%%%%%%%%%%%%%%%%%%%%%%%%%%%%%%
%%%%%%%%%%%%%%%%%%%%%%%%%%%%%%%%%%%%%%%%%%%%%%%%%%%%%%%%%%%%%%%%%%%%%%%%%%%%%%%%%%%%%%%
\begin{ack}
The authors would like to thank Nanda Krishna, Jo\~{a}o Sacramento and Guillaume Lajoie for discussions, support, and feedback. AS was supported by a Fonds de recherche du Qu\'ebec -- Nature et technologies (FRQNT) Master's Research Scholarship, a Natural Sciences and Engineering Research Council of Canada (NSERC) Canada Graduate Research Scholarship -- Master's (CGRS~M), and a UNIQUE Excellence Scholarship. CYZ was supported by the NSF CSGrad4US Fellowship. This work was also supported by a L.I.F.E. Foundation grant to TDP, a Salk Innovation Award to TDP, an NIH BRAIN Initiative grant (U01NS136507) to BP\"O and TDP, a SFARI Investigator Award to BP\"O, and an ARNI award to BP\"O and BAR. This research was enabled in part by compute resources provided by Mila (\url{https://mila.quebec}), Calcul Qu\'ebec (\url{https://www.calculquebec.ca}), and the Digital Research Alliance of Canada (\url{https://alliancecan.ca}).
\end{ack}

%%%%%%%%%%%%%%%%%%%%%%%%%%%%%%%%%%%%%%%%%%%%%%%%%%%%%%%%%%%%%%%%%%%%%%%%%%%%%%%%%%%%%%%
%%%%%%%%%%%%%%%%%%%%%%%%%%%%%%%%%%%%%%%%%%%%%%%%%%%%%%%%%%%%%%%%%%%%%%%%%%%%%%%%%%%%%%%
\section*{Competing interests}
BAR is a Research Scientist at Google and AS is a Student Researcher at Google. The authors declare no other competing interests.

\bibliographystyle{unsrtnat}
\bibliography{references}

\newpage
\appendix

%%%%%%%%%%%%%%%%%%%%%%%%%%%%%%%%%%%%%%%%%%%%%%%%%%%%%%%%%%%%%%%%%%%%%%%%%%%%%%%%%%%%%%%
%%%%%%%%%%%%%%%%%%%%%%%%%%%%%%%%%%%%%%%%%%%%%%%%%%%%%%%%%%%%%%%%%%%%%%%%%%%%%%%%%%%%%%%
\section{Proofs}
\label{app:proofs}

We restate each proposition from Section~\ref{sec:priors_analysis} and provide the full proof.

\setcounter{proposition}{0}

\begin{proposition}[Information bottleneck under joint training]
When the encoder $q_\phi(\mathbf{z}|\mathbf{g},\mathbf{x})$ and the state-dependent prior $p_\psi(\mathbf{z}|\mathbf{x})$ are trained jointly, minimizing their expected KL divergence decomposes as
\begin{equation}
    \min_{\phi, \psi} \mathbb{E}_{p(\mathbf{x},\mathbf{g})}\left[D_{\mathrm{KL}}(q_\phi \parallel p_\psi)\right] = \min_{\phi, \psi} \left[ I_\phi(Z; G \mid X) + \mathbb{E}_{p(\mathbf{x})} \left[ D_{\mathrm{KL}}(m_\phi \parallel p_\psi) \right] \right],
\end{equation}
where $m_\phi(\mathbf{z}|\mathbf{x})=\mathbb{E}_{p(\mathbf{g}|\mathbf{x})}[q_\phi(\mathbf{z}|\mathbf{g},\mathbf{x})]$ is the marginal posterior induced by the encoder, which evolved during joint training.
\end{proposition}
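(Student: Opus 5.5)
The identity holds pointwise in $(\phi,\psi)$, so we will prove that the objective decomposes exactly into the two terms for every fixed pair of parameters; taking the minimum over $(\phi,\psi)$ on both sides then gives the stated equality. The tool is the standard "add and subtract the marginal" trick from variational information bottleneck analyses. Concretely, for fixed $\phi,\psi$ we insert $m_\phi(\mathbf{z}|\mathbf{x})$ into the log-ratio:
\begin{equation*}
\log\frac{q_\phi(\mathbf{z}|\mathbf{g},\mathbf{x})}{p_\psi(\mathbf{z}|\mathbf{x})} = \log\frac{q_\phi(\mathbf{z}|\mathbf{g},\mathbf{x})}{m_\phi(\mathbf{z}|\mathbf{x})} + \log\frac{m_\phi(\mathbf{z}|\mathbf{x})}{p_\psi(\mathbf{z}|\mathbf{x})}.
\end{equation*}
We then take the expectation over $p(\mathbf{x},\mathbf{g})\,q_\phi(\mathbf{z}|\mathbf{g},\mathbf{x})$, which is the joint distribution over $(X,G,Z)$ induced by the encoder.

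The first term is handled by identifying $m_\phi(\mathbf{z}|\mathbf{x})$ as exactly the conditional $p_\phi(\mathbf{z}|\mathbf{x})$ under this joint. By definition, it is $q_\phi$ averaged over $p(\mathbf{g}|\mathbf{x})$. Hence the first term equals $\mathbb{E}\bigl[\log p_\phi(\mathbf{z}|\mathbf{g},\mathbf{x})/p_\phi(\mathbf{z}|\mathbf{x})\bigr] = I_\phi(Z;G\mid X)$.

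For the second term, the integrand depends only on $(\mathbf{x},\mathbf{z})$. We therefore integrate out $\mathbf{g}$ first, using Fubini/Tonelli, conditioned on $\mathbf{x}$: $\mathbb{E}_{p(\mathbf{g}|\mathbf{x})} q_\phi(\mathbf{z}|\mathbf{g},\mathbf{x}) = m_\phi(\mathbf{z}|\mathbf{x})$. The term thus becomes $\mathbb{E}_{p(\mathbf{x})}\int m_\phi \log(m_\phi/p_\psi)\,d\mathbf{z} = \mathbb{E}_{p(\mathbf{x})}[D_{\mathrm{KL}}(m_\phi\|p_\psi)]$. Summing the two terms gives the identity for every $(\phi,\psi)$, and minimizing both sides yields the proposition. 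The same computation also gives the frozen-encoder result of Proposition~\ref{prop:behavior_prior}: with $\phi$ fixed, the mutual-information term is constant in $\varphi$ and can be pulled out of the $\min$.

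The main obstacle is measure-theoretic rather than algebraic. Splitting the expectation of a sum into a sum of expectations requires both pieces to be well defined and not of the form $\infty-\infty$. I would assume the total expected KL is finite. Since it equals a sum of two nonnegative quantities (a conditional mutual information and an expected KL), each is then finite, which justifies the split. I would also assume absolute continuity, $q_\phi \ll m_\phi \ll p_\psi$; this holds automatically for Gaussian encoders and priors with positive variances, as in the paper. Under these assumptions Tonelli applies to the nonnegative parts, and the argument goes through.
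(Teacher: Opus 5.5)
Your proposal is correct and follows essentially the same route as the paper: insert $m_\phi$ into the log-ratio, identify the first term as $I_\phi(Z;G\mid X)$ by definition, and use Fubini to integrate out $\mathbf{g}$ in the second term, which yields $\mathbb{E}_{p(\mathbf{x})}[D_{\mathrm{KL}}(m_\phi\parallel p_\psi)]$. Your integrability discussion is more careful than the paper's footnote, but as written it uses the decomposition to justify the split; you can avoid that circularity by noting that each log-ratio has an integrable negative part (e.g., $\bigl(\log\tfrac{m_\phi}{p_\psi}\bigr)^- \le \tfrac{p_\psi}{m_\phi}$, whose expectation is at most $1$), so both terms are well defined in $(-\infty,\infty]$ and linearity holds before you invoke nonnegativity.
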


\begin{proof}
Let the joint optimization objective be the expected KL divergence over the data distribution:
\begin{equation}
    \min_{\phi, \psi} \mathcal{L}(\phi, \psi) = \min_{\phi, \psi} \mathbb{E}_{p(\mathbf{x},\mathbf{g})} \left[ D_{\mathrm{KL}}\big( q_\phi(\mathbf{z} | \mathbf{g}, \mathbf{x}) \parallel p_\psi(\mathbf{z} | \mathbf{x}) \big) \right].
\end{equation}
Expanding the expected KL divergence into its integral form:
\begin{equation}
    \min_{\phi,\psi}\mathcal{L}(\phi, \psi) = \min_{\phi,\psi}\int p(\mathbf{x}) \int p(\mathbf{g}|\mathbf{x}) \int q_\phi(\mathbf{z}|\mathbf{g}, \mathbf{x}) \log \frac{q_\phi(\mathbf{z}|\mathbf{g}, \mathbf{x})}{p_\psi(\mathbf{z}|\mathbf{x})} \, d\mathbf{z} \, d\mathbf{g} \, d\mathbf{x}.
\end{equation}
Define the marginal posterior (which depends on the encoder parameters $\phi$):
\begin{equation}
    m_\phi(\mathbf{z}|\mathbf{x}) \triangleq \int q_\phi(\mathbf{z}|\mathbf{g}, \mathbf{x}) p(\mathbf{g}|\mathbf{x}) \, d\mathbf{g}.
\end{equation}
Multiplying numerator and denominator inside the logarithm by $m_\phi(\mathbf{z}|\mathbf{x})$:
\begin{equation}
    \log \frac{q_\phi(\mathbf{z}|\mathbf{g}, \mathbf{x})}{p_\psi(\mathbf{z}|\mathbf{x})} = \log \frac{q_\phi(\mathbf{z}|\mathbf{g}, \mathbf{x})}{p_\psi(\mathbf{z}|\mathbf{x})}\frac{m_\phi(\mathbf{z}|\mathbf{x})}{m_\phi(\mathbf{z}|\mathbf{x})}= \log \frac{q_\phi(\mathbf{z}|\mathbf{g}, \mathbf{x})}{m_\phi(\mathbf{z}|\mathbf{x})} + \log \frac{m_\phi(\mathbf{z}|\mathbf{x})}{p_\psi(\mathbf{z}|\mathbf{x})}.
\end{equation}
Substituting this decomposition back into the integral yields two terms:
\begin{equation}
\begin{aligned}
    \min_{\phi,\psi}\mathcal{L}(\phi, \psi) &= \min_{\phi,\psi} \left[ \underbrace{ \int p(\mathbf{x}) \int p(\mathbf{g}|\mathbf{x}) \int q_\phi(\mathbf{z}|\mathbf{g}, \mathbf{x}) \log \frac{q_\phi(\mathbf{z}|\mathbf{g}, \mathbf{x})}{m_\phi(\mathbf{z}|\mathbf{x})} \, d\mathbf{z} \, d\mathbf{g} \, d\mathbf{x} }_{\text{Term 1}}\right. \\
    &\left. \quad + \underbrace{ \int p(\mathbf{x}) \int p(\mathbf{g}|\mathbf{x}) \int q_\phi(\mathbf{z}|\mathbf{g}, \mathbf{x}) \log \frac{m_\phi(\mathbf{z}|\mathbf{x})}{p_\psi(\mathbf{z}|\mathbf{x})} \, d\mathbf{z} \, d\mathbf{g} \, d\mathbf{x} }_{\text{Term 2}}\right].
\end{aligned}
\end{equation}

\textbf{Term 1} is, by definition, the conditional mutual information between the latent $\mathbf{z}$ and the goal $\mathbf{g}$ given the state $\mathbf{x}$:
\begin{equation}
    \text{Term 1} = I_\phi(Z; G \mid X).
\end{equation}

\textbf{Term 2}: Since $\log \frac{m_\phi(\mathbf{z}|\mathbf{x})}{p_\psi(\mathbf{z}|\mathbf{x})}$ is independent of $\mathbf{g}$, and the integrand is absolutely integrable\footnote{Absolute integrability follows from the assumption that $D_\mathrm{KL}(q_\phi||p_\psi)<\infty$ on the support of $p(\mathbf{x},\mathbf{g})$.}, we apply Fubini's theorem to evaluate the integral over $\mathbf{g}$ first:
\begin{equation}
\begin{aligned}
    \text{Term 2} &= \int p(\mathbf{x}) \int \underbrace{\left( \int q_\phi(\mathbf{z}|\mathbf{g}, \mathbf{x}) p(\mathbf{g}|\mathbf{x}) \, d\mathbf{g} \right)}_{m_\phi(\mathbf{z}|\mathbf{x})} \log \frac{m_\phi(\mathbf{z}|\mathbf{x})}{p_\psi(\mathbf{z}|\mathbf{x})} \, d\mathbf{z} \, d\mathbf{x} \\
    &= \mathbb{E}_{p(\mathbf{x})} \left[ D_{\mathrm{KL}}\big( m_\phi(\mathbf{z}|\mathbf{x}) \parallel p_\psi(\mathbf{z}|\mathbf{x}) \big) \right].
\end{aligned}
\end{equation}

Combining the two terms gives the desired decomposition:
\begin{equation}
    \min_{\phi, \psi} \mathcal{L}(\phi, \psi) = \min_{\phi, \psi} \left[ I_\phi(Z; G \mid X) + \mathbb{E}_{p(\mathbf{x})} \left[ D_{\mathrm{KL}}\big( m_\phi(\mathbf{z}|\mathbf{x}) \parallel p_\psi(\mathbf{z}|\mathbf{x}) \big) \right] \right].
\end{equation}
\end{proof}

\begin{proposition}[Marginal posterior approximation under frozen encoder]
When the encoder $q_\phi(\mathbf{z}|\mathbf{g},\mathbf{x})$ is frozen, minimizing the expected KL divergence between the encoder and a state-dependent prior $p_\varphi(\mathbf{z}|\mathbf{x})$ is equivalent to
\begin{equation}
    \min_\varphi \mathbb{E}_{p(\mathbf{x},\mathbf{g})}\left[D_{\mathrm{KL}}(q_\phi \parallel p_\varphi)\right] = I_\phi(Z; G \mid X)+ \min_\varphi \mathbb{E}_{p(\mathbf{x})}\left[D_{\mathrm{KL}}(m_\phi \parallel p_\varphi)\right],
\end{equation}
where $m_\phi(\mathbf{z}|\mathbf{x}) = \mathbb{E}_{p(\mathbf{g}|\mathbf{x})}[q_\phi(\mathbf{z}|\mathbf{g},\mathbf{x})]$ is the marginal posterior induced by the frozen encoder, held fixed during optimization. The conditional mutual information, $I_\phi(Z; G \mid X)$, does not depend on the optimization variable $\varphi$; it is therefore an additive constant that sits outside the minimization. Hence, the optimal behavior prior is the closest state-conditional approximation to the marginal posterior $m_\phi$.
\end{proposition}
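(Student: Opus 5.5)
The plan is to reuse the pointwise decomposition already established in the proof of Proposition~1, and then exploit the fact that the encoder parameters $\phi$ are now fixed. Fix $\phi$ and write
\[
\mathcal{L}(\varphi) = \mathbb{E}_{p(\mathbf{x},\mathbf{g})}\left[D_{\mathrm{KL}}\big(q_\phi(\mathbf{z}|\mathbf{g},\mathbf{x})\parallel p_\varphi(\mathbf{z}|\mathbf{x})\big)\right].
\]
Insert $m_\phi(\mathbf{z}|\mathbf{x})$ into the logarithm exactly as before. This splits the integrand into $\log\big(q_\phi/m_\phi\big)+\log\big(m_\phi/p_\varphi\big)$.

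The first piece integrates to $I_\phi(Z;G\mid X)$. For the second piece, the log-ratio does not depend on $\mathbf{g}$, so we integrate over $\mathbf{g}$ first. This collapses $q_\phi$ to $m_\phi$ and yields $\mathbb{E}_{p(\mathbf{x})}[D_{\mathrm{KL}}(m_\phi\parallel p_\varphi)]$. The algebra is identical to Proposition~1 with $\psi$ renamed to $\varphi$, so I would simply invoke that computation. The result is the identity $\mathcal{L}(\varphi)=I_\phi(Z;G\mid X)+\mathbb{E}_{p(\mathbf{x})}[D_{\mathrm{KL}}(m_\phi\parallel p_\varphi)]$, valid for every $\varphi$.

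The new step is moving the minimum. Because $\phi$ is frozen, both $m_\phi$ and $I_\phi(Z;G\mid X)$ depend only on $\phi$, $p(\mathbf{x})$ and $p(\mathbf{g}|\mathbf{x})$, never on $\varphi$. So the mutual information is a constant $C$ with respect to the optimization. Taking the infimum over $\varphi$ of $C+f(\varphi)$ gives $C+\inf_\varphi f(\varphi)$, with the same minimizers. This gives the displayed equation and shows that the optimal $\varphi$ minimizes the expected KL from $m_\phi$ to $p_\varphi$, which is the ``closest approximation'' claim.

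I would add two remarks. First, if the family $\{p_\varphi\}$ is rich enough to contain $m_\phi(\cdot|\mathbf{x})$ for $p$-a.e.\ $\mathbf{x}$, then by Gibbs' inequality the minimum is zero, attained at $p_\varphi=m_\phi$. Second, the data distribution $p(\mathbf{x},\mathbf{g})$ must be treated as fixed, independent of $\varphi$. This holds because the rollouts in stage (3) are generated by the frozen controller, so the state distribution does not depend on $p_\varphi$.

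The main obstacle is the technical condition for the Fubini step and for splitting the sum. We need $I_\phi(Z;G\mid X)<\infty$, which is a $\varphi$-independent condition, so that subtracting the constant is legitimate. We also need absolute integrability of the integrand, for which I would use the same finiteness assumption on $D_{\mathrm{KL}}(q_\phi\parallel p_\varphi)$ as in the footnote to Proposition~1. For any $\varphi$ with infinite expected KL, both sides are $+\infty$, so the identity holds trivially and those $\varphi$ do not affect the minimum.
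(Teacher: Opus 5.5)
Your argument is correct, but for this proposition the paper uses a different decomposition. You reuse the log-ratio split from Proposition~1. The paper instead writes $D_{\mathrm{KL}}(q_\phi\parallel p_\varphi)=H(q_\phi,p_\varphi)-H(q_\phi)$ and pulls the $\varphi$-independent term $\mathbb{E}_{p(\mathbf{x},\mathbf{g})}[H(q_\phi)]$ outside the minimization. It then applies Fubini to the cross-entropy alone to get $\mathbb{E}_{p(\mathbf{x})}[H(m_\phi,p_\varphi)]$. Finally it adds and subtracts $\mathbb{E}_{p(\mathbf{x})}[H(m_\phi)]$ and identifies $\mathbb{E}_{p(\mathbf{x})}[H(m_\phi)]-\mathbb{E}_{p(\mathbf{x},\mathbf{g})}[H(q_\phi)]$ with $I_\phi(Z;G\mid X)$.

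The paper's route is self-contained and shows that, with the encoder frozen, the stage-3 loss is, up to a constant, the expected cross-entropy of $p_\varphi$ under the encoder mixture, i.e.\ a maximum-likelihood fit of $p_\varphi$ to $m_\phi$. The cost is that it implicitly needs the differential entropies $H(q_\phi)$ and $H(m_\phi)$ to be finite, so that no $\infty-\infty$ appears. This is harmless here, since the Gaussians have log-variances clipped to $[-5,0]$.

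Your route makes Proposition~2 a direct corollary of the Proposition~1 identity and makes explicit that the identity holds for every $\varphi$ before minimizing. It needs only $I_\phi(Z;G\mid X)<\infty$ plus integrability of the two log-ratio terms. Your point that stage-3 rollouts are driven by the frozen encoder--decoder, so $p(\mathbf{x},\mathbf{g})$ does not depend on $\varphi$, is something the paper leaves implicit.

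The one soft spot is the word ``trivially'' in your infinite-KL case. To see that the right side is then also infinite, use $\int r\,(\log(r/s))^{-}\,d\mathbf{z}\le 1$ for any densities $r,s$. So if $I_\phi$ and $\mathbb{E}_{p(\mathbf{x})}[D_{\mathrm{KL}}(m_\phi\parallel p_\varphi)]$ were both finite, both log-ratio terms would be absolutely integrable, the split would be valid, and the left side would be finite.
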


\begin{proof}
Let the distillation objective be the expected KL divergence over the data distribution:
\begin{equation}
    \min_{\varphi}\mathcal{L}(\varphi) = \min_\varphi\mathbb{E}_{p(\mathbf{x},\mathbf{g})} \left[ D_{\mathrm{KL}}\big( q_\phi(\mathbf{z} | \mathbf{g}, \mathbf{x}) \parallel p_\varphi(\mathbf{z} | \mathbf{x}) \big) \right].
\end{equation}
Expanding the inner KL divergence as the difference between cross-entropy and entropy:
\begin{equation}
    D_{\mathrm{KL}}(q_\phi \parallel p_\varphi) = \underbrace{- \int q_\phi(\mathbf{z}|\mathbf{g}, \mathbf{x}) \log p_\varphi(\mathbf{z}|\mathbf{x}) \, d\mathbf{z}}_{H(q_\phi, p_\varphi)} + \underbrace{\int q_\phi(\mathbf{z}|\mathbf{g}, \mathbf{x}) \log q_\phi(\mathbf{z}|\mathbf{g}, \mathbf{x}) \, d\mathbf{z}}_{-H(q_\phi)}.
\end{equation}

Since the encoder parameters $\phi$ are frozen, the entropy $H(q_\phi)$ is constant with respect to $\varphi$ and can be pulled outside the minimization:
\begin{equation}\label{eqn:entropy}
    \min_\varphi\mathcal{L}(\varphi) = \min_\varphi \left[ \mathbb{E}_{p(\mathbf{x},\mathbf{g})}[H(q_\phi,p_\varphi)] \right]-\mathbb{E}_{p(\mathbf{x},\mathbf{g})}[H(q_\phi)].
\end{equation}
First we focus on the first term inside the minimization and expand it out:
\begin{equation}
    \mathbb{E}_{p(\mathbf{x},\mathbf{g})}[H(q_\phi,p_\varphi)] = - \int p(\mathbf{x}) \int p(\mathbf{g}|\mathbf{x}) \int q_\phi(\mathbf{z} | \mathbf{g}, \mathbf{x}) \log p_\varphi(\mathbf{z} | \mathbf{x}) \, d\mathbf{z} \, d\mathbf{g} \, d\mathbf{x}.
\end{equation}
Assuming absolute integrability, we apply Fubini's theorem to swap the order of integration between $\mathbf{g}$ and $\mathbf{z}$, and factor out $\log p_\varphi(\mathbf{z}|\mathbf{x})$ since it is independent of $\mathbf{g}$:
\begin{equation}
\begin{aligned}
    \mathbb{E}_{p(\mathbf{x},\mathbf{g})}[H(q_\phi,p_\varphi)] &= - \int p(\mathbf{x}) \int \log p_\varphi(\mathbf{z} | \mathbf{x}) \underbrace{\left( \int q_\phi(\mathbf{z} | \mathbf{g}, \mathbf{x}) p(\mathbf{g}|\mathbf{x}) \, d\mathbf{g} \right)}_{m_\phi(\mathbf{z}|\mathbf{x})} \, d\mathbf{z} \, d\mathbf{x} \\
    &= - \int p(\mathbf{x}) \int m_\phi(\mathbf{z}|\mathbf{x}) \log p_\varphi(\mathbf{z} | \mathbf{x}) \, d\mathbf{z} \, d\mathbf{x}\\
    &= \mathbb{E}_{p(\mathbf{x})}[H(m_\phi,p_\varphi)].
\end{aligned}
\end{equation}
This is the expectation of the cross-entropy between $m_\phi$ and $p_\varphi$. Substituting back into equation (\ref{eqn:entropy}) yields:
\begin{equation}
    \min_\varphi\mathcal{L}(\varphi) = \min_\varphi \left[ \mathbb{E}_{p(\mathbf{x})}[H(m_\phi,p_\varphi)] \right]-\mathbb{E}_{p(\mathbf{x},\mathbf{g})}[H(q_\phi)].
\end{equation}

To relate it to the KL divergence, we add and subtract the expectation of the entropy of the marginal posterior:
\begin{equation}
    \min_\varphi\mathcal{L}(\varphi) = \min_\varphi \left[ \mathbb{E}_{p(\mathbf{x})}[H(m_\phi,p_\varphi)] \right]-\mathbb{E}_{p(\mathbf{x})}[H(m_\phi)]+\mathbb{E}_{p(\mathbf{x})}[H(m_\phi)]-\mathbb{E}_{p(\mathbf{x},\mathbf{g})}[H(q_\phi)].
\end{equation}
Pulling $\mathbb{E}_{p(\mathbf{x})}[H(m_\phi)]$ into the minimization operation and using the identity $H(m_\phi,p_\varphi)=D_{\mathrm{KL}}(m_\phi||p_\varphi)+H(m_\phi)$:
\begin{equation}
    \min_\varphi\mathcal{L}(\varphi) = \min_\varphi \mathbb{E}_{p(\mathbf{x})} \left[ D_{\mathrm{KL}}(m_\phi(\mathbf{z}|\mathbf{x})\parallel p_\varphi(\mathbf{z}|\mathbf{x})) \right]+\Big( \mathbb{E}_{p(\mathbf{x})}[H(m_\phi)]-\mathbb{E}_{p(\mathbf{x},\mathbf{g})}[H(q_\phi)] \Big).
\end{equation}
The bracketed difference of entropies is, by definition, the conditional mutual information between the latent and the goal given the state under the frozen encoder, therefore:
\begin{equation}
    \min_\varphi\mathcal{L}(\varphi) = \min_\varphi \mathbb{E}_{p(\mathbf{x})} \left[ D_{\mathrm{KL}}(m_\phi(\mathbf{z}|\mathbf{x})\parallel p_\varphi(\mathbf{z}|\mathbf{x})) \right] + I_\phi(Z;G \mid X).
\end{equation}
\end{proof}

%%%%%%%%%%%%%%%%%%%%%%%%%%%%%%%%%%%%%%%%%%%%%%%%%%%%%%%%%%%%%%%%%%%%%%%%%%%%%%%%%%%%%%%
%%%%%%%%%%%%%%%%%%%%%%%%%%%%%%%%%%%%%%%%%%%%%%%%%%%%%%%%%%%%%%%%%%%%%%%%%%%%%%%%%%%%%%%
\section{Dataset}
\textbf{Motion capture.} We use a rat motion dataset collected by \citet{aldarondo_virtual_2024}. The dataset is composed of 842 5\,s clips of rats freely behaving in a circular enclosure. The mocap data is characterized by 23 keypoints captured at 50\,Hz using DANNCE v1.3 \cite{Dunn2021-zc} from multicamera video. The dataset contains various kinds of walking, rearing, grooming and prone behaviors. The mocap keypoints are aligned to the virtual rodent body model through STAC-MJX \cite{zhang_mimic-mjx_2025}.

\textbf{Empirical rat locomotion data.} The empirical data used in the gait analysis (phase differences, duty factors, and gait classifications) were obtained from \citet{danner_spinal_2023}, publicly available at \url{https://github.com/dannerlab/rat-sci-locomotion} under a Creative Commons Attribution--ShareAlike 4.0 International license (CC~BY--SA~4.0, \url{https://creativecommons.org/licenses/by-sa/4.0/}). The data was used unmodified.

%%%%%%%%%%%%%%%%%%%%%%%%%%%%%%%%%%%%%%%%%%%%%%%%%%%%%%%%%%%%%%%%%%%%%%%%%%%%%%%%%%%%%%%
%%%%%%%%%%%%%%%%%%%%%%%%%%%%%%%%%%%%%%%%%%%%%%%%%%%%%%%%%%%%%%%%%%%%%%%%%%%%%%%%%%%%%%%
\section{Supplementary videos}
\label{app:supvids}
We host the supplementary videos demonstrating the rodent performing the various tasks described in this work on the project's page (\href{https://aidansirbu.com/SCAMPER-site/}{\texttt{https://aidansirbu.com/SCAMPER-site/}}). Specific videos are also referenced throughout the text: ``SuppVid<x>'' refers to the corresponding video on the website and the .mp4 file in the supplementary video package.

%%%%%%%%%%%%%%%%%%%%%%%%%%%%%%%%%%%%%%%%%%%%%%%%%%%%%%%%%%%%%%%%%%%%%%%%%%%%%%%%%%%%%%%
%%%%%%%%%%%%%%%%%%%%%%%%%%%%%%%%%%%%%%%%%%%%%%%%%%%%%%%%%%%%%%%%%%%%%%%%%%%%%%%%%%%%%%%
\section{Implementation \& compute}
\textbf{Implementation.} SCAMPER is implemented on top of the MuJoCo physics engine \cite{Todorov2012-bp}. We accelerate simulation using MuJoCo XLA (MJX) \cite{MuJoCo-XLA-AuthorsUnknown-il}, a JAX-based \cite{Bradbury2018-ge} reimplementation of MuJoCo, which enables GPU-vectorized physics rollouts and domain randomization to be applied directly to the per-environment parameters. Environments are built on top of MuJoCo Playground's \cite{Zakka2025-wi} base class and wrappers, which expose MJX simulation through a JAX-compatible interface and adapt our environments to the Brax \cite{Freeman2021-fb} training API. RL is performed with Brax, whose PPO implementation and distributed training utilities we extend to support our networks and distillation objectives. Pretraining (Stages 1-3) was performed using the JAX backend of MJX and task transfer was performed using the Warp backend of MJX. MIMIC-MJX baselines were implemented using the training setup and hyperparameters described in \citet{zhang_mimic-mjx_2025}. The from-scratch baseline uses position (PD) actuators, whereas all other models use torque actuators. Torque actuators are highly unstable and difficult to learn from scratch, so we give the from-scratch model PD actuators to give it a fair chance to compete.

\textbf{Compute.} All experiments were run on a single GPU with $4$ CPU cores and $48$\,GB of host RAM. Pretraining used NVIDIA L40S (48\,GB) GPUs: the Stage~1 expert imitator trains for $3\times 10^{9}$ environment steps over $4096$ parallel envs in $\approx 14$\,h, the Stage~2 distillation of the encoder/decoder and motion prior trains for $1\times 10^{9}$ steps over $4096$ envs in $\approx 12$\,h, and the Stage~3 behavior prior trains for $2\times 10^{9}$ steps over $4096$ envs in $\approx 30$\,h, for a total of roughly $56$ L40S-hours per end-to-end pretraining run. The three task-transfer benchmarks were trained with PPO under MuJoCo Warp on NVIDIA A100 (80\,GB) GPUs, with four seeds per condition. Walk-then-rear was run for the SCAMPER, MIMIC-MJX, and from-scratch models, with each job training for $1\times 10^{9}$ steps over $512$ envs in $\approx 11$\,h ($12$ jobs total, $\approx 132$ A100-hours). Locomotion was run for the same three models swept over $7$ target speeds, with each job training for $4\times 10^{8}$ steps over $1024$ envs in $\approx 10$\,h ($84$ jobs total, $\approx 840$ A100-hours). Arena foraging was run for SCAMPER, a motion-prior-only variant, MIMIC-MJX, and the from-scratch model swept over $7$ minimum spawn radii, with each job training for $2\times 10^{9}$ steps over $1024$ envs in $\approx 6$\,h ($112$ jobs total, $\approx 672$ A100-hours). One full reproduction of the paper's training pipeline therefore costs roughly $56$ L40S-GPU-hours and $1{,}650$ A100-GPU-hours.

%%%%%%%%%%%%%%%%%%%%%%%%%%%%%%%%%%%%%%%%%%%%%%%%%%%%%%%%%%%%%%%%%%%%%%%%%%%%%%%%%%%%%%%
%%%%%%%%%%%%%%%%%%%%%%%%%%%%%%%%%%%%%%%%%%%%%%%%%%%%%%%%%%%%%%%%%%%%%%%%%%%%%%%%%%%%%%%
\section{Tasks}
\label{app:tasks}

%%%%%%%%%%%%%%%%%%%%%%%%%%%%%%%%%%%%%%%%%%%%%%%%%%%%%%%%%%%%%%%%%%%%%%%%%%%%%%%%%%%%%%%
\subsection{Stage 1: Expert imitator}
\label{app:imitation_mlp}

\textbf{Task.} The expert imitator $\pi_I(\mathbf{a}_t|\mathbf{x}_t,\mathbf{g}_t)$ tracks 3D motion-capture trajectories of freely behaving rats using the biomechanical rat model of \citet{merel_deep_2019}. We use a $\sim$1 h subset of the open-source rat mocap dataset released by \citet{aldarondo_virtual_2024} (842 clips of 5\,s mocap), registered to the virtual rat via STAC-MJX \cite{zhang_mimic-mjx_2025}. At the start of each episode we sample a clip uniformly from the training partition ($80\%$ of the dataset; the remaining $20\%$ is held out for evaluation rollouts) and a starting frame uniformly in $[0, 50]$, with zero initial joint velocity. Episodes last up to $250$ reference frames. At each control step the agent observes the proprioception vector $\mathbf{x}_t$ and the egocentric goal $\mathbf{g}_t$ defined in Section~\ref{sec:preliminaries}, where $\mathbf{g}_t$ is computed from a $5$-frame reference horizon, and emits torques $\mathbf{a}_t\in\mathbb{R}^{38}$ applied to the rat's actuated joints. Control runs at $100$\,Hz on top of a $500$\,Hz MuJoCo MJX physics simulation, with a $50$\,Hz mocap reference. The body model uses torque actuators and a $0.9\times$ rescaling factor.

\textbf{Reward.} Each imitation tracking term takes the form $r_i = w_i\exp\!\big(-\tfrac{1}{2}(e_i/\sigma_i)^2\big)$, a Gaussian kernel of the per-term error $e_i$ with characteristic scale $\sigma_i$; values of $(w_i,\sigma_i)$ are tabulated in Table~\ref{tab:imitation_mlp_hparams}. The four active tracking terms penalize errors in root position, root orientation (quaternion geodesic), joint angles, and egocentric Cartesian end-effector positions. Three regularization costs penalize quadratic control magnitude, quadratic control differences (jerk), and clipped torque--velocity product, and a torso-$z$-range term adds an in-range bonus that rewards keeping the torso height within the healthy range $[0.0325, 0.5]$\,m.

\textbf{Termination.} An episode terminates early if (i) the root position deviates from the reference by more than $0.1$\,m, (ii) the root orientation deviates by more than $60^\circ$, (iii) the pose $L_2$ error exceeds $4.5$, or (iv) any state value becomes \texttt{NaN}. Episodes also terminate at the end of the reference clip.

\textbf{Domain randomization.} To encourage robust control across variations in the biomechanical model, at the start of each episode we sample physics parameters uniformly within fixed ranges and hold them fixed for the duration of the episode \cite{peng_sim--real_2018}. Randomized parameters include floor friction, static friction scale, joint armature scale, link mass scale, torso mass jitter, center-of-mass jitter, and an additive jitter on the initial joint configuration $\boldsymbol{\theta}_{0}$; full ranges are listed in Table~\ref{tab:imitation_mlp_hparams}.

\textbf{Network architecture.} The expert imitator is a multi-layer perceptron with no latent bottleneck: the concatenation of $\mathbf{x}_t$ and $\mathbf{g}_t$, normalized by their running mean and variance, is fed directly to a $[2048, 1024, 1024, 512]$ policy MLP that outputs the mean and log-standard-deviation of a tanh-squashed diagonal Gaussian over the $38$-dimensional action, so that $\mathbf{a}_t \in [-1, 1]^{38}$. The critic is a separate $[2048, 1024, 1024, 1024, 1024, 512]$ MLP estimating the state-value function. Both networks use SiLU activations with LayerNorm between hidden layers, and are trained from random initialization using LeCun initialization with zero bias.

\textbf{Training.} We train $\pi_I$ with PPO \cite{schulman_proximal_2017} for $3\times 10^{9}$ environment steps across $4{,}096$ parallel environments. Each PPO update uses a $20$-step unroll, batch size $1{,}024$, and $4$ epochs of $16$ minibatches per batch, with Adam at learning rate $10^{-4}$ and gradient clipping at $20.0$. We use discount $\gamma = 0.95$, PPO clip range $\varepsilon = 0.2$, entropy coefficient $10^{-2}$, value-loss coefficient $0.25$, and reward scaling $1$. Evaluation rollouts on the held-out clip partition are performed every $2.5\times 10^{7}$ environment steps with deterministic (mean) actions. Training is implemented in JAX \cite{Bradbury2018-ge} with MuJoCo MJX \cite{MuJoCo-XLA-AuthorsUnknown-il} and runs on a single GPU; the full list of training hyperparameters is given in Table~\ref{tab:imitation_mlp_hparams}.

%%%%%%%%%%%%%%%%%%%%%%%%%%%%%%%%%%%%%%%%%%%%%%%%%%%%%%%%%%%%%%%%%%%%%%%%%%%%%%%%%%%%%%%
\subsection{Stage 2: Online distillation}
\label{app:distill}

\textbf{Task.} Online distillation reuses the imitation environment from Stage 1 (App.~\ref{app:imitation_mlp}): identical clip set and train/eval split, control rate, physics solver, reward terms, termination criteria, and domain randomization (Table~\ref{tab:imitation_mlp_hparams}). The Stage~1 expert imitator $\pi_I$ is loaded as a frozen teacher and queried for target actions $\mathbf{a}_t$ on every visited state, while a student latent skill controller---the encoder $q_\phi(\mathbf{z}_t|\mathbf{x}_t,\mathbf{g}_t)$, decoder $p_\theta(\mathbf{a}_t|\mathbf{x}_t,\mathbf{z}_t)$, and proprioception-conditioned motion prior $p_\psi(\mathbf{z}_t|\mathbf{x}_t)$---is rolled out in simulation and trained jointly to match the teacher, using a similar DAgger \cite{ross_reduction_2011} schedule as PULSE \cite{luo_universal_2024}. Episode terminations and rewards are inherited from the underlying environment but are not used as a learning signal: the student is supervised against the teacher's actions, not the imitation reward.

\textbf{Loss.} The student minimizes the composite distillation objective
\begin{equation*}
  \mathcal{L}_{\mathrm{distill}} = \mathcal{L}_{\mathrm{DAgger}} + \lambda \mathcal{L}_{\mathrm{KL}} + \alpha \mathcal{L}_{\mathrm{AR}},
\end{equation*}
where $\mathcal{L}_{\mathrm{KL}}=D_{\mathrm{KL}}(q_\phi||p_\psi)$ regularizes the encoder toward the motion prior and $\mathcal{L}_{\mathrm{AR}}=||\boldsymbol{\mu}_t^e-\boldsymbol{\mu}_{t-1}^e||_2^2$ promotes temporal smoothness in the encoder mean. In practice $\mathcal{L}_{\mathrm{DAgger}}$ is computed on the policy's action-distribution parameters rather than on sampled actions: both the teacher MLP $\pi_I$ and the student decoder $p_\theta$ output the mean and log-standard-deviation of a tanh-squashed diagonal Gaussian over the action, and we take the MSE between these parameter vectors. This reduces to the action-matching form $\mathcal{L}_{\mathrm{DAgger}}=||\mathbf{a}_t-\mathbf{\hat{a}}_t||_2^2$ given in the main text when both policies are evaluated deterministically (i.e., at their distribution means), and is a policy-distribution distillation in the spirit of DAgger \cite{ross_reduction_2011} more generally. The KL coefficient $\lambda$ is annealed linearly from $10^{-3}$ to $10^{-4}$ over the first $25\%$ of training and held fixed thereafter; the AR(1) coefficient is fixed at $\alpha=5\times 10^{-3}$. To prevent posterior collapse and runaway sampling noise, the encoder and prior log-variances are clipped to $[-5, 0]$ (matching PULSE \cite{luo_universal_2024}).

\textbf{Network architecture.} All three networks are MLPs with SiLU activations and LayerNorm between hidden layers (matching the Stage~1 policy). The encoder maps the concatenation of $\mathbf{x}_t$ and $\mathbf{g}_t$ through hidden layers $[2048, 1024, 1024]$ to per-dimension means and log-variances of a $16$-dimensional Gaussian latent $\mathbf{z}_t$; we sample $\mathbf{z}_t$ via the standard reparameterization trick. The decoder maps the concatenation of $\mathbf{z}_t$ and $\mathbf{x}_t$ through $[1024, 1024, 512]$ to an action prediction $\hat{\mathbf{a}}_t$. The proprioception-conditioned motion prior maps $\mathbf{x}_t$ through $[1536, 1024, 512]$ to the mean and log-variance of $p_\psi(\mathbf{z}_t|\mathbf{x}_t)$. Encoder, decoder, and motion prior are randomly initialized using LeCun initialization with zero bias and trained end-to-end; the teacher $\pi_I$ remains frozen throughout.

\textbf{Training.} We collect $4{,}096$ parallel student rollouts in MuJoCo MJX with the same $20$-step unroll length used in Stage~1, train for $10^{9}$ environment steps with batch size $1{,}024$ split across $16$ minibatches and $4$ epochs per batch, and update $(\phi, \theta, \psi)$ jointly with Adam at learning rate $10^{-4}$ and global gradient-norm clipping at $10.0$. Observations are normalized by their running mean and variance, with statistics inherited from the Stage~1 normalizer to ensure consistency with the frozen teacher. Evaluation rollouts are run every $2\times 10^{7}$ environment steps with the deterministic decoder mean. The complete list of distillation hyperparameters is given in Table~\ref{tab:distill_hparams}.

%%%%%%%%%%%%%%%%%%%%%%%%%%%%%%%%%%%%%%%%%%%%%%%%%%%%%%%%%%%%%%%%%%%%%%%%%%%%%%%%%%%%%%%
\subsection{Stage 3: Behavior prior distillation}
\label{app:behavior_prior}

\textbf{Task.} Behavior prior distillation reuses the imitation environment from Stage~1 (App.~\ref{app:imitation_mlp}) for rollout collection, with domain randomization disabled. The Stage~2 latent skill controller---encoder $q_\phi(\mathbf{z}_t|\mathbf{x}_t,\mathbf{g}_t)$, decoder $p_\theta(\mathbf{a}_t|\mathbf{x}_t,\mathbf{z}_t)$, and motion prior $p_\psi(\mathbf{z}_t|\mathbf{x}_t)$---is loaded frozen and used to drive simulation: at each step we feed the encoder's mean $\boldsymbol{\mu}^e_t$ as the latent into the frozen decoder and use the resulting deterministic action mean to advance the environment. The trainable behavior prior $p_\varphi(\mathbf{z}_t|\mathbf{x}_t)$ is optimized only on the proprioceptive states visited along these trajectories; loss evaluation queries the full encoder distribution $(\boldsymbol{\mu}^e_t, \boldsymbol{\sigma}^e_t)$ at each visited state, not a sample.

\textbf{Loss.} With the encoder $q_\phi$ held fixed, the behavior prior is trained by minimizing the expected KL divergence
\begin{equation*}
  \mathcal{L}_{\mathrm{behavior}} = D_{\mathrm{KL}}\big(q_\phi(\mathbf{z}_t|\mathbf{x}_t,\mathbf{g}_t) \,\|\, p_\varphi(\mathbf{z}_t|\mathbf{x}_t)\big),
\end{equation*}
which by Proposition~\ref{prop:behavior_prior} is equivalent to matching the marginal posterior $m_\phi(\mathbf{z}_t|\mathbf{x}_t) = \mathbb{E}_{p(\mathbf{g}_t|\mathbf{x}_t)}[q_\phi(\mathbf{z}_t|\mathbf{g}_t,\mathbf{x}_t)]$. Crucially, the encoder is not modified at this stage; it provides a fixed target distribution, so the behavior prior absorbs the goal-conditional variation in $q_\phi$ rather than collapsing onto local motion continuation as the motion prior does (Section~\ref{sec:priors_analysis}).

\textbf{Network architecture.} The behavior prior is an MLP with SiLU activations and LayerNorm between hidden layers, mirroring the motion prior. It maps $\mathbf{x}_t$ through hidden sizes $[2048, 2048, 1024, 512]$ to the mean and log-variance of a $16$-dimensional Gaussian over $\mathbf{z}_t$, matching the latent dimensionality of the frozen encoder. The behavior prior is the only trainable network in this stage; encoder, decoder, and motion prior all remain frozen. The behavior prior is initialized using LeCun initialization with zero bias.

\textbf{Training.} We collect $4{,}096$ parallel rollouts in MuJoCo MJX with the same $20$-step unroll length used in earlier stages, train for $2\times 10^{9}$ environment steps with batch size $1{,}024$ split across $16$ minibatches and $4$ epochs per batch, and update $\varphi$ with Adam at learning rate $10^{-4}$ and global gradient-norm clipping at $10.0$. Observation statistics are inherited from the Stage~2 normalizer and held fixed, so that $p_\varphi$ sees the same input distribution as the frozen encoder. Evaluation rollouts are run every $2\times 10^{7}$ environment steps using the deterministic decoder mean and a deterministic behavior-prior mean (with stochastic-sampling evaluations reported separately in the experiments). The complete list of behavior-prior hyperparameters is given in Table~\ref{tab:behavior_prior_hparams}.

%%%%%%%%%%%%%%%%%%%%%%%%%%%%%%%%%%%%%%%%%%%%%%%%%%%%%%%%%%%%%%%%%%%%%%%%%%%%%%%%%%%%%%%
\subsection{Stage 4 (transfer learning): Arena foraging}
\label{app:arena_forage}

\textbf{Task.} Stage~4 fine-tunes a task-specific policy on top of the frozen Stage~3 controller. The decoder $p_\theta$, motion prior $p_\psi$, and behavior prior $p_\varphi$ are loaded frozen from the Stage~3 checkpoint (Table~\ref{tab:behavior_prior_hparams}); only the task policy $\pi_T(\boldsymbol{\delta}_t,\beta_t|\mathbf{x}_t,\mathbf{o}_t)$ and its critic are trained. In the arena foraging task, the rat is placed at the center of a flat circular arena (Fig.~\ref{fig:supp-foragerender}) and must navigate to and collect $N=10$ treats (blue spheres of radius $0.06$\,m at height $0.03$\,m). At the start of each episode the rat's initial yaw is sampled uniformly in $[0, 2\pi)$, and the $N$ treats are placed one per angular sector with sub-sector jitter, at uniformly random radii within an annulus of inner radius of variable size depending on the trial and outer radius $r_{\max}=2.0$\,m. Each treat emits an isotropic Gaussian scent field with bandwidth $\sigma=0.2$\,m. In addition to the proprioception vector $\mathbf{x}_t$ from Section~\ref{sec:preliminaries}, the agent observes a task-specific $\mathbf{o}_t$ comprising (i) the scalar total scent at the rat's skull (sum of Gaussians from currently active treats) and (ii) the egocentric gradient $\nabla\!s_t\in\mathbb{R}^3$ of the same scent field, which points toward the scent-weighted treat center.

\textbf{Reward.} The task uses a sparse reward: collecting a treat (skull entering its collision sphere) yields a large, one-time reward and the treat with its scent contribution is removed for the remainder of the episode.

\textbf{Termination.} Each episode runs for up to $2{,}000$ control steps ($20$\,s at the $100$\,Hz control rate inherited from Stage~1), or terminates early once all $N$ treats have been collected.

\textbf{Policy gating.} The task policy outputs a $16$-dimensional residual $\boldsymbol{\delta}_t$ to the motion prior's mean and a scalar gating logit $\beta_t$. The binary gate $b_t$ is thresholded with bias $0.5$ and hysteresis margin $0.2$: switching $b_t$ to $1$ requires $\beta_t > 0.7$, and switching to $0$ requires $\beta_t < 0.3$. When $b_t=1$, the latent is $\mathbf{z}_t = \boldsymbol{\mu}_t^m + \boldsymbol{\delta}_t$ (motion-prior residual control). When $b_t=0$, the latent is sampled stochastically from the behavior prior, $\mathbf{z}_t = \boldsymbol{\mu}_t^b + \epsilon\boldsymbol{\sigma}$ with $\epsilon\!\sim\!\mathcal{N}(0, 1)$ and a fixed log-variance of $-2.25$ (i.e., $\sigma=0.32$), which yields random motions that fall within the range of the natural behavioral diversity of the training data (Fig.~\ref{fig:behavior-diversity}b). The latent is passed through the frozen decoder $p_\theta$ to produce the action torques.

\textbf{Network architecture.} The task policy is an MLP with hidden sizes $[2048, 1024, 512]$ that maps the concatenation of $\mathbf{x}_t$ and $\mathbf{o}_t$ (normalized by running mean and variance) to the mean and log-standard-deviation of a tanh-squashed diagonal Gaussian over the joint $17$-dimensional output $(\boldsymbol{\delta}_t, \beta_t)$, so that $\boldsymbol{\delta}_t \in [-1, 1]^{16}$ and $\beta_t \in [-1, 1]$. The critic is a separate $[2048, 1024, 1024, 512]$ MLP estimating the state-value function. Both networks use SiLU activations with LayerNorm between hidden layers and are randomly initialized using LeCun initialization with zero bias. The decoder, motion prior, and behavior prior all remain frozen throughout this stage.

\textbf{Training.} We train $\pi_T$ with PPO \cite{schulman_proximal_2017} for $2\times 10^{9}$ environment steps across $1{,}024$ parallel environments. Each PPO update uses a $20$-step unroll, batch size $1{,}024$, and $4$ epochs of $16$ minibatches per batch, with Adam at learning rate $10^{-4}$ and gradient clipping at maximum norm $50.0$. We use discount $\gamma = 0.99$, GAE $\lambda = 0.95$, PPO clip range $\varepsilon = 0.3$, entropy coefficient $10^{-2}$, value-loss coefficient $0.5$, and reward scaling $1.0$. Evaluation rollouts are performed every $1.5\times 10^{7}$ environment steps. The complete list of hyperparameters is given in Table~\ref{tab:arena_forage_hparams}.

%%%%%%%%%%%%%%%%%%%%%%%%%%%%%%%%%%%%%%%%%%%%%%%%%%%%%%%%%%%%%%%%%%%%%%%%%%%%%%%%%%%%%%%
\subsection{Locomotion}
\label{app:locomotion}

\textbf{Task.} The locomotion task assesses how effectively the motion prior $p_\psi$ supports goal-directed control via residual fine-tuning, without recourse to the behavior prior. The motion prior $p_\psi$ and decoder $p_\theta$ are loaded frozen from the Stage~2 checkpoint (Table~\ref{tab:distill_hparams}); only the task policy and critic are trained. We do not use the behavior prior in this task: we use $b_t = 1$ at every step, so that the latent is always $\mathbf{z}_t = \boldsymbol{\mu}_t^m + \boldsymbol{\delta}_t$, isolating the quality of residual control of the motion prior. At the start of each episode the rat is placed at the world origin in a neutral standing pose facing the world $+x$ direction (identity body quaternion, $z = 0$), on a flat arena. The task is to maintain a fixed target forward speed $v^\star$ along the world $+x$ axis. In addition to the proprioception vector $\mathbf{x}_t$ from Section~\ref{sec:preliminaries}, the agent receives a task observation $\mathbf{o}_t$ comprising the previous action, and the rat's kinematic and touch sensors. The target speed $v^\star$ itself is a fixed task parameter and is not part of the observation.

\textbf{Reward.} The reward at each step is a piecewise-linear forward-velocity tracking signal,
\begin{equation*}
  r_t = \max\!\Big(0,\; 1 - |v^x_t - v^\star| / v^\star\Big),
\end{equation*}
where $v^x_t$ is the world-frame $+x$ component of the torso's linear velocity (subtree center-of-mass velocity of the torso body). The reward equals $1$ when $v^x_t = v^\star$ and decays linearly to $0$ at $v^x_t = 0$ and at $v^x_t = 2v^\star$.

\textbf{Termination.} An episode terminates early if (i) the rat has \emph{fallen}---defined as torso $z$ below $0.03$\,m or torso tilt from vertical exceeding $60^\circ$---or (ii) any state value becomes \texttt{NaN}. Episodes are otherwise truncated at a fixed length of $1{,}000$ control steps ($10$\,s at the $100$\,Hz control rate inherited from Stage~1).

\textbf{Network architecture.} The task policy is an MLP with hidden sizes $[2048, 1024, 512]$ that maps the concatenation of $\mathbf{x}_t$ and $\mathbf{o}_t$ (normalized by running mean and variance) to the mean and log-standard-deviation of a tanh-squashed diagonal Gaussian over the residual, so that $\boldsymbol{\delta}_t \in [-1, 1]^{16}$. The critic is a separate $[2048, 1024, 1024, 512]$ MLP estimating the state-value function. Both networks use SiLU activations with LayerNorm between hidden layers and are randomly initialized using LeCun initialization with zero bias. The motion prior $p_\psi$ and decoder $p_\theta$ remain frozen throughout this task; the encoder $q_\phi$ is not used.

\textbf{Training.} We train the task policy with PPO \cite{schulman_proximal_2017} for $4 \times 10^{8}$ environment steps across $1{,}024$ parallel environments. Each PPO update uses a $20$-step unroll, batch size $1{,}024$, and $4$ epochs of $16$ minibatches per batch, with Adam at learning rate $10^{-4}$ and gradient clipping at maximum norm $50.0$. We use discount $\gamma = 0.99$, GAE $\lambda = 0.95$, PPO clip range $\varepsilon = 0.3$, entropy coefficient $10^{-2}$, value-loss coefficient $0.5$, and reward scaling $1.0$. Evaluation rollouts are performed every $1.5 \times 10^{7}$ environment steps. The complete list of hyperparameters is given in Table~\ref{tab:maintain_velocity_hparams}.

%%%%%%%%%%%%%%%%%%%%%%%%%%%%%%%%%%%%%%%%%%%%%%%%%%%%%%%%%%%%%%%%%%%%%%%%%%%%%%%%%%%%%%%
\subsection{Rearing}
\label{app:rear}

\textbf{Task.} The walk-then-rear task tests whether residual control of the motion prior can drive a navigate-and-rear sequence terminated by a sparse success bonus. As in the locomotion task, the motion prior $p_\psi$ and decoder $p_\theta$ are loaded frozen from the Stage~2 checkpoint (Table~\ref{tab:distill_hparams}); only the task policy and critic are trained, and we use $b_t=1$ at every step so that the latent is always $\mathbf{z}_t = \boldsymbol{\mu}_t^m + \boldsymbol{\delta}_t$. At the start of each episode the rat is placed at the world origin in a neutral standing pose facing the world $+x$ direction (identity body quaternion, $z=0$), on a flat arena; a translucent target box is rendered at the goal location for visualization. The agent must walk to the horizontal target $(x^\star, y^\star)=(1.0,\, 0.0)$\,m and rear up so that its skull reaches a target height $z^\star = 0.2$\,m. A target pose is achieved when the skull is simultaneously within $0.01$\,m of $(x^\star, y^\star)$ horizontally and within $0.02$\,m of $z^\star$ vertically; success requires holding the target pose continuously for $1.0$\,s ($100$ control steps at $100$\,Hz). In addition to the proprioception vector $\mathbf{x}_t$ from Section~\ref{sec:preliminaries}, the agent receives a task observation $\mathbf{o}_t$ comprising the previous action, the rat's kinematic and touch sensors, the egocentric position of the world origin in the torso frame, and the egocentric position of the target in the torso frame.

\textbf{Reward.} The reward at each step combines a dense piecewise-linear shaping term and a sparse terminal bonus. Letting $d_t = \|\mathbf{p}^{\text{skull}}_t - \mathbf{p}^\star\|_2$ denote the Euclidean distance from the skull to the target, the dense term is
\begin{equation*}
  r^{\text{dist}}_t = \max\!\Big(0,\; 1 - d_t / m\Big),
\end{equation*}
with shaping margin $m = 1.0$\,m, so it equals $1$ when the skull is at the target and decays linearly to $0$ at $d_t = m$. The sparse bonus $r^{\text{succ}}_t = 500$ fires on the single step at which the running hold counter first reaches $100$ steps, and is $0$ otherwise.

\textbf{Termination.} An episode terminates if (i) the hold counter reaches $100$ steps (success), (ii) the rat has \emph{fallen}---defined as torso $z$ below $0.02$\,m or torso tilt from vertical exceeding $80^\circ$---or (iii) any state value becomes \texttt{NaN}. Episodes are otherwise truncated at a fixed length of $500$ control steps ($5$\,s at the $100$\,Hz control rate inherited from Stage~1).

\textbf{Network architecture.} The task policy is an MLP with hidden sizes $[2048, 1024, 512]$ that maps the concatenation of $\mathbf{x}_t$ and $\mathbf{o}_t$ (normalized by running mean and variance) to the mean and log-standard-deviation of a tanh-squashed diagonal Gaussian over the residual, so that $\boldsymbol{\delta}_t \in [-1, 1]^{16}$. The critic is a separate $[2048, 1024, 1024, 512]$ MLP estimating the state-value function. Both networks use SiLU activations with LayerNorm between hidden layers and are randomly initialized using LeCun initialization with zero bias. The motion prior $p_\psi$ and decoder $p_\theta$ remain frozen throughout this task; the encoder $q_\phi$ is not used.

\textbf{Training.} We train the task policy with PPO \cite{schulman_proximal_2017} for $6\times 10^{8}$ environment steps across $1{,}024$ parallel environments. Each PPO update uses a $20$-step unroll, batch size $1{,}024$, and $4$ epochs of $16$ minibatches per batch, with Adam at learning rate $5\times 10^{-5}$ and gradient clipping at maximum norm $50.0$. We use discount $\gamma=0.99$, GAE $\lambda=0.95$, PPO clip range $\varepsilon=0.3$, entropy coefficient $10^{-2}$, value-loss coefficient $0.5$, and reward scaling $1.0$. Evaluation rollouts are performed every $1.5\times 10^{7}$ environment steps. The complete list of hyperparameters is given in Table~\ref{tab:walk_rear_hparams}.

%%%%%%%%%%%%%%%%%%%%%%%%%%%%%%%%%%%%%%%%%%%%%%%%%%%%%%%%%%%%%%%%%%%%%%%%%%%%%%%%%%%%%%%
\FloatBarrier
\subsection{Hyperparameters}
\label{app:hyperparameters}
Our hyperparameter choices draw on prior work where applicable, with task-specific tuning otherwise. PPO hyperparameters follow Brax's defaults \cite{Freeman2021-fb} with minor adjustments to maximize task reward. Physics simulation and virtual rodent parameters are taken from MIMIC-MJX \cite{zhang_mimic-mjx_2025}, and online distillation hyperparameters follow PULSE \cite{luo_universal_2024} with small modifications identified via constrained grid search to better suit the virtual rodent setting. All remaining hyperparameters were selected through grid search.

%%%%%%%%%%%%%%%%%%%%%%%%%%%%%%%%%%%%%%%%%%%%
% Supp Table: Stage 1 hyperparams
%%%%%%%%%%%%%%%%%%%%%%%%%%%%%%%%%%%%%%%%%%%%
\begin{table}[!htbp]
  \centering
  \caption{Hyperparameters used for Stage 1 (train expert imitator) of the SCAMPER training pipeline.}
  \label{tab:imitation_mlp_hparams}
  \scriptsize
  \setlength{\tabcolsep}{3pt}
  \resizebox{0.8\textwidth}{!}{%
  \begin{tabular}{@{} p{1.9cm} l l @{}}
    \toprule
    \textbf{Group} & \textbf{Hyperparameter} & \textbf{Value} \\
    \midrule
    \multirow{13}{=}{Environment}
      & Solver                                   & Newton \\
      & Solver iterations                        & 5 \\
      & Line-search iterations                   & 5 \\
      & Max contacts ($n_\text{aconmax}$)        & 256 \\
      & Max constraint rows ($n_\text{jmax}$)    & 128 \\
      & Simulation timestep                      & $2 \times 10^{-3}$ s (500 Hz) \\
      & Control timestep                         & $1 \times 10^{-2}$ s (100 Hz) \\
      & Mocap rate                               & 50 Hz \\
      & Clip length                              & 250 frames \\
      & Reference horizon                        & 5 frames \\
      & Episode start-frame range                & $[0, 50]$ \\
      & Initial joint velocity                   & zeros \\
      & Train / eval clip split                  & 0.8 / 0.2 \\
    \cmidrule(l){1-3}
    \multirow{7}{=}{Domain randomization}
      & Floor friction                           & $\mathcal{U}(0.4, 1.0)$ \\
      & Static friction scale                    & $\mathcal{U}(0.9, 1.1)$ \\
      & Armature scale                           & $\mathcal{U}(1.0, 1.05)$ \\
      & Center-of-mass jitter (m)                & $\mathcal{U}(-0.05, 0.05)$ \\
      & Link mass scale                          & $\mathcal{U}(0.9, 1.1)$ \\
      & Torso mass jitter                        & $\mathcal{U}(-0.02, 0.02)$ \\
      & Initial $q_\text{pos}$ jitter            & $\mathcal{U}(-0.05, 0.05)$ \\
    \cmidrule(l){1-3}
    \multirow{10}{=}{Reward weights ($w$, scale)}
      & Root position                            & $w{=}1.0$, $\sigma{=}0.035$ \\
      & Root orientation (quaternion)            & $w{=}1.0$, $\sigma{=}20.0$ \\
      & Joint angles                             & $w{=}1.0$, $\sigma{=}1.41$ \\
      & Joint velocities                         & $w{=}0.0$, $\sigma{=}1.0$ \\
      & Body positions                           & $w{=}0.0$, $\sigma{=}0.25$ \\
      & End-effectors                            & $w{=}1.0$, $\sigma{=}0.03$ \\
      & Healthy torso $z$-range (m)              & $[0.0325, 0.5]$, $w{=}1.0$ \\
      & Control cost                             & $w{=}0.02$ \\
      & Control-diff cost                        & $w{=}0.02$ \\
      & Energy cost (clipped at 50)              & $w{=}0.01$ \\
    \cmidrule(l){1-3}
    \multirow{3}{=}{Termination}
      & Root-position deviation                  & $> 0.1$ m \\
      & Root-orientation deviation               & $> 60^\circ$ \\
      & Pose error (L2)                          & $> 4.5$ \\
    \cmidrule(l){1-3}
    \multirow{4}{=}{Network architecture}
      & Policy architecture                      & MLP (no latent bottleneck) \\
      & Policy hidden sizes                      & $[2048, 1024, 1024, 512]$ \\
      & Critic hidden sizes                      & $[2048, 1024, 1024, 1024, 1024, 512]$ \\
      & Observation normalization                & running mean/std \\
    \cmidrule(l){1-3}
    \multirow{15}{=}{PPO training}
      & Parallel environments                    & 4{,}096 \\
      & Total environment steps                  & $3 \times 10^{9}$ \\
      & Batch size                               & 1{,}024 \\
      & Minibatches per epoch                    & 16 \\
      & Updates per batch                        & 4 \\
      & Learning rate (Adam)                     & $1 \times 10^{-4}$ \\
      & Gradient clip threshold                  & 20.0 \\
      & PPO clip range $\varepsilon$             & 0.2 \\
      & Discount factor $\gamma$                 & 0.95 \\
      & GAE / unroll length                      & 20 \\
      & Action repeat                            & 1 \\
      & Entropy coefficient                      & $1 \times 10^{-2}$ \\
      & Value-loss coefficient                   & 0.25 \\
      & Reward scaling                           & 1 \\
      & Evaluation / reset interval              & $2.5 \times 10^{7}$ steps \\
    \bottomrule
  \end{tabular}
  }
\end{table}
%%%%%%%%%%%%%%%%%%%%%%%%%%%%%%%%%%%%%%%%%%%%

%%%%%%%%%%%%%%%%%%%%%%%%%%%%%%%%%%%%%%%%%%%%
% Supp Table: Stage 2 hyperparams
%%%%%%%%%%%%%%%%%%%%%%%%%%%%%%%%%%%%%%%%%%%%
\begin{table}[!htbp]
  \centering
  \caption{Hyperparameters used for Stage 2 (online distillation) of the SCAMPER
    training pipeline. Environment, termination, and domain-randomization
    settings are inherited from Stage 1 (Table~\ref{tab:imitation_mlp_hparams}).}
  \label{tab:distill_hparams}
  \scriptsize
  \setlength{\tabcolsep}{3pt}
  \resizebox{0.75\textwidth}{!}{%
  \begin{tabular}{@{} p{1.9cm} l l @{}}
    \toprule
    \textbf{Group} & \textbf{Hyperparameter} & \textbf{Value} \\
    \midrule
    \multirow{3}{=}{Distillation loss}
      & AR(1) smoothness weight $\alpha$            & $5 \times 10^{-3}$ \\
      & Gradient clip threshold (global norm)       & $10.0$ \\
      & Encoder / prior logvar clamp                & $[-5, 0]$ \\
    \cmidrule(l){1-3}
    \multirow{4}{=}{KL schedule}
      & Schedule type                               & linear ramp, then hold \\
      & Start weight $\lambda_0$                    & $1 \times 10^{-3}$ \\
      & End weight $\lambda_1$                      & $1 \times 10^{-4}$ \\
      & Ramp range (fraction of training)           & $[0\%,\, 25\%]$ \\
    \cmidrule(l){1-3}
    \multirow{2}{=}{Hard-negative mining}
      & Reweight strength $\alpha_{\text{HN}}$      & $0.1$ \\
      & Max sample weight $w_{\text{max}}$          & $5.0$ \\
    \cmidrule(l){1-3}
    \multirow{4}{=}{Network architecture}
      & Latent size                                 & $16$ \\
      & Encoder hidden sizes                        & $[2048, 1024, 1024]$ \\
      & Decoder hidden sizes                        & $[1024, 1024, 512]$ \\
      & Motion prior hidden sizes                   & $[1536, 1024, 512]$ \\
    \cmidrule(l){1-3}
    \multirow{10}{=}{Training}
      & Parallel environments                       & $4{,}096$ \\
      & Total environment steps                     & $1 \times 10^{9}$ \\
      & Batch size                                  & $1{,}024$ \\
      & Minibatches per epoch                       & $16$ \\
      & Updates per batch                           & $4$ \\
      & Learning rate (Adam)                        & $1 \times 10^{-4}$ \\
      & Unroll length                               & $20$ \\
      & Action repeat                               & $1$ \\
      & Train / eval clip split                     & $0.8 / 0.2$ \\
      & Evaluation / reset interval                 & $2 \times 10^{7}$ steps \\
    \bottomrule
  \end{tabular}
  }
\end{table}
%%%%%%%%%%%%%%%%%%%%%%%%%%%%%%%%%%%%%%%%%%%%

%%%%%%%%%%%%%%%%%%%%%%%%%%%%%%%%%%%%%%%%%%%%
% Supp Table: Stage 3 hyperparams
%%%%%%%%%%%%%%%%%%%%%%%%%%%%%%%%%%%%%%%%%%%%
\begin{table}[!htbp]
  \centering
  \caption{Hyperparameters used for Stage 3 (behavior prior distillation) of the
    SCAMPER training pipeline. Environment
    settings are inherited from Stage 1 (Table~\ref{tab:imitation_mlp_hparams});
    the encoder, decoder, and motion prior are loaded frozen from Stage 2
    (Table~\ref{tab:distill_hparams}). Only the behavior prior is updated.}
  \label{tab:behavior_prior_hparams}
  \scriptsize
  \setlength{\tabcolsep}{3pt}
  \resizebox{0.75\textwidth}{!}{%
  \begin{tabular}{@{} p{1.9cm} l l @{}}
    \toprule
    \textbf{Group} & \textbf{Hyperparameter} & \textbf{Value} \\
    \midrule
    \multirow{2}{=}{Distillation loss}
      & KL weight                                   & $1.0$ \\
      & Gradient clip threshold (global norm)       & $10.0$ \\
    \cmidrule(l){1-3}
    \multirow{2}{=}{Network architecture}
      & Latent size (inherited from Stage 2)        & $16$ \\
      & Behavior prior hidden sizes                 & $[2048, 2048, 1024, 512]$ \\
    \cmidrule(l){1-3}
    \multirow{10}{=}{Training}
      & Parallel environments                       & $4{,}096$ \\
      & Total environment steps                     & $2 \times 10^{9}$ \\
      & Batch size                                  & $1{,}024$ \\
      & Minibatches per epoch                       & $16$ \\
      & Updates per batch                           & $4$ \\
      & Learning rate (Adam)                        & $1 \times 10^{-4}$ \\
      & Unroll length                               & $20$ \\
      & Action repeat                               & $1$ \\
      & Train / eval clip split                     & $0.8 / 0.2$ \\
      & Evaluation / reset interval                 & $2 \times 10^{7}$ steps \\
    \bottomrule
  \end{tabular}
  }
\end{table}
%%%%%%%%%%%%%%%%%%%%%%%%%%%%%%%%%%%%%%%%%%%%

%%%%%%%%%%%%%%%%%%%%%%%%%%%%%%%%%%%%%%%%%%%%
% Supp Table: Arena foraging hyperparams
%%%%%%%%%%%%%%%%%%%%%%%%%%%%%%%%%%%%%%%%%%%%
\begin{table}[!htbp]
  \centering
  \caption{Hyperparameters used to train SCAMPER on the arena foraging task
    (Stage~4: transfer learning). The decoder, motion prior, and
    behavior prior are loaded frozen from the Stage~3 checkpoint
    (Table~\ref{tab:behavior_prior_hparams}); only the task policy and
    critic are trained at
    this stage.}
  \label{tab:arena_forage_hparams}
  \scriptsize
  \setlength{\tabcolsep}{3pt}
  \resizebox{0.75\textwidth}{!}{%
  \begin{tabular}{@{} p{1.9cm} l l @{}}
    \toprule
    \textbf{Group} & \textbf{Hyperparameter} & \textbf{Value} \\
    \midrule
    \multirow{9}{=}{Task / environment}
      & Episode length                              & $2{,}000$ control steps \\
      & Evaluation interval                         & $1.5 \times 10^{7}$ env steps \\
      & Number of treats per episode                & $10$ \\
      & Treat sphere radius                         & $0.06$ m \\
      & Treat height                                & $0.03$ m \\
      & Minimum spawn radius                        & Variable \\
      & Maximum spawn radius                        & $2.0$ m \\
      & Scent kernel $\sigma$                       & $0.2$ m \\
      & Treat-collection reward                     & $500.0$ \\
    \cmidrule(l){1-3}
    \multirow{3}{=}{Behavior--motion prior gating}
      & Gate threshold $b_\text{bias}$              & $0.5$ \\
      & Gate hysteresis margin                      & $0.2$ \\
      & Behavior-prior fixed log-variance           & $-2.25$ \\
    \cmidrule(l){1-3}
    \multirow{2}{=}{Network architecture}
      & Task policy hidden sizes                    & $[2048, 1024, 512]$ \\
      & Critic hidden sizes                         & $[2048, 1024, 1024, 512]$ \\
    \cmidrule(l){1-3}
    \multirow{15}{=}{PPO training}
      & Parallel environments                       & $1{,}024$ \\
      & Total environment steps                     & $2 \times 10^{9}$ \\
      & Batch size                                  & $1{,}024$ \\
      & Minibatches per epoch                       & $16$ \\
      & Updates per batch                           & $4$ \\
      & Learning rate (Adam)                        & $1 \times 10^{-4}$ \\
      & Gradient clip threshold (max norm)          & $50.0$ \\
      & PPO clip range $\varepsilon$                & $0.3$ \\
      & Discount factor $\gamma$                    & $0.99$ \\
      & GAE $\lambda$                               & $0.95$ \\
      & Entropy coefficient                         & $1 \times 10^{-2}$ \\
      & Value-loss coefficient                      & $0.5$ \\
      & Reward scaling                              & $1.0$ \\
      & Unroll length                               & $20$ \\
      & Action repeat                               & $1$ \\
    \bottomrule
  \end{tabular}
  }
\end{table}
%%%%%%%%%%%%%%%%%%%%%%%%%%%%%%%%%%%%%%%%%%%%

%%%%%%%%%%%%%%%%%%%%%%%%%%%%%%%%%%%%%%%%%%%%
% Supp Table: Locomotion hyperparams
%%%%%%%%%%%%%%%%%%%%%%%%%%%%%%%%%%%%%%%%%%%%
\begin{table}[!htbp]
  \centering
  \caption{Hyperparameters used to train SCAMPER on the locomotion task. The motion prior
    $p_\psi$ and decoder $p_\theta$ are loaded frozen from the Stage~2
    checkpoint (Table~\ref{tab:distill_hparams}); only the task policy and
    critic are trained at this stage. The policy outputs a residual
    $\boldsymbol{\delta}_t$ added to the (deterministic) motion-prior mean.}
  \label{tab:maintain_velocity_hparams}
  \scriptsize
  \setlength{\tabcolsep}{3pt}
  \resizebox{0.75\textwidth}{!}{%
  \begin{tabular}{@{} p{1.9cm} l l @{}}
    \toprule
    \textbf{Group} & \textbf{Hyperparameter} & \textbf{Value} \\
    \midrule
    \multirow{4}{=}{Task / environment}
      & Episode length                              & $1{,}000$ control steps \\
      & Evaluation interval                         & $1.5 \times 10^{7}$ env steps \\
      & Target forward speed                        & Variable \\
      & Forward-velocity reward weight              & $1.0$ \\
    \cmidrule(l){1-3}
    \multirow{2}{=}{Network architecture}
      & Task policy hidden sizes                    & $[2048, 1024, 512]$ \\
      & Critic hidden sizes                         & $[2048, 1024, 1024, 512]$ \\
    \cmidrule(l){1-3}
    \multirow{15}{=}{PPO training}
      & Parallel environments                       & $1{,}024$ \\
      & Total environment steps                     & $4 \times 10^{8}$ \\
      & Batch size                                  & $1{,}024$ \\
      & Minibatches per epoch                       & $16$ \\
      & Updates per batch                           & $4$ \\
      & Learning rate (Adam)                        & $1 \times 10^{-4}$ \\
      & Gradient clip threshold (max norm)          & $50.0$ \\
      & PPO clip range $\varepsilon$                & $0.3$ \\
      & Discount factor $\gamma$                    & $0.99$ \\
      & GAE $\lambda$                               & $0.95$ \\
      & Entropy coefficient                         & $1 \times 10^{-2}$ \\
      & Value-loss coefficient                      & $0.5$ \\
      & Reward scaling                              & $1.0$ \\
      & Unroll length                               & $20$ \\
      & Action repeat                               & $1$ \\
    \bottomrule
  \end{tabular}
  }
\end{table}
%%%%%%%%%%%%%%%%%%%%%%%%%%%%%%%%%%%%%%%%%%%%

%%%%%%%%%%%%%%%%%%%%%%%%%%%%%%%%%%%%%%%%%%%%
% Supp Table: Walk-rear hyperparams
%%%%%%%%%%%%%%%%%%%%%%%%%%%%%%%%%%%%%%%%%%%%
\begin{table}[!htbp]
  \centering
  \caption{Hyperparameters used to train SCAMPER on the walk-rear task. The
    motion prior $p_\psi$ and decoder $p_\theta$ are loaded frozen from the
    Stage~2 checkpoint (Table~\ref{tab:distill_hparams}); only the task policy
    and critic are trained. The policy outputs a residual $\boldsymbol{\delta}_t$
    added to the (deterministic) motion-prior mean.}
  \label{tab:walk_rear_hparams}
  \scriptsize
  \setlength{\tabcolsep}{3pt}
  \resizebox{0.75\textwidth}{!}{%
  \begin{tabular}{@{} p{1.9cm} l l @{}}
    \toprule
    \textbf{Group} & \textbf{Hyperparameter} & \textbf{Value} \\
    \midrule
    \multirow{9}{=}{Task / environment}
      & Episode length                              & $500$ control steps \\
      & Evaluation interval                         & $1.5 \times 10^{7}$ env steps \\
      & Target position $(x^\star, y^\star, z^\star)$ & $(1.0,\, 0.0,\, 0.2)$ m \\
      & $xy$-tolerance for success                  & $0.01$ m \\
      & $z$-tolerance for success                   & $0.02$ m \\
      & Required hold duration                      & $1.0$ s \\
      & Distance-shaping margin                     & $1.0$ m \\
      & Distance-to-target reward weight            & $1.0$ \\
      & Walk-rear success reward                    & $500.0$ \\
    \cmidrule(l){1-3}
    \multirow{2}{=}{Network architecture}
      & Task policy hidden sizes                    & $[2048, 1024, 512]$ \\
      & Critic hidden sizes                         & $[2048, 1024, 1024, 512]$ \\
    \cmidrule(l){1-3}
    \multirow{15}{=}{PPO training}
      & Parallel environments                       & $1{,}024$ \\
      & Total environment steps                     & $6 \times 10^{8}$ \\
      & Batch size                                  & $1{,}024$ \\
      & Minibatches per epoch                       & $16$ \\
      & Updates per batch                           & $4$ \\
      & Learning rate (Adam)                        & $5 \times 10^{-5}$ \\
      & Gradient clip threshold (max norm)          & $50.0$ \\
      & PPO clip range $\varepsilon$                & $0.3$ \\
      & Discount factor $\gamma$                    & $0.99$ \\
      & GAE $\lambda$                               & $0.95$ \\
      & Entropy coefficient                         & $1 \times 10^{-2}$ \\
      & Value-loss coefficient                      & $0.5$ \\
      & Reward scaling                              & $1.0$ \\
      & Unroll length                               & $20$ \\
      & Action repeat                               & $1$ \\
    \bottomrule
  \end{tabular}
  }
\end{table}
%%%%%%%%%%%%%%%%%%%%%%%%%%%%%%%%%%%%%%%%%%%%

%%%%%%%%%%%%%%%%%%%%%%%%%%%%%%%%%%%%%%%%%%%%%%%%%%%%%%%%%%%%%%%%%%%%%%%%%%%%%%%%%%%%%%%
%%%%%%%%%%%%%%%%%%%%%%%%%%%%%%%%%%%%%%%%%%%%%%%%%%%%%%%%%%%%%%%%%%%%%%%%%%%%%%%%%%%%%%%
\FloatBarrier
\section{Analysis of randomly generated behavior}

%%%%%%%%%%%%%%%%%%%%%%%%%%%%%%%%%%%%%%%%%%%%%%%%%%%%%%%%%%%%%%%%%%%%%%%%%%%%%%%%%%%%%%%
\subsection{Rollout collection protocol}
\label{app:rollout-collection}

All quantitative comparisons in this section (KID, Vendi score, root-height ECDF, spatial occupancy) are computed on rollouts collected with a single shared protocol so that the only difference between conditions is the controller that produced the trajectory.

For every condition, we start from the same set of initial states: the first frame of each of the $842$ reference clips, with the rat placed in the corresponding generalized pose and zero joint velocity. This matches the initialization used during imitation training and keeps the starting distribution of body configurations identical across conditions. Each rollout is then advanced in MuJoCo MJX under the chosen controller for the same horizon as the reference clip ($\sim$$5$\,s of simulated time, with control at $100$\,Hz on top of a $500$\,Hz physics step); the trajectory is driven purely by stochastic sampling of the controllers. At each step we log the rat's generalized coordinates $\mathbf{q}\in\mathbb{R}^{74}$ and the latent vector passed to the decoder. Clips are processed in parallel batches on a single GPU, and one randomization key is split per condition so that stochastic samples are reproducible and independent across conditions. We perform the rollouts using both the behavior prior and the motion prior and compare to the imitation rollouts as the ground truth. We test across both the deterministic sampling condition (no noise; $\sigma=0$) and an array of different sampling noise levels $\text{logvar}=\{-4, -3, -2, -1, -0.5, 0\}$ ($\sigma\approx\{0.14, 0.22, 0.37, 0.61, 0.78, 1.00\}$).

%%%%%%%%%%%%%%%%%%%%%%%%%%%%%%%%%%%%%%%%%%%%%%%%%%%%%%%%%%%%%%%%%%%%%%%%%%%%%%%%%%%%%%%
\FloatBarrier
\subsection{Vendi score}
\label{app:vendi}

\paragraph{What the Vendi score measures.} The Vendi score \citep{friedman_vendi_2023} quantifies the effective number of distinct modes in a sample under a user-specified similarity kernel. It is defined as the exponential of the von Neumann (Shannon) entropy of the eigenvalues of the normalized similarity matrix and, in contrast to a clustering-based count, requires no reference distribution, no labels, and no choice of cluster number. It reduces to the number of items when they are mutually orthogonal and to $1$ when they are all identical. Higher Vendi score indicates greater behavioral diversity.

\paragraph{Egocentric kinematic features.} Unlike for KID, where the feature extractor is a learned VAE, we measure diversity directly in an egocentric kinematic feature space so that the score is interpretable in terms of physically meaningful movement variation rather than the geometry of a learned latent space. From each rollout's generalized coordinates $\mathbf{q}\in\mathbb{R}^{74}$, we compute, per frame, the root height, root roll and pitch, body-frame linear velocity, yaw rate, and the joint angles --- $74$ dims/frame. The body-frame velocities are intrinsically heading-invariant and the planar root translation is dropped, so the representation is invariant to the absolute location of the rat in the arena. Frames are then stacked into sliding windows of length $30$ ($0.6$\,s at the $50$\,Hz control rate) with a stride of $15$, yielding $2{,}220$-dimensional feature vectors per window.

\paragraph{Joint PCA(50) embedding.} To make the Vendi computation tractable and to apply a consistent similarity geometry across conditions, we fit a single PCA basis of dimension $50$. The basis is fit jointly on a stratified subsample of $50{,}000$ windows, drawn proportionally across every condition (each prior at every noise level, plus the training-data baseline) so that no condition dominates the principal directions. All windows are then projected into this shared $50$-dimensional embedding before the Vendi score is computed.

\paragraph{Computing the Vendi score.} Let $X \in \mathbb{R}^{n \times 50}$ be the PCA-embedded windows of a single rollout sample, and let $X_{n}$ denote $X$ with rows $\ell_2$-normalized. We use the cosine similarity kernel and exploit the dual identity $\mathrm{eig}(X_{n} X_{n}^{\top}/n) = \mathrm{eig}(X_{n}^{\top} X_{n}/n)$ (up to zero eigenvalues) to compute Vendi via the small $50 \times 50$ Gram matrix,
\begin{equation*}
  M = \frac{1}{n}\, X_{n}^{\top} X_{n}, \qquad \{\lambda_{i}\}_{i=1}^{50} = \mathrm{eigvalsh}(M),
\end{equation*}
which is exact for the cosine Vendi score and avoids forming the $n \times n$ kernel. The Vendi score is then the exponential of the Shannon entropy of the (positive) eigenvalues,
\begin{equation*}
  \mathrm{VS}(X) = \exp\!\left( -\sum_{i:\, \lambda_{i} > 0} \lambda_{i}\, \log \lambda_{i} \right),
\end{equation*}
with the convention $0 \log 0 = 0$.

\paragraph{Bootstrap over clips.} The independent unit of replication is the imitation \emph{clip} (windows within a clip share a starting condition and are not independent), so we bootstrap at the clip level. There are $842$ clips per condition (motion-prior, behavior-prior, training-data). For each condition we bootstrap by resampling the clip indices with replacement and recompute the Vendi score. We perform $B = 500$ bootstrap iterations. The same clip indices are reused across all conditions in a given iteration, so cross-policy comparisons are exactly paired. We report bootstrap means with 95\% confidence intervals.

%%%%%%%%%%%%%%%%%%%%%%%%%%%%%%%%%%%%%%%%%%%%%%%%%%%%%%%%%%%%%%%%%%%%%%%%%%%%%%%%%%%%%%%
\subsection{Kernel Inception Distance}
\label{app:kid}

\paragraph{What KID measures.} The Kernel Inception Distance (KID) quantifies the dissimilarity between two distributions in a learned feature space as the squared Maximum Mean Discrepancy ($\mathrm{MMD}^2$) under a polynomial kernel \citep{binkowski_demystifying_2021}. In contrast to a metric like Fr\'echet Inception Distance (FID) \cite{heusel_gans_2018}, it is an unbiased estimator and remains well-defined for moderately sized samples. KID is zero when the two distributions match.

\paragraph{Training the feature extractor.} There is no pre-trained ``Inception'' network for rodent kinematics, so we train a small variational autoencoder (VAE) on the real (training-data) joint-angle trajectories and use its encoder mean $\mu_{\phi}(\mathbf{x}) \in \mathbb{R}^{d}$ as the feature extractor. The VAE is optimized by the standard $\beta$-VAE objective
\begin{equation*}
  \mathcal{L}_{\mathrm{VAE}}(\phi, \theta) = \mathbb{E}_{\mathbf{x}}\!\left[ \|\hat{\mathbf{x}}_{\theta}(\mathbf{x}) - \mathbf{x}\|_{2}^{2} + \beta\, D_{\mathrm{KL}}\!\left( q_{\phi}(\mathbf{z}\mid\mathbf{x}) \,\|\, \mathcal{N}(\mathbf{0},\mathbf{I}) \right) \right],
\end{equation*}
with hyperparameters listed in Table~\ref{tab:kid-vae-hparams}. After training, the decoder is discarded and only $\mu_{\phi}$ is retained; we repeat training under several random seeds and report the mean and standard deviation of KID across them.

\paragraph{Trajectory inputs and preprocessing.} Each training sample is a single 5\,s trajectory of the rat's generalized coordinates $\mathbf{q}\in\mathbb{R}^{74}$ --- planar root position $(x,y)$, root height $z$, root quaternion ($4$ dims), and $67$ joint angles --- drawn from the training-data imitation rollouts. We use imitation rollouts as the ground truth as opposed to the original mocap because the alignment process from keypoints to joint angle trajectories (through STAC-MJX \cite{zhang_mimic-mjx_2025}) produces jagged transition frame-to-frame. Imitation of this reference allows the simulated physics to smooth out the trajectories and results in a fairer comparison to the generated motion. Before encoding, three operations are applied so that the comparison between training and generated distributions is invariant to nuisance factors and on a consistent scale: (i) the root quaternion is replaced by a yaw-removed $6$D rotation representation \citep{zhou_continuity_2020}; (ii) the planar root position $(x,y)$ is discarded; (iii) joint dimensions are $z$-score normalized using statistics computed on the real data. Each per-step vector therefore consists of root height, root rotation, and $67$ normalized joint angles for $74$ features per step; the full trajectory is flattened to a single vector before being fed to the encoder. The same preprocessing is applied to every generated rollout.

\paragraph{Computing KID.} Let $X = \{\mu_{i}^{r}\}_{i=1}^{m}$ and $Y = \{\mu_{j}^{f}\}_{j=1}^{m}$ be the encoder features of the real and generated samples. We use the cubic polynomial kernel of \citet{binkowski_demystifying_2021},
\begin{equation*}
  k(\mathbf{u},\mathbf{v}) = \left( \frac{1}{d}\, \mathbf{u}^{\top}\mathbf{v} + 1 \right)^{\!3},
\end{equation*}
and define $\widehat{\mathrm{MMD}}_u^{2}$ as the unbiased squared-MMD estimator of \citet{gretton_kernel_2012},
\begin{equation*}
  \widehat{\mathrm{MMD}}_u^{2}(X, Y) = \frac{1}{m(m-1)} \sum_{i \neq j} k(\mu_{i}^{r}, \mu_{j}^{r}) + \frac{1}{m(m-1)} \sum_{i \neq j} k(\mu_{i}^{f}, \mu_{j}^{f}) - \frac{2}{m^{2}} \sum_{i,j} k(\mu_{i}^{r}, \mu_{j}^{f}).
\end{equation*}
To stabilize the per-seed estimate, we average $\widehat{\mathrm{MMD}}_u^{2}$ over $S$ random subsets of size $m$ drawn independently from $X$ and $Y$:
\begin{equation*}
  \mathrm{KID}(X,Y) = \frac{1}{S} \sum_{s=1}^{S} \widehat{\mathrm{MMD}}_u^{2}\!\left( X^{(s)}, Y^{(s)} \right).
\end{equation*}

%%%%%%%%%%%%%%%%%%%%%%%%%%%%%%%%%%%%%%%%%%%%
% Supp Table: VAE feature extractor hyperparams
%%%%%%%%%%%%%%%%%%%%%%%%%%%%%%%%%%%%%%%%%%%%
\begin{table}[!htbp]
  \centering
  \caption{Hyperparameters for the VAE feature extractor (``inception network'') used to compute KID.}
  \label{tab:kid-vae-hparams}
  \scriptsize
  \setlength{\tabcolsep}{3pt}
  \resizebox{0.7\textwidth}{!}{%
  \begin{tabular}{@{} p{2.4cm} l l @{}}
    \toprule
    \textbf{Group} & \textbf{Hyperparameter} & \textbf{Value} \\
    \midrule
    \multirow{5}{=}{Architecture}
      & Encoder hidden sizes                     & $[512, 256]$ \\
      & Decoder hidden sizes                     & $[256, 512]$ \\
      & Latent dimensionality $d$                & $64$ \\
      & Activation                               & SiLU \\
      & Layer normalization                      & yes \\
    \cmidrule(l){1-3}
    \multirow{2}{=}{Regularization}
      & Dropout rate                             & $0.1$ \\
      & Weight decay (AdamW)                     & $1 \times 10^{-5}$ \\
    \cmidrule(l){1-3}
    \multirow{2}{=}{$\beta$-VAE}
      & Target $\beta$                           & $1.0$ \\
      & $\beta$ warmup (linear)                  & $2{,}500$ epochs \\
    \cmidrule(l){1-3}
    \multirow{6}{=}{Training}
      & Optimizer                                & AdamW \\
      & Learning rate                            & $1 \times 10^{-3}$ \\
      & Gradient clip norm                       & $1.0$ \\
      & Batch size                               & $128$ \\
      & Total epochs                             & $5{,}000$ \\
      & Random seeds                             & $\{0, 1, 2, 3\}$ \\
    \cmidrule(l){1-3}
    \multirow{3}{=}{KID computation}
      & Polynomial kernel degree                 & $3$ \\
      & Number of subsets $S$                    & $100$ \\
      & Subset size $m$                          & $\min(|X|, |Y|, 1{,}000)$ \\
    \bottomrule
  \end{tabular}
  }
\end{table}
%%%%%%%%%%%%%%%%%%%%%%%%%%%%%%%%%%%%%%%%%%%%

%%%%%%%%%%%%%%%%%%%%%%%%%%%%%%%%%%%%%%%%%%%%%%%%%%%%%%%%%%%%%%%%%%%%%%%%%%%%%%%%%%%%%%%
\section{Gait analysis}
\label{app:gait-analysis}

We compare the gaits produced by SCAMPER, MIMIC-MJX, and the from-scratch baseline against empirical rat gait data from \citet{danner_spinal_2023}, at three target speeds spanning walk, trot, and gallop. The same pipeline is applied to every model rollout so that the only difference between conditions is the controller.

\paragraph{Rollouts.} For each (model, target speed) condition we run a fixed-length deterministic rollout of the locomotion task, with the velocity setpoint chosen to elicit the corresponding gait. The simulator state, the foot-body world positions, and the four limb touch-sensor traces (palm and sole on each side) are saved for offline analysis.

\paragraph{Contact barcode.} For each rollout we convert the four raw touch-sensor traces into a binary stance/swing barcode of shape $[4, T]$, with rows ordered $[\text{RH}, \text{RF}, \text{LF}, \text{LH}]$. The pipeline is: (i) per-channel median filtering to suppress sensor noise; (ii) an optional spike-removal step that zeros out brief above-threshold transients on a sensor channel that is otherwise quiescent (so that fleeting contacts never get registered as stance bouts); (iii) per-channel thresholding to a binary stance signal, with the threshold tuned per limb to account for differing typical contact forces; and (iv) binary morphological closing on each channel to fill brief gaps in otherwise-continuous stance bouts. The four cleaned binary traces are then stacked into the barcode used for all subsequent steps.

\paragraph{Gait cycle detection.} Gait cycles are extracted from the barcode using a chosen reference limb. We find every stance onset (a $0\!\to\!1$ transition on the reference channel), discard onsets that are closer together than a minimum cycle length (rejects double-detection of a single foot-strike), and define each gait cycle as the interval between two consecutive onsets that is at least the minimum cycle length. Detection is restricted to a per-condition analysis window so that the rat is at steady-state speed during the cycles being analyzed.

\paragraph{Per-cycle metrics.} For every detected cycle we compute the standard set of behavioral-gait metrics used in rodent locomotion studies \cite{hamers_automated_2001, mendes_quantification_2015}:
\begin{itemize}
  \item \textbf{Stride length.} Euclidean distance between the reference foot's world position at the start and end of the cycle.
  \item \textbf{Mean velocity.} Stride length divided by cycle duration.
  \item \textbf{Cadence.} Stride frequency converted to steps per minute (counting both feet of a contralateral pair).
  \item \textbf{Stance percentage / duty factor.} Fraction of the cycle during which the reference limb is in stance.
\end{itemize}
We then aggregate these metrics across all cycles in the analysis window and report mean and standard deviation per (model, gait) condition.

\paragraph{Swing--stance gait diagrams.} The diagrams in Figure \ref{fig:maintain-vel}c are computed by isolating, on each of the four limb channels, every complete stance bout within the analysis window (partial bouts that overlap the window edges are dropped), then computing per-limb mean stance duration and standard deviation, mean stride period (median inter-onset interval averaged over limbs), and the phase offset of each limb relative to a reference hind limb. Phase offsets are taken as the mean delay between the first stance onset of each limb after a reference-limb onset, modulo the stride period. The resulting per-limb stance bars are tiled across the cycle for visualization.

\paragraph{Empirical reference.} The empirical rat data \citep{danner_spinal_2023} are filtered to intact-spinal-cord animals for each of the three gait categories, and we extract per-stride duty factors for the four limbs, the inter-limb phase relationships (left--right hind, homolateral, and diagonal), and the stride duration. These are converted into the same set of per-limb mean stance, phase offset, mean stride period, and duty-factor summaries used for the model rollouts, providing a directly comparable reference distribution.

\paragraph{Quantitative summary.} For the comparison table, every metric is summarized per (gait, model) as $\mu \pm \sigma$ over cycles. Each model is then scored against the empirical reference as the difference in means normalized by the empirical standard deviation, $|\Delta\mu| / \sigma_{\text{emp}}$, which expresses the discrepancy in units of the natural across-stride variability of real rats. The metrics included are stride duration, step frequency, per-limb stance and swing duration, per-limb duty factor, temporal symmetry (left/right hind duty-factor asymmetry), and per-limb phase offset --- giving a single table that aggregates the full per-limb timing pattern alongside the cycle-level summaries.

%%%%%%%%%%%%%%%%%%%%%%%%%%%%%%%%%%%%%%%%%%%%%%%%%%%%%%%%%%%%%%%%%%%%%%%%%%%%%%%%%%%%%%%
%%%%%%%%%%%%%%%%%%%%%%%%%%%%%%%%%%%%%%%%%%%%%%%%%%%%%%%%%%%%%%%%%%%%%%%%%%%%%%%%%%%%%%%
\FloatBarrier
\section{Supplementary analyses and results}

%%%%%%%%%%%%%%%%%%%%%%%%%%%%%%%%%%%%%%%%%%%%%%%%%%%%%%%%%%%%%%%%%%%%%%%%%%%%%%%%%%%%%%%
\subsection{Empirical tests of Propositions 1 and 2}
\label{app:emp-just}

To evaluate the encoder and the two priors on the states they actually encounter, we rolled out the reference dataset in closed loop through the frozen Stage-2 encoder--decoder and collected \textasciitilde{}200{,}000 proprioceptive states. From these we drew 20{,}000 anchor states. For each anchor we identified the 32 most similar proprioceptive states in a 64-dimensional principal-component space, excluding the anchor's own clip to avoid autocorrelation, and took the goal that followed each of them as a candidate goal. Each anchor's proprioception was then held fixed while its goal was swapped for each of these $K=32$ candidates.

Both propositions rest on the identity $\mathbb{E}\left[D_{\mathrm{KL}}(q_\phi \,\|\, p)\right] = I_\phi(Z;G \mid X) + \mathbb{E}\left[D_{\mathrm{KL}}(m_\phi \,\|\, p)\right]$, where $m_\phi$ is the encoder marginal. The first term depends only on the encoder, so it is identical for both priors and cancels in their difference. The comparison between priors therefore reduces to closed-form Gaussian KL divergences. The behavior prior $p_\varphi$ is 8.46 nats closer to the encoder marginal than the motion prior $p_\psi$ averaged across all individual states (Fig.~\ref{fig:prop-supp}a).

To measure how much of the encoder's behavior at a fixed body state is driven by the goal, we decompose the encoder's latent variance at each anchor with the law of total variance. The between-goal term is the variance of the encoder mean across the 32 candidate goals, $\mathrm{Var}_g[\mu_\phi]$, and the within-goal term is the expected encoder variance, $\mathbb{E}_g[\sigma_\phi^2]$. With proprioception held fixed, 49.7\% of the encoder's latent variance is between-goal (Fig.~\ref{fig:prop-supp}b). The regularization in Proposition 1 therefore limits the conditional mutual information $I_\phi(Z;G \mid X)$ but does not drive it to zero. Driving it to zero would be undesirable, because the behavior prior distilled from the encoder could then no longer produce diverse motion.

Because all 32 candidates at an anchor share the same proprioceptive state, any above-chance decoding of the goal reflects information carried by the latent alone. We draw one candidate uniformly at random, sample a latent from the encoder's distribution for that candidate, score all 32 candidates by the density their encoder Gaussians assign to the sample, and predict the highest-scoring one. Since the candidate prior is uniform and the densities are the encoder's own, this rule is Bayes-optimal and needs no trained probe. It identifies the correct goal 69.4\% of the time, against 3.1\% chance (a 22-fold improvement), and ranks it in the top 5 for 94.6\% of anchors, against 15.6\% chance (Fig.~\ref{fig:prop-supp}c).

Finally, we tested Proposition 1 directly by repeating the fixed-proprioception analysis on all 50 Stage-2 checkpoints, each with its own observation normalizer. We measure goal sensitivity as the mutual information between the latent and which of the $K$ candidate goals was used, which is bounded above by $\log K$. The Stage-2 KL coefficient $\lambda$ ramps linearly down from $10^{-3}$ to $10^{-4}$ over the first 25\% of training and is then held at its floor. Goal sensitivity tracks this schedule. It falls from 1.70 to 1.33 nats while $\lambda$ is near its maximum, which is the bottleneck Proposition 1 describes. It then rises monotonically to 2.52 nats as $\lambda$ decays and stops changing when $\lambda$ reaches its floor, varying by less than 0.03 nats over the remaining 75\% of training (Fig.~\ref{fig:prop-supp}d). Nothing else in the training schedule changes at that point, so the strength of the bottleneck, rather than generic training dynamics, sets how much goal information the latent carries. Proposition 1 should therefore be read as describing a pressure that acts during training rather than a property of the final model: the schedule deliberately relaxes the bottleneck to improve reconstruction, and the converged encoder is goal-rich.

%%%%%%%%%%%%%%%%%%%%%%%%%%%%%%%%%%%%%%%%%%%%
% Supp 1: Empirical tests
% A: KL to encoder marginal
% B: Encoder latent variance
% C: Goal decoding accuracy
% D: Goal sensitivity
%%%%%%%%%%%%%%%%%%%%%%%%%%%%%%%%%%%%%%%%%%%%
\begin{figure}[!htbp]
  \centering
  \includegraphics[width=0.85\textwidth]{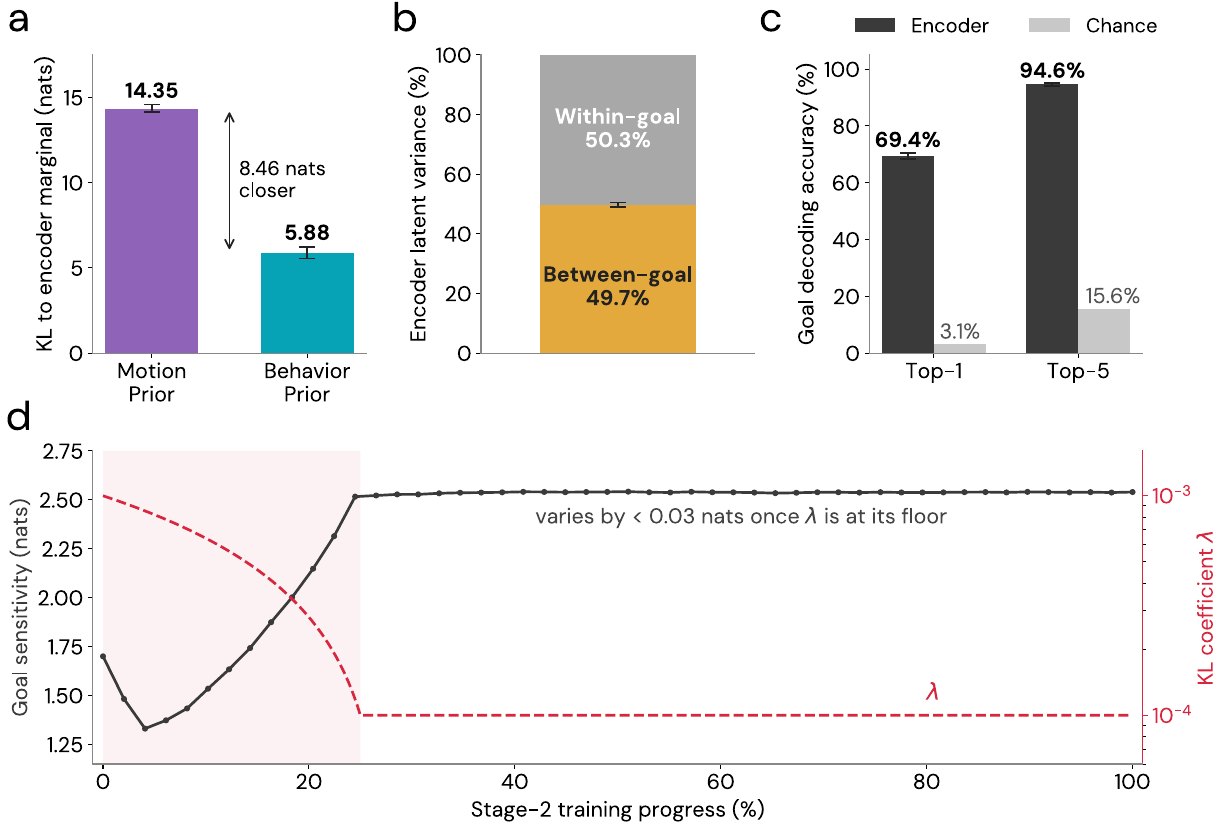}
  \caption{\textbf{(a)} KL divergence to the encoder marginal for the motion and behavior priors.
  \textbf{(b)} Between-goal and within-goal shares of the encoder's latent variance at fixed proprioception. \textbf{(c)} Goal decoding accuracy from the
  latent, compared with chance. \textbf{(d)} Goal sensitivity across Stage-2 training, with the KL coefficient $\lambda$ on the right axis. Error bars are
  95\% clip-level bootstrap CIs.}
  \label{fig:prop-supp}
\end{figure}
%%%%%%%%%%%%%%%%%%%%%%%%%%%%%%%%%%%%%%%%%%%%

%%%%%%%%%%%%%%%%%%%%%%%%%%%%%%%%%%%%%%%%%%%%%%%%%%%%%%%%%%%%%%%%%%%%%%%%%%%%%%%%%%%%%%%
\subsection{Diversity and realism of prior sampling}

In Fig.~\ref{fig:behavior-diversity} we demonstrate that when sampling random motion trajectories, we obtain greater behavioral diversity from the behavior prior than from the motion prior quantified by the resulting Vendi scores. However, the observed difference in diversity could reflect discrepancies in effective perturbation magnitude of the latent rather than qualitatively different distributions. To address this, we compare the behavioral diversity by examining the impact of noise as a function of both the scales and the jitter of the sampled latents. To disambiguate whether the scale of the latents is in fact responsible for the difference in diversity, we measure the Vendi score against the mean per-dimension standard deviation of the sampled latents for both priors across all rollouts at various sampling noise levels (Fig.~\ref{fig:supp-behavior-diversity-control}a). While we do find that the scale of the latents is different for the two priors, when accounting for this along the domain of the measured latent variance, the behavior prior is still more diverse than the motion prior. Secondly, to disambiguate whether the behavior prior's increased diversity is a result of jitter (i.e., how quickly the latents change from timestep to timestep), we measure the Vendi score versus the one-step latent displacement (Fig.~\ref{fig:supp-behavior-diversity-control}b). While we again find that both priors tend towards varying rates of per-step latent displacement for the same sampled noise level, when accounting for this difference, the behavior prior remains more behaviorally diverse than the motion prior over the domain of the latent jitter measured.

%%%%%%%%%%%%%%%%%%%%%%%%%%%%%%%%%%%%%%%%%%%%
% Supp 2: Latent controls
% A: Per-dimension latent s.d.
% B: Latent displacement
%%%%%%%%%%%%%%%%%%%%%%%%%%%%%%%%%%%%%%%%%%%%
\begin{figure}[!htbp]
  \centering
  \includegraphics[width=0.85\textwidth]{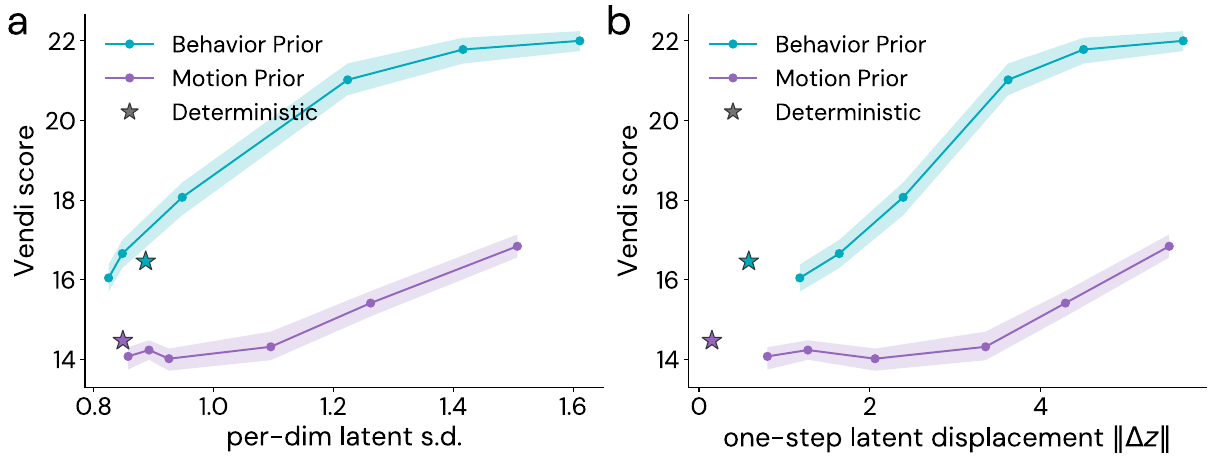}
  \caption{Diversity at matched latent scale. Vendi score of random generations vs.\ the realized per-dimension latent s.d.\ (\textbf{a}) and mean one-step latent displacement (\textbf{b}) at various levels of sampling noise ($\sigma$). Stars: deterministic ($\sigma=0$). 95\% bootstrap CIs shown as shaded regions.}
  \label{fig:supp-behavior-diversity-control}
\end{figure}
%%%%%%%%%%%%%%%%%%%%%%%%%%%%%%%%%%%%%%%%%%%%

%%%%%%%%%%%%%%%%%%%%%%%%%%%%%%%%%%%%%%%%%%%%
% Supp 3: Heatmaps
%%%%%%%%%%%%%%%%%%%%%%%%%%%%%%%%%%%%%%%%%%%%
\begin{figure}[!htbp]
  \centering
  \includegraphics[width=0.85\textwidth]{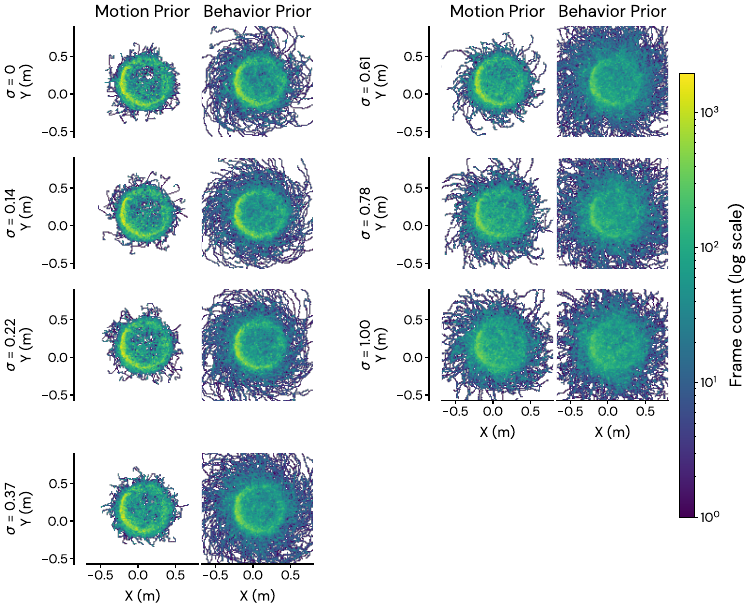}
  \caption{Spatial occupancy of all tested sampling noise levels plotted as 2D histograms of body location (log-scaled frame counts) for the motion prior and behavior prior.}
  \label{fig:supp-heatmaps}
\end{figure}
%%%%%%%%%%%%%%%%%%%%%%%%%%%%%%%%%%%%%%%%%%%%

%%%%%%%%%%%%%%%%%%%%%%%%%%%%%%%%%%%%%%%%%%%%
% Supp 4: ECDFs of rearing
%%%%%%%%%%%%%%%%%%%%%%%%%%%%%%%%%%%%%%%%%%%%
\begin{figure}[!htbp]
  \centering
  \includegraphics[width=0.5\textwidth]{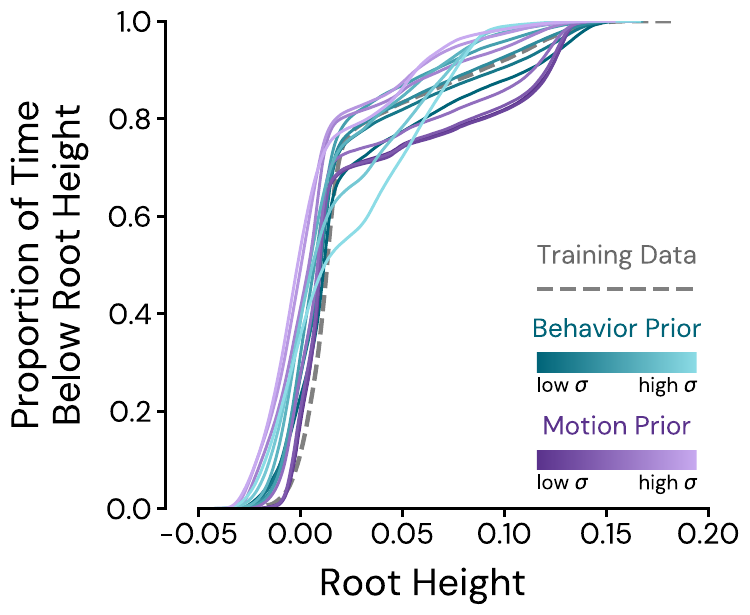}
  \caption{Empirical cumulative distribution function of root height at all tested sampling noise levels compared against the training data reference (\textit{dashed gray}).}
  \label{fig:supp-rearing}
\end{figure}
%%%%%%%%%%%%%%%%%%%%%%%%%%%%%%%%%%%%%%%%%%%%

%%%%%%%%%%%%%%%%%%%%%%%%%%%%%%%%%%%%%%%%%%%%
% Supp Table: Wasserstein distances for ECDF plots
%%%%%%%%%%%%%%%%%%%%%%%%%%%%%%%%%%%%%%%%%%%%
\begin{table}[!htbp]
  \centering
  \caption{Wasserstein distance between empirical root-height distributions of prior rollouts and the training data, across sampling noise levels $\sigma$. Corresponds to results from Figure~\ref{fig:supp-rearing}.}
  \label{tab:wasserstein-rear}
  \small
  \setlength{\tabcolsep}{5pt}
  \begin{tabular}{@{} l ccccccc @{}}
    \toprule
    & \multicolumn{7}{c}{Wasserstein Distance ($\downarrow$)} \\
    \cmidrule(l){2-8}
    Prior Networks
      & $\sigma{=}0$ & $\sigma{=}0.14$ & $\sigma{=}0.22$ & $\sigma{=}0.37$
      & $\sigma{=}0.61$ & $\sigma{=}0.78$ & $\sigma{=}1.00$ \\
    \midrule
    Behavior Prior
      & \textbf{.0085} & \textbf{.0054} & \textbf{.0045} & \textbf{.0092}
      & \textbf{.0106} & \textbf{.0129} & \textbf{.0162} \\
    Motion Prior
      & .0116 & .0121 & .0115 & .0099
      & .0126 & .0171 & .0173 \\
    \bottomrule
  \end{tabular}
\end{table}
%%%%%%%%%%%%%%%%%%%%%%%%%%%%%%%%%%%%%%%%%%%%

%%%%%%%%%%%%%%%%%%%%%%%%%%%%%%%%%%%%%%%%%%%%%%%%%%%%%%%%%%%%%%%%%%%%%%%%%%%%%%%%%%%%%%%
\FloatBarrier
\subsection{Foraging task}

\subsubsection{Arena foraging supplemental results}

%%%%%%%%%%%%%%%%%%%%%%%%%%%%%%%%%%%%%%%%%%%%
% Supp 5: Arena forage rendering
%%%%%%%%%%%%%%%%%%%%%%%%%%%%%%%%%%%%%%%%%%%%
\begin{figure}[!htbp]
  \centering
  \includegraphics[width=0.75\textwidth]{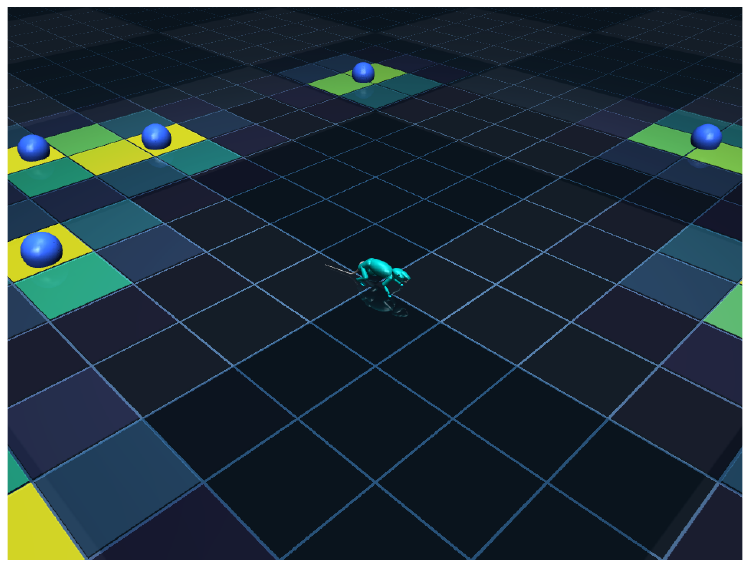}
  \caption{Rendering of the arena foraging task.}
  \label{fig:supp-foragerender}
\end{figure}
%%%%%%%%%%%%%%%%%%%%%%%%%%%%%%%%%%%%%%%%%%%%

%%%%%%%%%%%%%%%%%%%%%%%%%%%%%%%%%%%%%%%%%%%%
% Supp 6: Arena forage trajectories
%%%%%%%%%%%%%%%%%%%%%%%%%%%%%%%%%%%%%%%%%%%%
\begin{figure}[!htbp]
  \centering
  \includegraphics[width=0.9\textwidth]{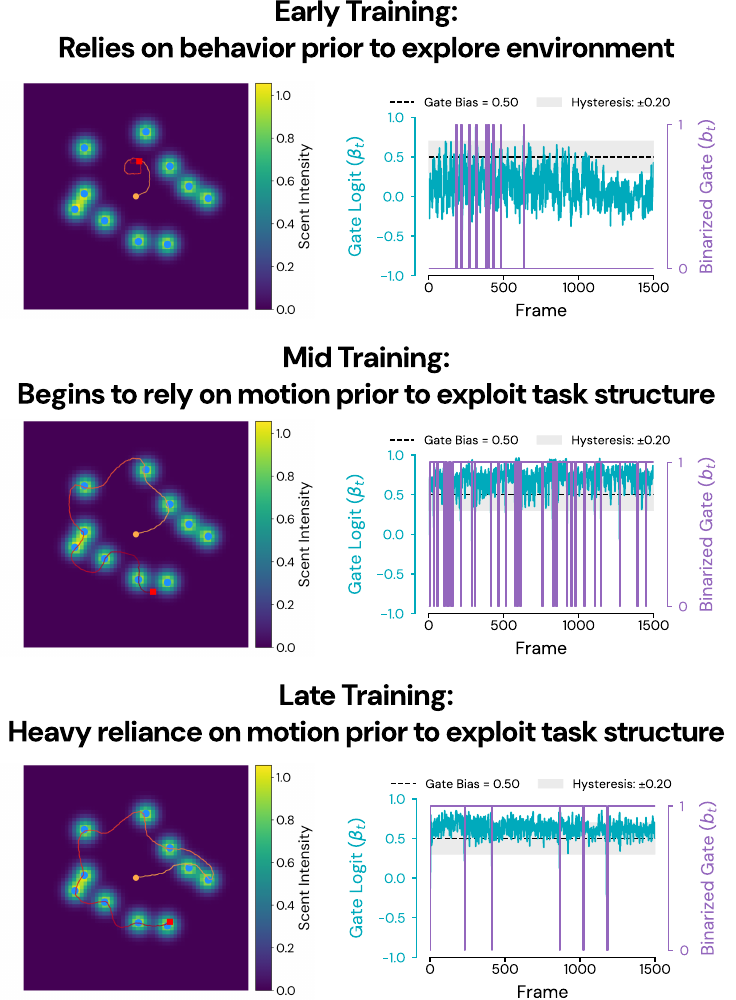}
  \caption{SCAMPER's training progression on the arena foraging task. Top-down view of trajectories the rodent takes along with the evolution of the gate logit ($\beta_t$) and its resulting binarization ($b_t$). Here when a treat is collected it is still displayed in the arena for visualization purposes, but in practice it disappear upon successful collection. $b_t=0$ results in stochastic sampling of the behavior prior. $b_t=1$ results in residual control of the motion prior. During early training the rodent relies heavily on the behavior prior to explore its environment. During mid training, the rodent begins to gate rapidly between both priors as it begins to exploit the reward signals from the collection of treats. During late training the rodent mostly uses residual control of the motion prior to effectively exploit the task structure.}
  \label{fig:supp-foragetraj}
\end{figure}
%%%%%%%%%%%%%%%%%%%%%%%%%%%%%%%%%%%%%%%%%%%%

\subsubsection{Alternatives to the adaptive binary gate}
\label{app:gate_ablation}

The adaptive binary gate is one way to decide when the behavior prior or the motion prior controls the latent. To test whether the specific mechanism matters, we replaced it in the foraging task with two alternatives that need no learned gate: a fixed exploration schedule and a fixed linear interpolation between the two priors. In both variants the policy outputs only the residual, and the motion prior, behavior prior and decoder stay frozen, as with the adaptive gate.

\paragraph{Fixed exploration schedule.} The latent is constructed exactly as with the adaptive gate, $\mathbf{z}_t = g_t\,(\boldsymbol{\mu}^{\mathrm{MP}}_t + \boldsymbol{\delta}_t) + (1 - g_t)\,\mathbf{z}^{\mathrm{BP}}_t$, where $\boldsymbol{\mu}^{\mathrm{MP}}_t$ is the motion prior mean, $\boldsymbol{\delta}_t$ is the policy's residual and $\mathbf{z}^{\mathrm{BP}}_t$ is a behavior prior sample. However, the binary gate $g_t$ is no longer emitted by the policy. Instead, it switches stochastically, and the probability $p(g_t = 1)$ of motion-prior control follows a fixed schedule over training. It is held at $0$ for an initial warmup fraction of training, ramped to $1$ over a subsequent ramp fraction, and held at $1$ thereafter. We vary when this switch happens.

\paragraph{Linear interpolation.} Instead of selecting one prior at each step, the two priors are blended with a constant weight $\rho \in [0, 1]$ and the residual is added on top: $\mathbf{z}_t = \rho\,\boldsymbol{\mu}^{\mathrm{MP}}_t + (1 - \rho)\,\mathbf{z}^{\mathrm{BP}}_t + \mathbf{r}_t$. Here $\rho$ is a fixed hyperparameter rather than a learned quantity. The endpoints $\rho = 1$ and $\rho = 0$ reduce to residual control of the motion prior alone and of the behavior prior alone, respectively.

\paragraph{Results.} Table~\ref{tab:gate_ablation} reports the mean number of treats collected in the hardest foraging condition (minimum treat radius of $1.5$\,m). Both alternatives can succeed, but only within a narrow range of settings. The fixed schedule trains successfully when control passes from the behavior prior to the motion prior early in training (warmup $0.05$, ramp $0.25$, so that the motion prior has full control by $30\%$ of training). It fails when the switch happens in the middle or later phases of training, and training also fails if the switch never happens. Likewise, interpolation succeeds only when the blend is weighted towards the behavior prior ($\rho = 0.25$) and fails for $\rho = 0.5$ and $\rho = 0.75$.

These results show that what matters is the right mix of the behavior and motion priors over training. The adaptive binary gate is one means of achieving that mix, not the only one. Its advantage over the alternatives is that the user does not need to hand-tune a schedule or blend weight: the gate only needs to be biased towards the behavior prior at the start of training, and the transition to the motion prior then emerges from learning (Figure~\ref{fig:supp-foragetraj}).

%%%%%%%%%%%%%%%%%%%%%%%%%%%%%%%%%%%%%%%%%%%%
% Supp Table: Alternatives to gating
%%%%%%%%%%%%%%%%%%%%%%%%%%%%%%%%%%%%%%%%%%%%
\begin{table}[h]
\centering
\caption{Alternatives to the adaptive binary gate in the foraging task. Values are the mean number of treats collected in the hardest condition (minimum treat radius $1.5$\,m). Schedule fractions are of total training steps. $\alpha$ is the weight on the motion prior.}
\label{tab:gate_ablation}
\begin{tabular}{llc}
\toprule
Variant & Setting & Mean treats collected \\
\midrule
\multirow{3}{*}{Fixed schedule (warmup / ramp)} & $0.05$ / $0.25$ & $9.94\pm0.03$ \\
 & $0.10$ / $0.60$ & $0.00\pm0.00$ \\
 & $0.25$ / $0.65$ & $0.00\pm0.00$ \\
\midrule
\multirow{3}{*}{Linear interpolation ($\rho$)} & $0.25$ & $9.91\pm0.04$ \\
 & $0.50$ & $0.00\pm0.00$ \\
 & $0.75$ & $0.00\pm0.00$ \\
\bottomrule
\end{tabular}
\end{table}
%%%%%%%%%%%%%%%%%%%%%%%%%%%%%%%%%%%%%%%%%%%%

%%%%%%%%%%%%%%%%%%%%%%%%%%%%%%%%%%%%%%%%%%%%%%%%%%%%%%%%%%%%%%%%%%%%%%%%%%%%%%%%%%%%%%%
\FloatBarrier
\subsection{Locomotion task}

Here we present the full gait analysis results for the empirical rat gait data \cite{danner_spinal_2023}, the main SCAMPER model which residually controls the motion prior (MP), a control which residually controls the behavior prior (BP), MIMIC-MJX, and the from-scratch trained model. Notably, when performing residual control using the behavior prior, galloping is never discovered and the lower speed gaits also lack resemblance to the empirical data both qualitatively (Fig.~\ref{fig:supp-gait}) and quantitatively (Table~\ref{tab:gait-full}).

%%%%%%%%%%%%%%%%%%%%%%%%%%%%%%%%%%%%%%%%%%%%
% Supp 7: Full gait plots
%%%%%%%%%%%%%%%%%%%%%%%%%%%%%%%%%%%%%%%%%%%%
\begin{figure}[!htbp]
  \centering
  \includegraphics[width=0.9\textwidth]{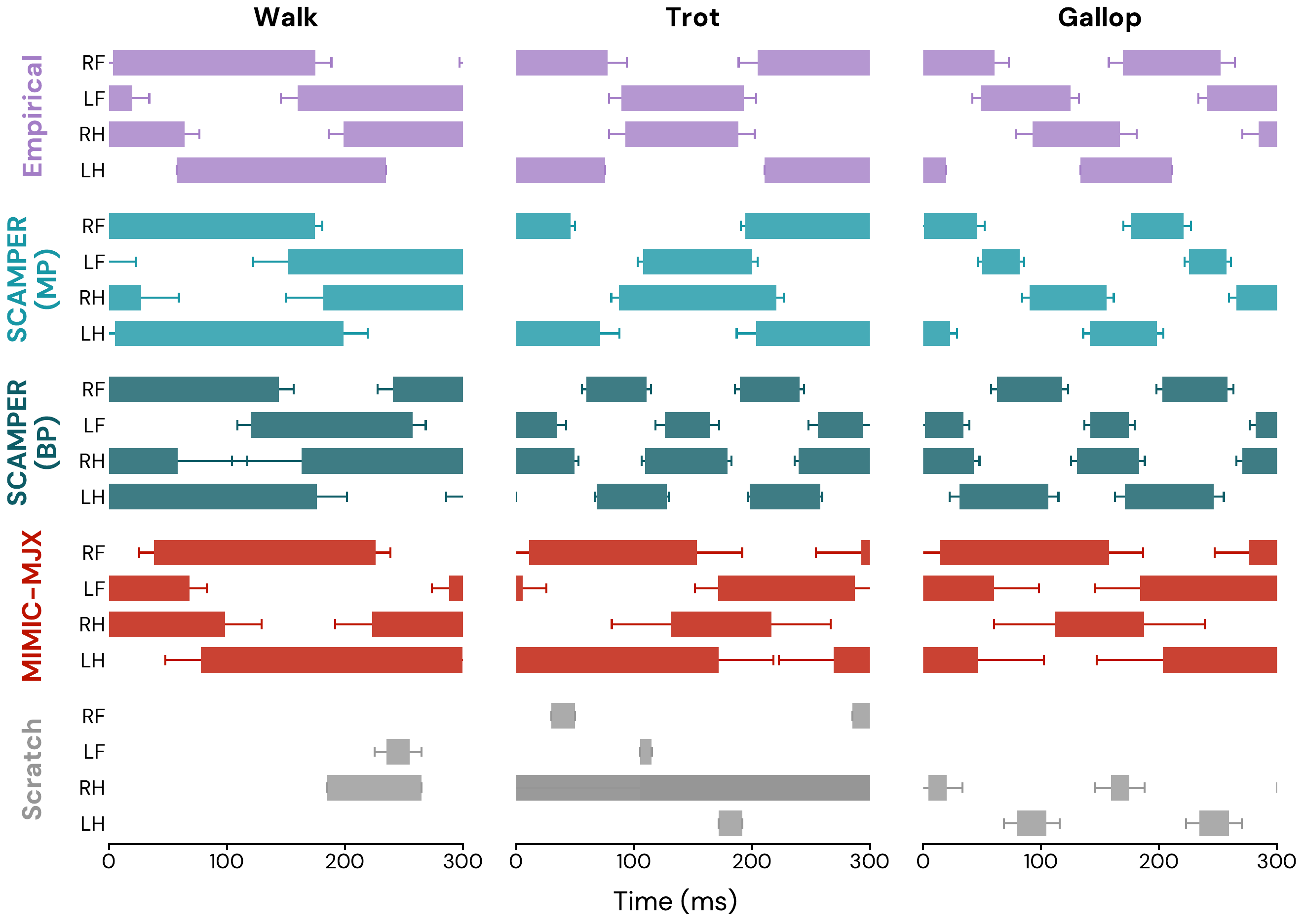}
  \caption{Stance-phase plots for each limb (RF, LF, RH, LH) across walk, trot, and gallop gaits, comparing empirical data (\textit{purple}; \cite{danner_spinal_2023}) to the base SCAMPER model (MP, \textit{blue}), the behavior-prior-controlled SCAMPER model (BP, \textit{dark blue}), MIMIC-MJX (\textit{red}), and the from-scratch model (\textit{gray}); bars show stance duration, error bars show variability across strides.}
  \label{fig:supp-gait}
\end{figure}
%%%%%%%%%%%%%%%%%%%%%%%%%%%%%%%%%%%%%%%%%%%%

%%%%%%%%%%%%%%%%%%%%%%%%%%%%%%%%%%%%%%%%%%%%
% Supp Table: Full gait analysis
%%%%%%%%%%%%%%%%%%%%%%%%%%%%%%%%%%%%%%%%%%%%
\begin{table}[!htbp]
\centering
\caption{Full gait metric comparison to empirical rat data. Empirical, SCAMPER (MP),
  SCAMPER (BP), MIMIC-MJX, and Scratch columns show mean $\pm$ std.
  $|\Delta\mu|/\sigma_{\text{emp}}$ is the absolute difference in means normalized by
  empirical standard deviation.}
\label{tab:gait-full}
\scriptsize
\setlength{\tabcolsep}{3pt}
\resizebox{\textwidth}{!}{%
\begin{tabular}{@{} l l r@{\,}c@{\,}l r@{\,}c@{\,}l r@{\,}c@{\,}l r@{\,}c@{\,}l
r@{\,}c@{\,}l cccc @{}}
  \toprule
  & & \multicolumn{3}{c}{Empirical} & \multicolumn{3}{c}{SCAMPER (MP)} &
\multicolumn{3}{c}{SCAMPER (BP)} & \multicolumn{3}{c}{MIMIC-MJX} &
\multicolumn{3}{c}{Scratch} & \multicolumn{4}{c}{$|\Delta\mu|/\sigma_{\text{emp}}$
($\downarrow$)} \\
  \cmidrule(lr){3-5} \cmidrule(lr){6-8} \cmidrule(lr){9-11} \cmidrule(lr){12-14}
\cmidrule(lr){15-17} \cmidrule(l){18-21}
  Gait & Metric & \multicolumn{3}{c}{} & \multicolumn{3}{c}{} & \multicolumn{3}{c}{} &
\multicolumn{3}{c}{} & \multicolumn{3}{c}{} & SCAMPER (MP) & SCAMPER (BP) & MIMIC-MJX &
Scratch \\
  \midrule
  %% ---- WALK ----
  \multirow{21}{*}{\rotatebox[origin=c]{90}{Walk}}
  & Stride Duration (ms) & 308 &$\pm$& 91 & 332 &$\pm$& 16 & 313 &$\pm$& 34 & 374 &$\pm$&
23 & 255 &$\pm$& 293 & 0.27 & \textbf{0.06} & 0.73 & 0.58 \\
  & Step Frequency (Hz) & 3.46 &$\pm$& 0.75 & 3.02 &$\pm$& 0.15 & 3.22 &$\pm$& 0.33 & 2.69
&$\pm$& 0.26 & 4.61 &$\pm$& 2.62 & 0.58 & \textbf{0.31} & 1.02 & 1.53 \\
  & Stance LH (ms) & 182 &$\pm$& 82 & 191 &$\pm$& 21 & 178 &$\pm$& 26 & 221 &$\pm$& 36 &
223 &$\pm$& 295 & 0.11 & \textbf{0.05} & 0.48 & 0.50 \\
  & Stance RH (ms) & 175 &$\pm$& 66 & 183 &$\pm$& 9 & 208 &$\pm$& 59 & 247 &$\pm$& 31 & 80
&$\pm$& 0 & \textbf{0.12} & 0.51 & 1.09 & 1.45 \\
  & Stance LF (ms) & 171 &$\pm$& 71 & 172 &$\pm$& 32 & 137 &$\pm$& 11 & 153 &$\pm$& 15 &
20 &$\pm$& 10 & \textbf{0.02} & 0.48 & 0.25 & 2.12 \\
  & Stance RF (ms) & 174 &$\pm$& 69 & 183 &$\pm$& 6 & 217 &$\pm$& 13 & 187 &$\pm$& 16 & 0
&$\pm$& 0 & \textbf{0.13} & 0.62 & 0.19 & 2.53 \\
  & Swing LH (ms) & 125 &$\pm$& 20 & 144 &$\pm$& 5 & 128 &$\pm$& 7 & 152 &$\pm$& 10 & 150
&$\pm$& 45 & 0.96 & \textbf{0.11} & 1.33 & 1.23 \\
  & Swing RH (ms) & 133 &$\pm$& 31 & 148 &$\pm$& 5 & 113 &$\pm$& 21 & 123 &$\pm$& 8 & 175
&$\pm$& 0 & 0.51 & 0.65 & \textbf{0.29} & 1.37 \\
  & Swing LF (ms) & 136 &$\pm$& 25 & 152 &$\pm$& 7 & 174 &$\pm$& 9 & 219 &$\pm$& 22 & 120
&$\pm$& 0 & \textbf{0.63} & 1.52 & 3.33 & 0.66 \\
  & Swing RF (ms) & 133 &$\pm$& 30 & 149 &$\pm$& 3 & 94 &$\pm$& 5 & 184 &$\pm$& 7 & 255
&$\pm$& 0 & \textbf{0.50} & 1.30 & 1.67 & 3.99 \\
  & Duty Factor LH & .576 &$\pm$& .074 & .576 &$\pm$& .064 & .568 &$\pm$& .082 & .593
&$\pm$& .095 & .877 &$\pm$& 1.157 & \textbf{0.00} & 0.11 & 0.23 & 4.08 \\
  & Duty Factor RH & .562 &$\pm$& .055 & .550 &$\pm$& .027 & .665 &$\pm$& .188 & .660
&$\pm$& .083 & .314 &$\pm$& .000 & \textbf{0.21} & 1.87 & 1.77 & 4.47 \\
  & Duty Factor LF & .544 &$\pm$& .059 & .519 &$\pm$& .098 & .438 &$\pm$& .036 & .409
&$\pm$& .039 & .079 &$\pm$& .039 & \textbf{0.44} & 1.82 & 2.31 & 7.96 \\
  & Duty Factor RF & .558 &$\pm$& .055 & .552 &$\pm$& .020 & .691 &$\pm$& .041 & .502
&$\pm$& .042 & .000 &$\pm$& .000 & \textbf{0.11} & 2.42 & 1.03 & 10.14 \\
  & Temporal Symmetry & .011 &$\pm$& .068 & .016 &$\pm$& .024 & $-$.050 &$\pm$& .067 &
$-$.049 &$\pm$& .074 & .000 &$\pm$& .000 & \textbf{0.08} & 0.89 & 0.88 & 0.16 \\
  & Phase RH rel.\ LH & .459 &$\pm$& .041 & .513 &$\pm$& .113 & .539 &$\pm$& .104 & .368
&$\pm$& .094 & .432 &$\pm$& .000 & 1.32 & 1.96 & 2.25 & \textbf{0.67} \\
  & Phase LF rel.\ LH & .333 &$\pm$& .047 & .459 &$\pm$& .094 & .428 &$\pm$& .062 & .516
&$\pm$& .102 & .334 &$\pm$& .295 & 2.69 & 2.04 & 3.93 & \textbf{0.02} \\
  & Phase RF rel.\ LH & .824 &$\pm$& .045 & .906 &$\pm$& .165 & .755 &$\pm$& .077 & .835
&$\pm$& .206 & .000 &$\pm$& .000 & 1.86 & 1.53 & \textbf{0.26} & 18.48 \\
  \cmidrule(l){2-21}
  & Mean (Coordination) & \multicolumn{3}{c}{} & \multicolumn{3}{c}{} &
\multicolumn{3}{c}{} & \multicolumn{3}{c}{} & \multicolumn{3}{c}{} & \textbf{1.49} & 1.60 &
1.83 & 4.83 \\
  & Mean (Gait Pattern) & \multicolumn{3}{c}{} & \multicolumn{3}{c}{} &
\multicolumn{3}{c}{} & \multicolumn{3}{c}{} & \multicolumn{3}{c}{} & \textbf{0.95} & 1.68 &
1.68 & 6.55 \\
  & Mean (All) & \multicolumn{3}{c}{} & \multicolumn{3}{c}{} & \multicolumn{3}{c}{} &
\multicolumn{3}{c}{} & \multicolumn{3}{c}{} & \textbf{0.59} & 1.01 & 1.28 & 3.44 \\
  \midrule
  %% ---- TROT ----
  \multirow{21}{*}{\rotatebox[origin=c]{90}{Trot}}
  & Stride Duration (ms) & 230 &$\pm$& 31 & 267 &$\pm$& 10 & 130 &$\pm$& 4 & 285 &$\pm$&
92 & 318 &$\pm$& 391 & \textbf{1.19} & 3.24 & 1.76 & 2.84 \\
  & Step Frequency (Hz) & 4.42 &$\pm$& 0.56 & 3.75 &$\pm$& 0.22 & 7.71 &$\pm$& 0.26 & 4.74
&$\pm$& 2.81 & 4.41 &$\pm$& 2.52 & 1.18 & 5.84 & 0.57 & \textbf{0.01} \\
  & Stance LH (ms) & 97 &$\pm$& 27 & 135 &$\pm$& 16 & 60 &$\pm$& 2 & 181 &$\pm$& 48 & 20
&$\pm$& 0 & 1.46 & \textbf{1.38} & 3.18 & 2.88 \\
  & Stance RH (ms) & 97 &$\pm$& 25 & 134 &$\pm$& 6 & 70 &$\pm$& 3 & 87 &$\pm$& 50 & 1251
&$\pm$& 2321 & 1.49 & 1.10 & \textbf{0.40} & 46.48 \\
  & Stance LF (ms) & 104 &$\pm$& 23 & 93 &$\pm$& 5 & 38 &$\pm$& 8 & 116 &$\pm$& 18 & 10
&$\pm$& 0 & 0.51 & 2.93 & \textbf{0.50} & 4.16 \\
  & Stance RF (ms) & 104 &$\pm$& 24 & 118 &$\pm$& 5 & 51 &$\pm$& 4 & 155 &$\pm$& 27 & 20
&$\pm$& 0 & \textbf{0.60} & 2.25 & 2.15 & 3.56 \\
  & Swing LH (ms) & 134 &$\pm$& 14 & 129 &$\pm$& 3 & 70 &$\pm$& 3 & 135 &$\pm$& 26 & 110
&$\pm$& 10 & 0.33 & 4.64 & \textbf{0.06} & 1.72 \\
  & Swing RH (ms) & 133 &$\pm$& 15 & 135 &$\pm$& 5 & 60 &$\pm$& 3 & 80 &$\pm$& 14 & 190
&$\pm$& 43 & \textbf{0.10} & 4.98 & 3.67 & 3.88 \\
  & Swing LF (ms) & 126 &$\pm$& 16 & 175 &$\pm$& 5 & 92 &$\pm$& 9 & 211 &$\pm$& 22 & 308
&$\pm$& 0 & 3.13 & \textbf{2.18} & 5.44 & 11.69 \\
  & Swing RF (ms) & 126 &$\pm$& 16 & 150 &$\pm$& 4 & 79 &$\pm$& 4 & 173 &$\pm$& 7 & 298
&$\pm$& 0 & \textbf{1.47} & 2.93 & 2.89 & 10.62 \\
  & Duty Factor LH & .413 &$\pm$& .065 & .506 &$\pm$& .061 & .460 &$\pm$& .012 & .636
&$\pm$& .169 & .063 &$\pm$& .000 & 1.43 & \textbf{0.72} & 3.43 & 5.42 \\
  & Duty Factor RH & .416 &$\pm$& .060 & .502 &$\pm$& .024 & .537 &$\pm$& .025 & .306
&$\pm$& .176 & 3.931 &$\pm$& 7.291 & \textbf{1.42} & 2.02 & 1.85 & 58.61 \\
  & Duty Factor LF & .450 &$\pm$& .053 & .348 &$\pm$& .017 & .292 &$\pm$& .061 & .407
&$\pm$& .064 & .031 &$\pm$& .000 & 1.95 & 3.01 & \textbf{0.83} & 7.96 \\
  & Duty Factor RF & .448 &$\pm$& .055 & .442 &$\pm$& .017 & .391 &$\pm$& .030 & .543
&$\pm$& .094 & .063 &$\pm$& .000 & \textbf{0.11} & 1.02 & 1.71 & 6.95 \\
  & Temporal Symmetry & $-$.004 &$\pm$& .058 & .008 &$\pm$& .068 & $-$.075 &$\pm$& .017 &
.172 &$\pm$& .233 & $-$.434 &$\pm$& .162 & \textbf{0.21} & 1.23 & 3.06 & 7.46 \\
  & Phase RH rel.\ LH & .488 &$\pm$& .060 & .543 &$\pm$& .092 & .308 &$\pm$& .045 & .474
&$\pm$& .210 & .401 &$\pm$& .347 & 0.91 & 2.98 & \textbf{0.24} & 1.45 \\
  & Phase LF rel.\ LH & .473 &$\pm$& .046 & .620 &$\pm$& .075 & .406 &$\pm$& .044 & .699
&$\pm$& .121 & .000 &$\pm$& .000 & 3.21 & \textbf{1.47} & 4.94 & 10.35 \\
  & Phase RF rel.\ LH & .975 &$\pm$& .070 & .921 &$\pm$& .173 & .900 &$\pm$& .142 & .108
&$\pm$& .070 & .126 &$\pm$& .000 & \textbf{0.77} & 1.07 & 12.36 & 12.11 \\
  \cmidrule(l){2-21}
  & Mean (Coordination) & \multicolumn{3}{c}{} & \multicolumn{3}{c}{} &
\multicolumn{3}{c}{} & \multicolumn{3}{c}{} & \multicolumn{3}{c}{} & \textbf{1.27} & 1.69 &
5.15 & 7.84 \\
  & Mean (Gait Pattern) & \multicolumn{3}{c}{} & \multicolumn{3}{c}{} &
\multicolumn{3}{c}{} & \multicolumn{3}{c}{} & \multicolumn{3}{c}{} & \textbf{1.40} & 1.76 &
3.62 & 14.69 \\
  & Mean (All) & \multicolumn{3}{c}{} & \multicolumn{3}{c}{} & \multicolumn{3}{c}{} &
\multicolumn{3}{c}{} & \multicolumn{3}{c}{} & \textbf{1.19} & 2.50 & 2.72 & 11.01 \\
  \midrule
  %% ---- GALLOP ----
  \multirow{21}{*}{\rotatebox[origin=c]{90}{Gallop}}
  & Stride Duration (ms) & 192 &$\pm$& 19 & 175 &$\pm$& 6 & 140 &$\pm$& 5 & 255 &$\pm$& 91
& 172 &$\pm$& 100 & \textbf{0.85} & 2.67 & 3.32 & 1.01 \\
  & Step Frequency (Hz) & 5.27 &$\pm$& 0.53 & 5.71 &$\pm$& 0.23 & 7.15 &$\pm$& 0.25 & 5.22
&$\pm$& 2.54 & 7.95 &$\pm$& 3.11 & 0.83 & 3.52 & \textbf{0.09} & 5.04 \\
  & Stance LH (ms) & 78 &$\pm$& 12 & 56 &$\pm$& 6 & 75 &$\pm$& 9 & 88 &$\pm$& 53 & 15
&$\pm$& 8 & 1.83 & \textbf{0.22} & 0.81 & 5.32 \\
  & Stance RH (ms) & 74 &$\pm$& 13 & 66 &$\pm$& 6 & 53 &$\pm$& 5 & 75 &$\pm$& 50 & 16
&$\pm$& 17 & 0.62 & 1.61 & \textbf{0.02} & 4.33 \\
  & Stance LF (ms) & 76 &$\pm$& 11 & 31 &$\pm$& 4 & 32 &$\pm$& 5 & 138 &$\pm$& 40 & 0
&$\pm$& 0 & 4.00 & \textbf{3.92} & 5.49 & 6.80 \\
  & Stance RF (ms) & 84 &$\pm$& 16 & 47 &$\pm$& 6 & 55 &$\pm$& 5 & 141 &$\pm$& 28 & 0
&$\pm$& 0 & 2.30 & \textbf{1.76} & 3.57 & 5.20 \\
  & Swing LH (ms) & 114 &$\pm$& 13 & 119 &$\pm$& 3 & 65 &$\pm$& 8 & 115 &$\pm$& 46 & 172
&$\pm$& 88 & 0.45 & 3.76 & \textbf{0.12} & 4.52 \\
  & Swing RH (ms) & 117 &$\pm$& 10 & 108 &$\pm$& 6 & 87 &$\pm$& 5 & 93 &$\pm$& 13 & 140
&$\pm$& 103 & \textbf{0.86} & 2.93 & 2.40 & 2.24 \\
  & Swing LF (ms) & 115 &$\pm$& 11 & 144 &$\pm$& 5 & 108 &$\pm$& 5 & 189 &$\pm$& 30 & 172
&$\pm$& 0 & 2.55 & \textbf{0.66} & 6.53 & 5.07 \\
  & Swing RF (ms) & 108 &$\pm$& 9 & 128 &$\pm$& 6 & 85 &$\pm$& 5 & 187 &$\pm$& 9 & 172
&$\pm$& 0 & \textbf{2.16} & 2.45 & 8.42 & 6.82 \\
  & Duty Factor LH & .406 &$\pm$& .042 & .321 &$\pm$& .033 & .538 &$\pm$& .061 & .343
&$\pm$& .207 & .087 &$\pm$& .044 & 2.04 & 3.14 & \textbf{1.52} & 7.63 \\
  & Duty Factor RH & .386 &$\pm$& .040 & .377 &$\pm$& .037 & .377 &$\pm$& .034 & .292
&$\pm$& .197 & .094 &$\pm$& .098 & \textbf{0.23} & 0.24 & 2.37 & 7.36 \\
  & Duty Factor LF & .398 &$\pm$& .032 & .179 &$\pm$& .025 & .230 &$\pm$& .037 & .540
&$\pm$& .157 & .000 &$\pm$& .000 & 6.75 & 5.17 & \textbf{4.41} & 12.28 \\
  & Duty Factor RF & .433 &$\pm$& .048 & .266 &$\pm$& .036 & .394 &$\pm$& .036 & .552
&$\pm$& .110 & .000 &$\pm$& .000 & 3.47 & \textbf{0.82} & 2.46 & 8.99 \\
  & Temporal Symmetry & .025 &$\pm$& .059 & $-$.084 &$\pm$& .060 & .173 &$\pm$& .062 &
.086 &$\pm$& .311 & $-$.095 &$\pm$& .254 & 1.83 & 2.49 & \textbf{1.02} & 2.03 \\
  & Phase RH rel.\ LH & .789 &$\pm$& .073 & .692 &$\pm$& .084 & .697 &$\pm$& .102 & .446
&$\pm$& .183 & .540 &$\pm$& .298 & 1.33 & \textbf{1.26} & 4.73 & 3.44 \\
  & Phase LF rel.\ LH & .560 &$\pm$& .037 & .486 &$\pm$& .034 & .790 &$\pm$& .047 & .495
&$\pm$& .211 & .000 &$\pm$& .000 & 1.99 & 6.26 & \textbf{1.76} & 15.20 \\
  & Phase RF rel.\ LH & .188 &$\pm$& .062 & .202 &$\pm$& .044 & .227 &$\pm$& .044 & .168
&$\pm$& .037 & .000 &$\pm$& .000 & \textbf{0.23} & 0.64 & 0.32 & 3.05 \\
  \cmidrule(l){2-21}
  & Mean (Coordination) & \multicolumn{3}{c}{} & \multicolumn{3}{c}{} &
\multicolumn{3}{c}{} & \multicolumn{3}{c}{} & \multicolumn{3}{c}{} & \textbf{1.35} & 2.66 &
1.96 & 5.93 \\
  & Mean (Gait Pattern) & \multicolumn{3}{c}{} & \multicolumn{3}{c}{} &
\multicolumn{3}{c}{} & \multicolumn{3}{c}{} & \multicolumn{3}{c}{} & \textbf{2.29} & 2.50 &
2.51 & 8.28 \\
  & Mean (All) & \multicolumn{3}{c}{} & \multicolumn{3}{c}{} & \multicolumn{3}{c}{} &
\multicolumn{3}{c}{} & \multicolumn{3}{c}{} & \textbf{1.91} & 2.42 & 2.74 & 5.91 \\
  \bottomrule
\end{tabular}%
}
\end{table}
%%%%%%%%%%%%%%%%%%%%%%%%%%%%%%%%%%%%%%%%%%%%

\end{document}